\pgfplotsset{compat=1.5}
\newenvironment{proof}{\noindent{\bf Proof : \ }}{\hfill$\Box$\par\medskip}
\newtheorem{theorem}{Theorem}[section]
\newtheorem{corollary}[theorem]{Corollary}
\newtheorem{lemma}[theorem]{Lemma}
\newtheorem{definition}[theorem]{Definition}
\newtheorem{example}[theorem]{Example}
\newenvironment{proofof}[1]{\begin{trivlist} \item {\bf Proof
#1:~~}}
  {\qed\end{trivlist}}
\newcommand{\namedref}[2]{\hyperref[#2]{#1~\ref*{#2}}}
\newcommand{\thmlab}[1]{\label{thm:#1}}
\newcommand{\thmref}[1]{\namedref{Theorem}{thm:#1}}
\newcommand{\lemlab}[1]{\label{lem:#1}}
\newcommand{\lemref}[1]{\namedref{Lemma}{lem:#1}}
\newcommand{\corlab}[1]{\label{cor:#1}}
\newcommand{\seclab}[1]{\label{sec:#1}}
\newcommand{\secref}[1]{\namedref{Section}{sec:#1}}
\newcommand{\figlab}[1]{\label{fig:#1}}
\newcommand{\figref}[1]{\namedref{Figure}{fig:#1}}
\newcommand{\alglab}[1]{\label{alg:#1}}
\renewcommand{\algref}[1]{\namedref{Algorithm}{alg:#1}}
\newcommand{\exlab}[1]{\label{ex:#1}}
\newcommand{\exref}[1]{\namedref{Example}{ex:#1}}
\def \opt    {\mdef{\mathsf{opt}}}
\def \countsketch    {\mdef{\textsc{CountSketch}}}
\def \countsketchg    {\mdef{\textsc{GenCountSketch}}}
\def \estimator    {\mdef{\textsc{Estimator}}}
\def \sampler    {\mdef{\textsc{Sampler}}}
\def \sens    {\mdef{\textsc{Sens}}}
\def \hsens    {\mdef{\textsc{HSens}}}
\def \hestimator    {\mdef{\textsc{HEstimator}}}
\def \hsampler    {\mdef{\textsc{HSampler}}}
\def \oracle    {\mdef{\mathbb{O}}}
\def \a    {\mdef{\mathbf{a}}}
\def \A    {\mdef{\mathbf{A}}}
\def \B    {\mdef{\mathbf{B}}}
\def \H    {\mdef{\mathbf{H}}}
\def \M    {\mdef{\mathbf{M}}}
\def \calL    {\mdef{\mathcal{L}}}
\def \V    {\mdef{\mathbf{V}}}
\def \U    {\mdef{\mathbf{U}}}
\def \b    {\mdef{\mathbf{b}}}
\def \u    {\mdef{\mathbf{u}}}
\def \w    {\mdef{\mathbf{w}}}
\def \v    {\mdef{\mathbf{v}}}
\def \x    {\mdef{\mathbf{x}}}
\def \y    {\mdef{\mathbf{y}}}
\def \g    {\mdef{\mathbf{g}}}
\newcommand\norm[1]{\left\lVert#1\right\rVert}
\newcommand{\PPr}[1]{\ensuremath{\mathbf{Pr}\left[#1\right]}}
\newcommand{\Ex}[1]{\ensuremath{\mathbb{E}\left[#1\right]}}
\renewcommand{\O}[1]{\ensuremath{\mathcal{O}\left(#1\right)}}
\newcommand{\tO}[1]{\ensuremath{\tilde{\mathcal{O}}\left(#1\right)}}
\newcommand{\eps}{\varepsilon}
\DeclareMathOperator*{\var}{Var}
\DeclareMathOperator*{\nnz}{nnz}
\DeclareMathOperator*{\sgn}{sgn}
\DeclareMathOperator*{\tail}{tail}
\newcommand{\mdef}[1]{{\ensuremath{#1}}\xspace}  
\DeclareMathOperator*{\argmin}{argmin}
\DeclareMathOperator*{\polylog}{polylog}
\DeclareMathOperator*{\poly}{poly}
\newcommand{\ignore}[1]{}
\newif\ifnotes\notestrue 
\newcommand{\samson}[1]{\textcolor{purple}{{\bf (Samson:} {#1}{\bf ) }} \marginpar{\tiny\bf
             \begin{minipage}[t]{0.5in}
               \raggedright S:
            \end{minipage}}}            							
\newcommand{\samson}[1]{}
\renewcommand*{\@fnsymbol}[1]{\textcolor{blue}{\ensuremath{\ifcase#1\or *\or \dagger\or \ddagger\or
 \mathsection\or \triangledown\or \mathparagraph\or \|\or **\or \dagger\dagger
   \or \ddagger\ddagger \else\@ctrerr\fi}}}
\providecommand{\email}[1]{\href{mailto:#1}{\nolinkurl{#1}\xspace}}
\definecolor{darkpastelgreen}{rgb}{0.01, 0.75, 0.24}
\title{Adaptive Sketches for Robust Regression with Importance Sampling}
\author{
Sepideh Mahabadi\thanks{Microsoft Research, Redmond. 
E-mail: \email{mahabadi@ttic.edu}}
\and
David P. Woodruff\thanks{School of Computer Science, Carnegie Mellon University.
E-mail: \email{dwoodruf@cs.cmu.edu}}
\and
Samson Zhou\thanks{School of Computer Science, Carnegie Mellon University.  
E-mail: \email{samsonzhou@gmail.com}}
}
\date{\today}
\begin{document}

\maketitle

\begin{abstract}
We introduce data structures for solving robust regression through stochastic gradient descent (SGD) by sampling gradients with probability proportional to their norm, i.e., importance sampling. Although SGD is widely used for large scale machine learning, it is well-known for possibly experiencing slow convergence rates due to the high variance from uniform sampling. On the other hand, importance sampling can significantly decrease the variance but is usually difficult to implement because computing the sampling probabilities requires additional passes over the data, in which case standard gradient descent (GD) could be used instead. In this paper, we introduce an algorithm that approximately samples $T$ gradients of dimension $d$ from nearly the optimal importance sampling distribution for a robust regression problem over $n$ rows. Thus our algorithm effectively runs $T$ steps of SGD with importance sampling while using sublinear space and just making a single pass over the data. Our techniques also extend to performing importance sampling for second-order optimization. 
\end{abstract}

\section{Introduction}
Given a matrix $\A\in\mathbb{R}^{n\times d}$ with rows $\a_1,\ldots,\a_n\in\mathbb{R}^d$ and a measurement/label vector $\b\in\mathbb{R}^n$, we consider the standard regression problem 
\[\underset{\x\in\mathbb{R}^d}{\min} \calL(\x):=\sum_{i=1}^n M(\langle\a_i,\x\rangle-b_i),\]
where $M:\mathbb{R}\to\mathbb{R}^{\ge 0}$ is a function, called a measure function, that satisfies $M(x)=M(-x)$ and is non-decreasing in $|x|$. 
An \emph{$M$-estimator} is a solution to this minimization problem and for appropriate choices of $M$, can combine the low variance of $L_2$ regression with the robustness of $L_1$ regression against outliers. 

The Huber norm, for example, is defined using the measure function $H(x)=\frac{x^2}{2\tau}$ for $|x|\le\tau$ and $H(x)=|x|-\frac{\tau}{2}$ for $|x|>\tau$, where $\tau$ is a threshold that governs the interpolation between $L_2$ loss for small $|x|$ and $L_1$ loss for large $|x|$. 
Indeed, it can often be more reasonable to have robust treatment of large residuals due to outliers or errors and Gaussian treatment of small residuals~\cite{Guitton99}. 
Thus the Huber norm is especially popular and ``recommended for almost all situations''~\cite{Zhang97}, because it is the ``most robust''~\cite{Huber92} due to ``the useful computational and statistical properties implied by the convexity and smoothness''~\cite{ClarksonW15} of its measure function, which is differentiable at all points. 

Since the measure function $H$ for the Huber norm and more generally, the measure function $M$ for many common measure functions is convex, we can consider the standard convex \emph{finite-sum form} optimization problem $\underset{\x\in\mathbb{R}^d}{\min} F(\x):=\frac{1}{n}\sum_{i=1}^n f_i(\x)$, where $f_1,\ldots,f_n:\mathbb{R}^d\to\mathbb{R}$ is a sequence of convex functions that commonly represent loss functions. 
Whereas gradient descent (GD) performs the update rule $\x_{t+1}=\x_t-\eta_t \nabla F(\x_t)$ on the iterate $\x_t$ at iterations $t=1,2,\ldots,T$, stochastic gradient descent (SGD)~\cite{RobbinsM51,NemirovskyY83,NemirovskiJLS09} picks $i_t\in[n]$ in iteration $t$ with probability $p_{i_t}$ and performs the update rule $\x_{t+1}=\x_t-\frac{\eta_t}{np_{i_t}}\nabla f_{i_t} (\x_t)$, where $\nabla f_{i_t}$ is the gradient (or a subgradient) of $f_{i_t}$ and $\eta_t$ is some predetermined learning rate. 
Effectively, training example $i_t$ is sampled with probability $p_{i_t}$ and the model parameters are updated using the selected example. 
The SGD update rule only requires the computation of a single gradient at each iteration and provides an unbiased estimator to the full gradient, compared to GD, which evaluates $n$ individual gradients in each iteration and is prohibitively expensive for large $n$. 
However, since SGD is often performed with uniform sampling, so that the probability $p_{i,t}$\footnote{In contrast to $p_{i,t}$, the term $p_{i_t}$ denotes the probability associated with the specific index $i_t$ chosen at time $t$.} of choosing index $i\in[n]$ at iteration $t$ is $p_{i,t}=\frac{1}{n}$ at all times, the variance introduced by the randomness of sampling a specific vector function can be a bottleneck for the convergence rate of this iterative process. 
Thus, the subject of variance reduction beyond uniform sampling has been well-studied in recent years~\cite{RouxSB12,JohnsonZ13,DefazioBL14,ReddiHSPS15,ZhaoZ15,DaneshmandLH16,NeedellSW16,StichRJ17,JohnsonG18,KatharopoulosF18,SalehiTC18,QianRGSLS19}. 

A common technique to reduce variance is importance sampling, where the probabilities $p_{i,t}$ are chosen so that vector functions with larger gradients are more likely to be sampled. 
One such setting of importance sampling is to set the probability of sampling a gradient with probability proportional to its $L_2$ norm, so that \[p_{i,t}=\frac{\norm{\nabla f_i(\x_t)}_2}{\sum_{j\in[t]}\norm{\nabla f_j(\x_t)}_2}.\]
Under these sampling probabilities, importance sampling gives variance 
\[\sigma_{opt,t}^2=\frac{1}{n^2}\left(\left(\sum_{i=1}^n\norm{\nabla f_i(\x_t)}_2\right)^2-n^2\cdot\norm{\nabla F(\x_t)}_2^2\right),\]
where we define the variance of a random vector $\v$ to be $\var(\v):=\Ex{\norm{\v}^2_2}-\norm{\Ex{\v}}_2^2$, and we define $\sigma_{opt,t}^2$ to be the variance of the random vector $\v$ produced at time $t$ by importance sampling. 

By comparison, the probabilities for uniform sampling $p_{i,t}=\frac{1}{n}$ imply $\sigma_t^2=\var\left(\frac{1}{np_{i_t,t}}\right)$ and thus the variance $\sigma_{uni,t}^2$ for uniform sampling satisfies
\[\sigma_{uni,t}^2=\frac{1}{n^2}\left(n\sum_{i=1}^n\norm{\nabla f_i(\x_t)}_2^2-n^2\cdot\norm{\nabla F(\x_t)}_2^2\right).\]
By the root mean square-arithmetic mean inequality, the variance of importance sampling is always at most the variance of uniform sampling, and can be significantly less.  
Hence $\sigma_{opt,t}^2\le\sigma_{uni,t}^2$, so that the variance at each step is reduced, possibly substantially, by performing importance sampling instead of uniform sampling. 

To see examples where uniform sampling an index performs significantly worse than importance sampling, consider $\nabla f_i(\x)=\langle\a_i,\x\rangle\cdot\a_i$. Then for $\A=\a_1\circ\ldots\circ\a_n$:
\begin{example}
\exlab{ex:mult:n}
When the non-zero entries of the input $\A$ are concentrated in a small number of vectors $\a_i$, uniform sampling will frequently sample gradients that are small and make little progress, whereas importance sampling will rarely do so. 
In an extreme case, the input $\A$ can contain exactly one non-zero vector $\a_i$ and importance sampling will always output the full gradient, whereas uniform sampling will only find the non-zero row with probability $\frac{1}{n}$, so that $\sigma_{uni,t}^2=n\cdot\sigma_{opt,t}^2$. 
\end{example}
\begin{example}
\exlab{ex:add:n}
It may be that all rows of $\A$ have large magnitude, but $\x$ is nearly orthogonal to most of the rows of $\A$, but is well-aligned with row $\a_r$.  
Then $\langle\a_i,\x\rangle\cdot\a_i$ is small in magnitude for most $i$, but $\langle\a_r,\x\rangle\cdot\a_r$ is large so uniform sampling will often output small gradients while importance sampling will output $\langle\a_r,\x\rangle\cdot\a_r$ with high probability, so that it can be that $\sigma_{uni,t}^2=\Omega(n)\cdot\sigma_{opt,t}^2$. 
\end{example}
\begin{example}
\exlab{ex:general}
More generally for a parameter $\nu\in[0,1]$, if a $\nu$-fraction of the $n$ gradients lengths are bounded by $\O{n}$ while the other $1-\nu$ fraction of the $n$ gradient lengths are bounded by $\poly(d)\ll n$, then the variance for uniform sampling satisfies $\sigma_{uni,t}^2=\O{\nu n^2}+\poly(d)$ while the variance for importance sampling satisfies $\O{\nu^2 n^2}+\poly(d)$. 
\end{example}

In fact, it follows from the Cauchy-Schwarz inequality that the importance sampling probability distribution is the \emph{optimal} distribution for variance reduction. 

However, computing the probability distribution for importance sampling requires computing the gradients in each round, which creates a ``chicken and egg'' problem because computing the gradients is too expensive in the first place, or else it is feasible to just run gradient descent. 
Unfortunately, computing the sampling probabilities in each iteration often requires additional passes over the data, e.g., to compute the gradients in each step, which is generally prohibitively expensive. 
This problem often prevents importance sampling from being widely deployed. 

In this paper, we overcome this problem by introducing efficient sketches for a wide range of $M$-estimators that can enable importance sampling \emph{without} additional passes over the data. 
Using our sketches for various measure functions, we give a time-efficient algorithm that \emph{provably} approximates the optimal importance sampling distribution within a constant factor.  
Thus we can surprisingly simulate $T$ steps of SGD with (nearly) the optimal sampling distribution, while only using a single pass over the data, which avoids the aforementioned problem.

\begin{restatable}{theorem}{thmsgdmain}
\thmlab{thm:sgd:main}
Given an input matrix $\A\in\mathbb{R}^{n\times d}$ whose rows arrive sequentially in a data stream along with the corresponding labels of a measurement vector $\b\in\mathbb{R}^d$, and a measure function $M$ whose derivative is a continuous union of piecewise constant or linear functions, there exists an algorithm that performs $T$ steps of SGD with variance within a constant factor of the optimal sampling distribution. 
The algorithm uses $\tO{nd^2+Td^2}$ pre-processing time and $Td^2\,\polylog(Tnd)$ words of space. 
\end{restatable}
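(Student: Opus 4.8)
The plan is to reduce the theorem to the construction of a single streaming data structure, call it a \emph{sampling oracle} $\oracle$, built in one pass over the rows, which answers the following query for an arbitrary $\x\in\mathbb{R}^d$: it returns an index $i\in[n]$ drawn from a distribution $\tilde p_\cdot(\x)$ with $\tilde p_i(\x)=\Theta(1)\cdot\norm{\nabla f_i(\x)}_2\big/\sum_{j}\norm{\nabla f_j(\x)}_2$, together with a constant-factor estimate of $\tilde p_i(\x)$ and a good approximation to the row $\a_i$ itself (needed to form the update). Given $\oracle$, simulating $T$ steps of SGD is immediate: at step $t$ we query $\oracle$ at the current iterate $\x_t$, obtain $(i_t,\tilde p_{i_t},\a_{i_t})$, and apply $\x_{t+1}=\x_t-\frac{\eta_t}{n\tilde p_{i_t}}\nabla f_{i_t}(\x_t)$. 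Since each $\tilde p_i(\x_t)$ is within a constant factor of the optimal importance-sampling probability $p^{\mathrm{opt}}_{i,t}$, the second moment $\Ex{\norm{\v}_2^2}=\frac{1}{n^2}\sum_i\norm{\nabla f_i(\x_t)}_2^2/\tilde p_{i}$ of the stochastic gradient degrades by at most that factor relative to the optimal distribution while $\Ex{\v}=\nabla F(\x_t)$ is unchanged, so the variance at every step is within a constant factor of $\sigma_{opt,t}^2$. Thus the entire burden is constructing $\oracle$, and the running time and space are dominated by $T$ queries plus the size of $\oracle$.

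\textbf{Exploiting the structure of $M'$.}
For regression $\nabla f_i(\x)=M'(\langle\a_i,\x\rangle-b_i)\,\a_i$, so $\norm{\nabla f_i(\x)}_2=\abs{M'(r_i(\x))}\cdot\norm{\a_i}_2$ with $r_i(\x):=\langle\a_i,\x\rangle-b_i$. Writing the fixed breakpoints of $M'$ as $\tau_0<\cdots<\tau_k$ (a constant number, absorbed into the $\tO{\cdot}$), on the interval $I_j=(\tau_j,\tau_{j+1})$ we have $M'(r)=\alpha_j r+\beta_j$, with $\alpha_j=0$ on a constant piece. Hence
\[
\sum_{i=1}^n\norm{\nabla f_i(\x)}_2 \;=\; \sum_{j}\ \sum_{i\,:\,r_i(\x)\in I_j}\abs{\alpha_j r_i(\x)+\beta_j}\cdot\norm{\a_i}_2 ,
\]
so it suffices to build, for each of the $\O{1}$ pieces, a sub-oracle that estimates its partial sum $S_j(\x)$ and samples $i$ proportional to $\abs{\alpha_j r_i(\x)+\beta_j}\norm{\a_i}_2$ over rows with $r_i(\x)\in I_j$; the master oracle estimates all $S_j(\x)$, picks a piece with probability $\propto S_j(\x)$, and defers. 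On a constant piece this is weighted sampling of rows with weights $\norm{\a_i}_2$, which does not depend on $\x$ and is handled by a standard row-norm/leverage-score sketch of $\A$. On a linear piece, after rescaling rows of the augmented matrix $[\A\mid-\b]$ by $\norm{\a_i}_2$ it becomes sampling proportional to $\abs{\langle\tilde\a_i,\tilde\x\rangle}$, i.e.\ $L_1$-style sampling, for which $\poly(d)\polylog$-space sketches (\countsketch/\ams-type sketches together with an $L_1$ sampler) are available; recovering the actual sampled row uses an additional \countsketch keyed by row index.

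\textbf{The residual-range constraint.}
The one genuinely $\x$-dependent difficulty is the membership ``$r_i(\x)\in I_j$'', which is not a linear function of $\x$. To handle it I would use that $M$ is convex, so $\abs{M'(r)}$ is non-decreasing in $\abs{r}$: up to a constant factor, $\abs{M'(r)}\cdot\mathbf{1}[r\in I_j]$ is sandwiched by a \emph{non-negative} combination $\sum_{\ell}w_\ell\,\mathbf{1}[\abs{r-c_j}\ge\theta_\ell]$ of indicators over geometrically spaced thresholds $\theta_\ell$ around the breakpoint, with only $\O{\log(nd)}$ scales mattering (scales far below the smallest or far above the largest residual contribute a negligible fraction of the sum and can be clipped). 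This reduces everything to a single primitive: given $\x$ and a threshold $\theta$, estimate $\sum_{i:\abs{r_i(\x)}\ge\theta}\norm{\a_i}_2$ and sample $i$ proportional to $\norm{\a_i}_2\,\mathbf{1}[\abs{r_i(\x)}\ge\theta]$. Such thresholded tail sums over the residual vector of $[\A\mid-\b]$ are supported by a layered subsampled $L_1/L_2$ sketch: one maintains $\O{\log(nd)}$ nested subsampled sketches, and at query time estimates $\abs{r_i(\x)}$ for the surviving heavy rows and keeps those above $\theta$. (An alternative avoiding the thresholding is a rejection step --- sample $i$ proportional to $\abs{\alpha_j r_i(\x)+\beta_j}\norm{\a_i}_2$ ignoring the range, then reject unless $r_i(\x)\in I_j$ --- where monotonicity of $\abs{M'}$ plus geometric overlap of the pieces bounds the rejection probability by a constant.)

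\textbf{Adaptivity and accounting.}
Because $\x_t$ depends on $i_1,\dots,i_{t-1}$, the queries to $\oracle$ are adaptive and the oblivious guarantees above do not apply verbatim; I would maintain $\Theta(T)$ independent copies of every sketch (or $\Theta(\log(Tnd))$ copies with a sketch-switching/$\eps$-net argument) and use a fresh copy per step, so each query sees oblivious randomness. Each core sketch --- a leverage-score / $L_1$-subspace-type sketch of $[\A\mid-\b]$ plus a row-recovery \countsketch --- occupies $\tO{d^2}$ words; multiplying by the $\O{1}$ pieces, the $\O{\log(nd)}$ dyadic levels, and the $\Theta(T)$ adaptivity copies gives $Td^2\,\polylog(Tnd)$ words of space. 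For time, one pass updating all sketches on $n$ rows of $d$ coordinates costs $\tO{nd^2}$, and each query and each SGD update cost $\tO{d^2}$, for $\tO{Td^2}$ over all steps, i.e.\ $\tO{nd^2+Td^2}$ pre-processing. The main obstacles are the two technical cores: (i) implementing the $\x$-dependent residual-range restriction inside an $L_1/L_2$ sampler using only $\poly(d)$ space while still recovering the sampled row, and (ii) making the whole pipeline robust to the $T$ adaptively chosen queries with only a $\polylog(Tnd)$ overhead; the piecewise decomposition, the variance bookkeeping, and the time accounting are then routine.
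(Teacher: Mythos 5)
Your reduction to a one-pass sampling oracle, the piecewise decomposition of $M'$ into constant and linear pieces, and the level-set/subsampling machinery for the $\x$-dependent sampling are all broadly in the spirit of the paper's $G$-sampler. However, there is a genuine gap in how you handle adaptivity, and it is precisely the point where the paper's main technical contribution lives.

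You propose to ``maintain $\Theta(T)$ independent copies of every sketch \dots and use a fresh copy per step.'' Each such copy must process the entire matrix $\A$ during the pass (a sketch that has not seen all $n$ rows cannot sample from the global distribution), so the preprocessing time becomes $\tO{Tnd^2}$, not the claimed $\tO{nd^2+Td^2}$. At that cost one could simply run $T$ iterations of full gradient descent, which the paper explicitly identifies as the failure mode of the na\"{\i}ve implementation. Your fallback --- ``$\Theta(\log(Tnd))$ copies with a sketch-switching/$\eps$-net argument'' --- is not substantiated: sketch switching amortizes \emph{estimation} queries whose answers change slowly, but every SGD step needs a genuinely fresh \emph{sample}, and an $\eps$-net over the query points $\x\in\mathbb{R}^d$ is exponential in $d$. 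The paper's resolution is different: it partitions the rows into $\Theta(T)$ buckets so that each row is sketched only $\O{\log(Td)}$ times (keeping preprocessing at $\tO{nd^2}$), identifies via a one-pass sensitivity computation the rows with sensitivity $\Omega(1/T)$ that could cause a bucket to be sampled repeatedly, stores those $\tO{Td}$ rows explicitly outside the buckets, and then proves (Bernstein plus Chernoff) that no remaining bucket is queried more than $\O{\log(Td)}$ times across the $T$ adaptive steps, so each query does see fresh randomness. Without some substitute for this bucketing-plus-sensitivity argument, your construction does not meet the stated time bound.

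A secondary issue: you assert $\Ex{\v}=\nabla F(\x_t)$ is ``unchanged'' while recovering only ``a good approximation to the row $\a_i$.'' The standard median-of-$\countsketch$ row recovery is \emph{not} unbiased, and a biased gradient estimator breaks the SGD convergence guarantee you are invoking. The paper handles this by running $d$ coordinate-wise $\countsketch$ instances and arguing the coordinate-wise median is unbiased by symmetry of its probability mass function; your write-up needs an analogous step.
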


For $T$ iterations, both GD and optimal importance sampling SGD require $T$ passes over the data, while our algorithm only requires a single pass over the data and uses sublinear space for $nd\gg Td^2$. 
We remark that although the number $T$ of iterations for SGD may be large, a major advantage of GD and SGD with importance sampling is a significantly smaller number of iterations than SGD with uniform sampling, e.g., as in \exref{ex:mult:n} and \exref{ex:add:n}, so we should expect $n\gg T$. 
In particular from known results about the convergence of SGD, e.g., see \thmref{thm:sgd:convergence}, if the diameter of the search space and the gradient lengths $\norm{\nabla f_i(x_t)}_2$ are both bounded by $\poly(d)$, then we should expect $T\propto\poly(d)$ even for uniform sampling. 
More generally, if $\nu n$ of the gradients have lengths $\Theta(n)$, while the remaining gradients have lengths $\poly(d)\ll n$, then \exref{ex:general} and \thmref{thm:sgd:convergence} show that the number of steps necessary for convergence for uniform sampling satisfies $T\propto \O{\nu^2 n^4}+\poly(d)$, while the number of steps necessary for convergence for importance sampling satisfies $T\propto \O{\nu^4 n^4}$. 
Thus for $\nu n=\O{n^C}$ for $C<1$, i.e., a sublinear number of gradients have lengths that exceed the input size, we have $T=\O{n^{4C}}$ and hence for $C<\frac{1}{4}$, we have roughly $T=o(n)$ steps are necessary for convergence for SGD with importance sampling. 

Finally, we show in the appendix that our techniques can also be generalized to perform importance sampling for second-order optimization. 
 
\subsection{Our Techniques}
In addition to our main conceptual contribution that optimal convergence rate of importance sampling for SGD can surprisingly be achieved (up to constant factors) without the ``chicken and egg'' problem of separately computing the sampling probabilities, we present a number of technical contributions that may be of independent interest. 
Our first observation is that if we were only running a single step of importance sampling for SGD, then we just want a subroutine that outputs a gradient $G(\langle\a_i,\x\rangle-b_i,\a_i)$ with probability proportional to its norm $\|G(\langle\a_i,\x\rangle-b_i,\a_i)\|_2$. 

\paragraph{$G$-sampler.}
In particular, we need an algorithm that reads a matrix $\A=\a_1\circ\ldots\circ\a_n\in\mathbb{R}^{n\times d}$ and a vector $\x\in\mathbb{R}^d$ given \emph{after processing} the matrix $\A$, and outputs (a rough approximation to) a gradient $G(\langle\a_i,\x\rangle-b_i,\a_i)$ with probability roughly 
\[\frac{\|G(\langle\a_i,\x\rangle-b_i,\a_i)\|_2}{\sum_{j=1}^n \|G(\langle\a_j,\x\rangle-b_j,\a_j)\|_2}.\]
We call such an algorithm a \emph{$G$-sampler} and introduce such a single-pass, memory-efficient sampler with the following guarantees:
\begin{restatable}{theorem}{thmsampler}
\thmlab{thm:sampler}
Given an $(\alpha,\eps)$-smooth gradient $G$, there exists an algorithm $\sampler$ that outputs a noisy vector $\v$ such that $\|\v-\a_i(\langle\a_i,x\rangle-b_i)\|_2\le\alpha\|\a_i(\langle\a_i,x\rangle-b_i)|\|_2$ and $\Ex{\v}=\a_i(\langle\a_i,x\rangle-b_i)$ is $\left(1\pm\O{\eps}\right)\frac{\|G(\langle\a_i,\x\rangle-b_i,\a_i)\|_2}{\sum_{j\in[n]}\|G(\langle\a_j,\x\rangle-b_j,\a_j)\|_2}+\frac{1}{\poly(n)}$. 
The algorithm uses $d^2\,\poly\left(\log(nT),\frac{1}{\alpha}\right)$ update time per arriving row and $Td^2\,\poly\left(\log(nT),\frac{1}{\alpha}\right)$ total bits of space. 
\end{restatable}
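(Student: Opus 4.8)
The plan is to cast the problem as a \emph{one-shot weighted sampler}: maintain a linear sketch of the rows in a single pass so that, once $\x$ is revealed, we can draw an index $i$ with probability close to $w_i/\sum_j w_j$, where $w_i:=\norm{G(\langle\a_i,\x\rangle-b_i,\a_i)}_2$, and simultaneously read off a vector $\v$ approximating the corresponding gradient $\a_i(\langle\a_i,\x\rangle-b_i)$. I would build this on the standard subsampling-plus-heavy-hitters skeleton of $L_1$ samplers: for each scale $\ell=0,1,\dots,\O{\log(nT)}$ keep an independently hashed subsample of the rows surviving at rate $2^{-\ell}$, and within each subsample a $\countsketch$-type structure with $\poly(\log(nT),\frac1\alpha)$ buckets. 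The novelty forced by the fact that $\x$ arrives \emph{after} the stream is what each bucket stores: rather than a scalar, a bucket accumulates the sign-weighted moments $\sum_{i\in\text{bucket}}\sigma_i\,\a_i\a_i^\top$ (a $d\times d$ matrix), $\sum_i\sigma_i b_i\a_i$, $\sum_i\sigma_i\a_i$, and $\sum_i\sigma_i b_i$. This is exactly what makes the space scale with $d^2$, and it lets us recover at query time the bucket's sign-weighted sum of gradients, since $\a_i(\langle\a_i,\x\rangle-b_i)=(\a_i\a_i^\top)\x-b_i\a_i$ is a \emph{fixed linear image} of the stored moments.

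Next I would use the hypothesis that the derivative of $M$ (hence the smoothed gradient $G$) is a continuous union of $\O{1}$ pieces, each constant or linear, to reduce the general $M$-estimator weights $w_i$ to the previous case. On the $j$-th piece one has $G(r,\a)=(c_j r+d_j)\,\a$, an affine function of $\a$ and of $\a r=\a(\langle\a,\x\rangle-b)$, so the contribution of any row whose residual lies in piece $j$ is again a fixed linear image of the stored moments. Because a heavy row recovered from the $\countsketch$ structure also yields approximations to $\a_i\a_i^\top$ and $b_i$ (and hence to its residual $\langle\a_i,\x\rangle-b_i$ once $\x$ is known), one can determine which piece it lies on and evaluate $G(\langle\a_i,\x\rangle-b_i,\a_i)$ to within the $(\alpha,\eps)$-smoothness guarantee; this is what is needed both to perform the rejection step converting ``row $i$ appears at scale $\ell$'' into ``row $i$ is sampled with probability $\propto w_i$'' and to certify $\norm{\v-\a_i(\langle\a_i,\x\rangle-b_i)}_2\le\alpha\norm{\a_i(\langle\a_i,\x\rangle-b_i)}_2$. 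Unbiasedness $\Ex{\v}=\a_i(\langle\a_i,\x\rangle-b_i)$ then follows because, conditioned on the identification of $i$ and the hash, the only error in the reconstructed vector is sign-weighted collision noise from other rows in the bucket, which has mean zero.

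The remaining steps are the routine $L_1$-sampler bookkeeping. Show that for the scale $\ell^\star$ with $2^{\ell^\star}\approx w_i/\sum_j w_j$, the row $i$ is, with probability $1-\frac1{\poly(nT)}$, the unique heavy hitter among the survivors and is correctly reported; union-bound the $\poly(\log(nT),\frac1\alpha)$ hash and estimation failures into the additive $\frac1{\poly(n)}$ slack; and combine the $\O{\log(nT)}$ scales with a final rejection/normalization so the reported distribution is $\left(1\pm\O{\eps}\right)\frac{w_i}{\sum_j w_j}+\frac1{\poly(n)}$. Multiplying the $\O{\log(nT)}$ scales, the $\poly(\log(nT),\frac1\alpha)$ buckets, and the $d^2$-word buckets gives the stated $d^2\,\poly(\log(nT),\frac1\alpha)$ per-row update time, and maintaining fresh independent instances for the $T$ adaptively chosen iterates of SGD (or invoking a standard adaptivity-to-obliviousness reduction) gives the $Td^2\,\poly(\log(nT),\frac1\alpha)$ total space.

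I expect the main obstacle to be that the sketch is committed \emph{before} $\x$ --- hence before the weight vector $(w_1,\dots,w_n)$ and even the relevant pieces of $G$ are known --- so the heavy-hitter guarantees must hold simultaneously for \emph{all} query points and \emph{all} consistent piece assignments, not for one fixed vector. Handling this is precisely where the $(\alpha,\eps)$-smoothness of $G$ and the $\O{1}$ piecewise structure are essential: they let one replace a continuum of possibilities by a net of size $\poly(\log(nT),\frac1\alpha)$ over residual magnitudes together with a constant number of piece configurations, over which the $\countsketch$ accuracy can be union-bounded. Getting these error terms to collapse into the clean multiplicative $(1\pm\O{\eps})$ and additive $\frac1{\poly(n)}$ of the statement, while keeping the dependence on $\alpha$ polynomial, is the delicate part of the argument.
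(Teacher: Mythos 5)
Your overall architecture --- a linear, matrix-valued \countsketch whose buckets store $\sum\sigma_i\,\a_i\otimes\a_i$ and $\sum\sigma_i b_i\a_i$ so that multiplication by $\x$ can be deferred to query time, combined with $\O{\log n}$ subsampling scales and the piecewise-linear structure of $G$ --- is the same as the paper's. But the two steps you label as routine are exactly where the paper has to do real work, and as written they do not go through.

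First, the sampling distribution. You claim that at the scale $\ell^\star$ with $2^{-\ell^\star}\approx w_i/W$ (where $W:=\sum_j w_j$), row $i$ is with probability $1-\frac{1}{\poly(nT)}$ the unique heavy hitter among the survivors and can be ``correctly reported,'' with a final rejection step fixing the distribution. Even conditioned on surviving, row $i$ is generally not unique (every other row of comparable weight survives at the same rate), and the real obstruction is level sets with small total contribution: a level set containing a single row of weight $\approx W/(1+\alpha)^j$ survives subsampling at its detection scale only with small probability, and when it does survive its estimated contribution is inflated by the inverse sampling rate, so the estimated level-set contributions --- and hence the induced sampling probabilities --- have variance far too large for a $(1\pm\O{\eps})$ guarantee. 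The paper's fix is to inject dummy rows into every level set so that each contributes an $\Omega(\alpha/\log n)$ fraction of the total mass (\lemref{lem:dummy:add}), accept a constant failure probability when a dummy row is drawn, and randomize the level-set boundaries by a uniform $\gamma\in[1/2,1]$ to control misclassification near the boundaries (\lemref{lem:substream:mass:approx}); it then samples a level set proportionally to its estimated contribution and a row uniformly within it (\lemref{lem:sample}). Nothing in your proposal plays the role of the dummy rows, and without them the ``routine bookkeeping'' does not close.

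Second, unbiasedness. You argue $\Ex{\v}=\a_i(\langle\a_i,\x\rangle-b_i)$ because the collision noise in a single bucket has mean zero. That is true for one \countsketch repetition, but a single repetition only gives the error bound with constant probability; to obtain $\|\v-\A_i\x\|_2\le\alpha\|\A_i\x\|_2$ with high probability you must take a median over $\O{\log(nT)}$ repetitions, and the median of the recovered vectors is no longer unbiased. The paper resolves this with a separate device (\lemref{lem:countsketch:unbiased}): run $d$ coordinate-wise \countsketch instances and take the median coordinate by coordinate, observing that the joint distribution of the estimates is symmetric about the true value, so the coordinate-wise median is unbiased, while the aggregate error over all $d$ coordinates remains small relative to the tail mass. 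Your proof needs this (or an equivalent mechanism) to deliver unbiasedness and the high-probability error bound simultaneously; as stated, you get one or the other.
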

We say a gradient $G$ is $(\alpha,\eps)$-smooth if a vector $\u$ that satisfies $\|\u-\v\|_2\le\alpha\|\v\|_2$ implies that $(1-\eps)\|G(\v)\|_2\le\|G(\u)\|_2\le(1+\eps)\|G(\v)\|_2$. 
In particular, the measure functions discussed in \secref{sec:apps} have gradients that are $(\O{\eps},\eps)$-smooth. 
For example, the subgradient of the Huber estimator is $\a_i\cdot\sgn(\langle\a_i,\x\rangle-b_i)$ for $|\langle\a_i,\x\rangle-b_i|>\tau$, which may change sign when $\langle\a_i,\x\rangle-b_i$ is close to zero, but its norm will remain the same. 
Moreover, the form $G(\langle\a_i,\x\rangle-b_i,\a_i)$ necessitates that the gradient can be computed strictly from the two quantities $\langle\a_i,\x\rangle-b_i$ and $\a_i$. 
Thus \thmref{thm:sampler} implies that our algorithm can also compute a noisy vector $\v'$ such that $\|\v'-G(\langle\a_i,\x\rangle-b_i,\a_i)\|_2\le\eps\|G(\langle\a_i,\x\rangle-b_i,\a_i)\|_2$. 

Observe that an instance of $\sampler$ in \thmref{thm:sampler} can be used to simulate a single step of SGD with importance sampling and thus $T$ independent instances of $\sampler$ provide an oracle for $T$ steps of SGD with importance sampling. 
However, this na\"{i}ve implementation does not suffice for our purposes because the overall runtime would be $\tO{Tnd^2}$ so it would be more efficient to just run $T$ iterations of GD. 
Nevertheless, our $G$-sampler is a crucial subroutine towards our final algorithm and we briefly describe it here. 

An alternative definition of $G$-sampler is given in \cite{JayaramWZ22}. 
In their setting, the goal is to sample a coordinate $i\in[n]$ of a frequency vector $f$ with probability proportional to $G(f_i)$, where $G$ in their notation is a measure function rather than a gradient.  
However, because the $G$-sampler of \cite{JayaramWZ22} is not a linear sketch, their approach cannot be easily generalized to our setting where the sampling probability of each row $\a_i$ is a function of $\langle\a_i,\x\rangle$, but the vector $\x$ arrives after the stream is already processed. 

Furthermore, because the loss function $f$ may not be scale-invariant, then we should also not expect its gradient to be scale invariant at any location $\x\in\mathbb{R}^d$, i.e., $\nabla f(C\x)\neq C^p\cdot \nabla f(\x)$ for any constants $p,C>0$. 
Hence, our subroutine $\sampler$ cannot use the standard $L_p$ sampler framework used in \cite{JowhariST11,AndoniKO11,JayaramW18,MahabadiRWZ20}, which generally rescales each row of $\a_i$ by the inverse of a uniform or exponential random variable. 
A somewhat less common design for $L_p$ samplers is a level set and subsampling approach~\cite{MonemizadehW10,JiangLSWW21}, due to their suboptimal dependencies on the accuracy parameter $\eps$. 
Fortunately, because we require $\eps=\O{1}$ to achieve a constant factor approximation, we can use the level set and subsampling paradigm as a starting point for our algorithm. 
Because the algorithms of \cite{MonemizadehW10,JiangLSWW21} only sample entries of a vector implicitly defined from a data stream, our $G$-sampler construction must (1) sample rows of a matrix implicitly defined from a data stream and (2) permit updates to the sampling probabilities implicitly defined through multiplication of each row $\a_i$ with a vector $\x$ that only arrives after the stream is processed.

\paragraph{$G$-sampler through level sets and subsampling.}
To illustrate our method and simplify presentation here, we consider $L_2$ regression with gradient $\A_i\x:=\langle\a_i,\x\rangle\cdot\a_i$, by folding in the measurement vector $\b$ into a column of $\A$ -- our full algorithm in \secref{sec:g:sampler} handles both sampling distributions defined with respect to the norm of a general gradient $G$ in the form of \thmref{thm:sampler}, as well as an independent measurement vector $\b$. 

We first partition the rows of $\A$ into separate geometrically growing classes based on their $L_2$ norms, so that for instance, class $C_k$ contains the rows $\a_i$ of $\A$ such that $2^k\le\|\a_i\|_2<2^{k+1}$. 
We build a separate data structure for each class $C_k$, which resembles the framework for $L_p$ norm estimation~\cite{IndykW05}. 
We would like to use the approximate contributions of the level sets $\Gamma_1,\ldots,\Gamma_K$, with $K=\O{\frac{\log n}{\alpha}}$, toward the total mass $F_2(S)=\sum_{i=1}^n\norm{\A_i\x}_2$, where a level set $\Gamma_j$ is informally the set of rows $\a_i$ with $\norm{\A_i\x}_2\in\left[\frac{F_2(S)}{(1+\alpha)^{j-1}},\frac{F_2(S)}{(1+\alpha)^j}\right]$ and the contribution of a level set $\Gamma_j$ is $\sum_{i\in\Gamma_j}\norm{\A_i\x}_2$. 
Then we could first sample a level set $\Gamma_j$ from a class $C_k$ and then uniformly select a row $\a_i$ among those in $\Gamma_j$. 
Indeed, we can run a generalized version of the $L_2$ heavy-hitter algorithm $\countsketch$~\cite{CharikarCF04} on the stream $S$ to identify the level set $\Gamma_1$, since its rows will be heavy with respect to $F_2(S)$.  
However, the rows of the level sets $\Gamma_j$ for large $j$ may not be detected by $\countsketch$. 
Thus, we create $L=\O{\log n}$ substreams $S_1,\ldots,S_L$, so that substream $S_\ell$ samples each row of $\A$ with probability $2^{-\ell+1}$, and run an instance of $\countsketch$ on each substream $S_\ell$ to detect the rows of each level set and thus estimate the contribution of each level set.

\paragraph{Sampling from level sets with small contribution.}
However, there is still an issue -- some level sets have contribution that is too small to well-approximate with small variance. 
For example, if there is a single row with contribution $\frac{F_2(S)}{(1+\alpha)^j}$, then it might not survive the subsampling at a level $S_\ell$ that is used to detect it, in which case it will never be sampled. 
Alternatively, if it is sampled, it will be rescaled by a large amount, so that its level set will be sampled with abnormally large probability. 
Instead of handling this large variance, we instead add a number of dummy rows to each level set, to ensure that their contributions are all ``significant'' and thus be well-approximated. 

Now we have ``good'' approximations to the contributions of each level set within a class, so we can first select a level set with probability proportional to the approximate contributions of each level set and then uniformly sample a row from the level set. 
Of course, we may uniformly sample a dummy row, in which case we say the algorithm fails to acquire a sample. 
We show that the contribution added by the dummy rows is a constant fraction, so this only happens with a constant probability. 
Thus with $\O{\log\frac{1}{\delta}}$ constant number of independent samples, we can boost the probability of successfully acquiring a sample to $1-\delta$ for any $\delta\in(0,1]$. 
We then set $\delta=\frac{1}{\poly(n,T,d)}$. 
An illustration of the entire process can be viewed in \figref{fig:g:sampler}. 

\begin{figure*}[!htb]
\centering
\begin{tikzpicture}[scale=1]

\draw (-5.5,1.5) rectangle+(2,0.8);
\node at (-4.5,1.85){$G$-sampler};

\draw[dashed,purple,->] (-3.5,1.9) -- (-2,0.4);
\draw[dashed,purple,->] (-3.5,1.9) -- (-2,1.4);
\draw[->] (-3.5,1.9) -- (-2,2.4);
\draw[dashed,purple,->] (-3.5,1.9) -- (-2,3.4);

\draw (-2-0.2,-0.2) rectangle +(0.4+0.8,4.2);
\node at (-1.6,4.5){Classes};

\node at (-2+0.4,0.4){$\vdots$};
\draw[purple,dashed] (-2,1) rectangle+(0.8,0.8);
\node at (-2+0.4,1.4){$C_3$};
\draw (-2,2) rectangle+(0.8,0.8);
\node at (-2+0.4,2.4){$C_2$};
\draw[purple,dashed] (-2,3) rectangle+(0.8,0.8);
\node at (-2+0.4,3.4){$C_1$};

\draw[dashed,purple,->] (-1.2,2.4) -- (0,0.4);
\draw[dashed,purple,->] (-1.2,2.4) -- (0,1.4);
\draw[dashed,purple,->] (-1.2,2.4) -- (0,2.4);
\draw[->] (-1.2,2.4) -- (0,3.4);

\draw (0-0.2,-0.2) rectangle +(0.4+0.8,4.2);
\node at (0.4,4.5){Levels};

\draw[purple,dashed] (0,0) rectangle+(0.8,0.8);
\node at (0.4,0.4){$\Gamma_K$};
\node at (0.4,1.4){$\vdots$};
\draw[purple,dashed] (0,2) rectangle+(0.8,0.8);
\node at (0.4,2.4){$\Gamma_2$};
\draw (0,3) rectangle+(0.8,0.8);
\node at (0.4,3.4){$\Gamma_1$};

\draw[dashed,purple,->] (2-1.2,3.4) -- (2,0.4);
\draw[->] (2-1.2,3.4) -- (2,1.4);
\draw[dashed,purple,->] (2-1.2,3.4) -- (2,2.4);
\draw[dashed,purple,->] (2-1.2,3.4) -- (2,3.4);

\draw (2-0.2,-0.2) rectangle +(2.4+0.8,4.2);
\node at (3.2,4.5){Rows};

\draw[purple,dashed] (2,0) rectangle+(2.8,0.8);
\node at (2.8+0.6,0.4){Dummy rows};
\node at (2.8+0.6,1.4){$G(\langle\a_j,\x\rangle-b_j,\a_j)$};
\draw (2,1) rectangle+(2.8,0.8);
\node at (2.8+0.6,2.4){$\vdots$};
\draw[purple,dashed] (2,3) rectangle+(2.8,0.8);
\node at (2.8+0.6,3.4){$G(\langle\a_i,\x\rangle-b_i,\a_i)$};
\end{tikzpicture}
\caption{$G$-sampler first samples a specific class, then samples a specific level from the class, then samples a specific row from the level. Sampled items are denoted by solid lines while rejected items are denoted by dashed lines. The classes partition the rows $\a_k$ of the input $\A$ matrix by $\|\a_k\|_2$, while the levels further partition the rows $\a_i$ of each class based on $G(\langle\a_i,\x\rangle-b_i,\a_i)$.}
\figlab{fig:g:sampler}
\end{figure*}
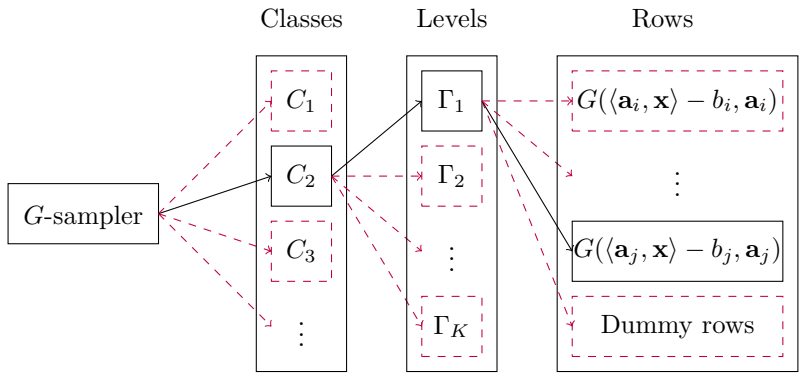

\paragraph{Unbiased samples.}
Unfortunately, $\countsketch$ using $\O{\frac{1}{\alpha^2}}$ buckets only guarantees additive $\alpha\,L_2(S)$ error to a particular row with constant probability. 
To achieve the standard ``for-all'' guarantee across all $n$ rows, an estimate for each row $\a_i$ is then output by taking the row with the median length across $\O{\log n}$ independent instances. 
However, the median row is no longer unbiased, which could potentially affect the convergence guarantees of SGD. 
Instead, we use $d$ separate instances of $\countsketch$, so that each instance handles a separate coordinate of the vector.  
Thus if the goal is to output a noisy estimate to $\a_i$, we have a separate $\countsketch$ report each coordinate $(\a_i)_j$, where $j\in[d]$. 
It can be shown that the median of each estimated coordinate is an unbiased estimate to the true value $(\a_i)_j$ of the coordinate because the probability mass function is symmetric about the true value for each coordinate. 
Moreover, the error to a single coordinate $(\a_i)_j$ may be large relative to the value of the coordinate in the case that $(\a_i)_j$ is not heavy with respect to $\{(\a_i)_j\}_{i\in[n]}$. 
However, we show that the ``overall'' error to all coordinates of $\a_i$ is small relative to $\|\a_i\|_2$, due to $\a_i$ being a ``heavy'' row at the appropriate subsampling level. 

%

\paragraph{Stochastic gradient descent with importance sampling.}
The main problem with the proposed $G$-sampler is that it requires reading the entire matrix $\A$ but it cannot be repeatedly used without incurring dependency issues. 
In particular, if a sampler at the first iteration of SGD outputs a gradient $\A_{i_1}\x_1$ that is used to construct $\x_2$, then $\x_2$ is not independent of the sampler and thus the same sampler should not be used to sample $\A_{i_2}\x_2$. 
This suggests that if we want to perform $T$ steps of SGD with importance sampling, then we would require $T$ separate data structures, which would require $Tnd$ time to construct for dense matrices, but then we might as well just perform full gradient descent!

Instead in \secref{sec:sgd:alg}, we partition the matrix $\A$ among multiple buckets and create a sampler for each bucket. 
Now as long as each bucket should have been sampled a single time, then we will have a fresh sampler with independent randomness for each time a new bucket is sampled. 
If we perform $T$ steps of SGD with importance sampling, then roughly $T$ buckets should suffice, but we cannot guarantee that each bucket is sampled a single time. 
For example, if only a single $\A_i$ is non-zero, then whichever bucket $\A_i$ is assigned to will be sampled every single time. 

Now the challenge is identifying the submatrices $\A_i=\a_i^\top\a_i$ that may be sampled multiple times, since we do not know the values of the vectors $\x_1,\ldots,\x_T$ a priori. 
Fortunately, we know that $\norm{\A_i\x_t}_2$ can only be large if $\a_i$ has high sensitivity, where we define the sensitivity for a row $\a_i$ in $\A$ to be the quantity $\max_{\x\in\mathbb{R}^d}\frac{\norm{\a_i^\top(\langle\a_i,\x\rangle)}_2}{\sum_{j=1}^n\norm{\a_j^\top(\langle\a_j,\x\rangle)}_2}$. 
Thus if a block is sampled multiple times, then one of its rows must have large sensitivity. 

Hence, we would like to identify the buckets that contain any row with sensitivity at least $\frac{1}{T}$ and create $T$ independent samplers for those buckets so that even if the same bucket is sampled all $T$ times, there will be a fresh sampler available. 
Crucially, the process of building separate buckets for the rows with the large sensitivities can be identified in just a single pass over the data. 

We remark that since each row has sensitivity $\max_{\x\in\mathbb{R}^d}\frac{\norm{\a_i^\top(\langle\a_i,\x\rangle)}_2}{\sum_{j=1}^n\norm{\a_j^\top(\langle\a_j,\x\rangle)}_2}$, then it can be shown that the sum of the sensitivities is $\O{d\log n}$ by partitioning the rows into $\O{\log n}$ classes $C_1,C_2,\ldots$ of exponentially increasing norm, so that $\a_i\in C_\ell$ if $2^{\ell}\le\norm{\a_i}_2<2^{\ell+1}$. 
We then note that the sensitivity of each row $\a_i\in C_\ell$ is upper bounded by $\max_{\x\in\mathbb{R}^d}\frac{|\langle\a_i,\x\rangle|}{\sum_{\a_j\in C_\ell}|\langle\a_j,\x\rangle|}$. 
However, this latter quantity is an $L_1$ sensitivity, whose sum is known to be bounded by $\O{d}$, e.g.,~\cite{ClarksonWW19}. 
Thus the sum of the sensitivities in each class is at most $\O{d}$ and so for a matrix $\A$ whose entries are polynomially bounded by $n$, the sum of the sensitivities is at most $\O{d\log n}$. 

Unfortunately, since the sensitivities sum to $\O{d\log n}$, there can be up to $Td$ rows with sensitivity at least $\frac{1}{T}$, so creating $T$ independent samplers corresponding to each of these rows would yield $\Omega(T^2d)$ samplers, which is a  prohibitive amount of space. 
Instead, we simply remove the rows with large sensitivities from the buckets and store them explicitly. 
We then show this approach still avoids any sampler from being used multiple times across the $T$ iterations while also enabling the data structure to just use $\tO{Td}$ samplers. 
Now since we can explicitly consider the rows with sensitivities roughly at least $\frac{1}{T}$, then we can use $\Theta(T)$ buckets in total to ensure that the remaining non-zero entries of $\A$ are partitioned evenly across buckets that will only require $\Theta(\log(Td))$ independent samplers. 
Intuition for our algorithm appears in \figref{fig:sgd}.

\begin{figure*}[!htb]
\centering
\begin{tikzpicture}[scale=1]

\draw (0,0) rectangle +(1.5,6);
\node at (1.5/2,6/2){$\A$};

\draw[->] (1.7,2) -- (2.3,2);

\draw (2.5,0) rectangle +(1.5,4); 
\node at (3.2,2.2){Light};
\node at (3.2,1.8){Rows};
\draw[red] (2.5,1) -- (4,1); 
\draw[red] (2.5,2) -- (4,2); 
\draw[red] (2.5,3) -- (4,3); 

\draw[->] (1.7,5.25) -- (2.3,5.25);

\draw (2.5,4.5) rectangle +(1.5,1.5); 
\node at (3.2,5.4){Heavy};
\node at (3.2,5.0){Rows};

\draw[->] (4.2,5.25) -- (5,5.25);

\draw (5.2,4.5) rectangle +(4,1.5); 
\node at (7.2,5.2){(Store rows explicitly)};

\draw[->] (4.2,2) -- (4.6,2);

\draw (4.8,-1) rectangle +(6.7,5.1); 
\draw [decorate, decoration = {brace}] (10,-0.2) --  (5,-0.2);
\node at (7.5,-0.7){$\O{\log(Td)}$};
\draw [decorate, decoration = {brace}] (10.3,4) --  (10.3,0);
\node at (10.95,2){$\O{T}$};

\draw (5,0.1) rectangle +(2,0.8); 
\draw (5,1.1) rectangle +(2,0.8); 
\draw (5,2.1) rectangle +(2,0.8); 
\draw (5,3.1) rectangle +(2,0.8); 

\node at (6,0.5){$G$-sampler};
\node at (6,1.5){$G$-sampler};
\node at (6,2.5){$G$-sampler};
\node at (6,3.5){$G$-sampler};

\node at (7.5,0.5){$\ldots$};
\node at (7.5,1.5){$\ldots$};
\node at (7.5,2.5){$\ldots$};
\node at (7.5,3.5){$\ldots$};

\draw (8,0.1) rectangle +(2,0.8); 
\draw (8,1.1) rectangle +(2,0.8); 
\draw (8,2.1) rectangle +(2,0.8); 
\draw (8,3.1) rectangle +(2,0.8); 

\node at (9,0.5){$G$-sampler};
\node at (9,1.5){$G$-sampler};
\node at (9,2.5){$G$-sampler};
\node at (9,3.5){$G$-sampler};

\node at (14,5.5){SGD steps};
\draw[->] (14,5) -- (14,-1);
\node at (14.5,4.8){$1$};
\node at (14.5,4.0){$2$};
\node at (14.5,3.2){$3$};
\node at (14.5,2){$\vdots$};
\node at (14.5,-0){$T-1$};
\node at (14.5,-0.8){$T$};

\draw[->] (13.5,4.8) -- (9.4,5.8);
\draw[->] (13.5,4.8) -- (11.7,3.9);

\draw[->] (13.5,4.0) -- (9.4,5.5);
\draw[->] (13.5,4.0) -- (11.7,3.6);

\draw[->] (13.5,3.2) -- (9.4,5.2);
\draw[->] (13.5,3.2) -- (11.7,3.3);

\node at (13.5,2){$\vdots$};
\end{tikzpicture}
\caption{Our SGD algorithm explicitly stores the heavy rows of the input matrix $\A$, i.e., the rows with sensitivity $\Omega\left(\frac{1}{T}\right)$. 
Our algorithm partitions the remaining rows of the input matrix into $\O{T}$ buckets and initiates $\O{\log(Td)}$ instances of $G$-samplers on each bucket.}
\figlab{fig:sgd}
\end{figure*}
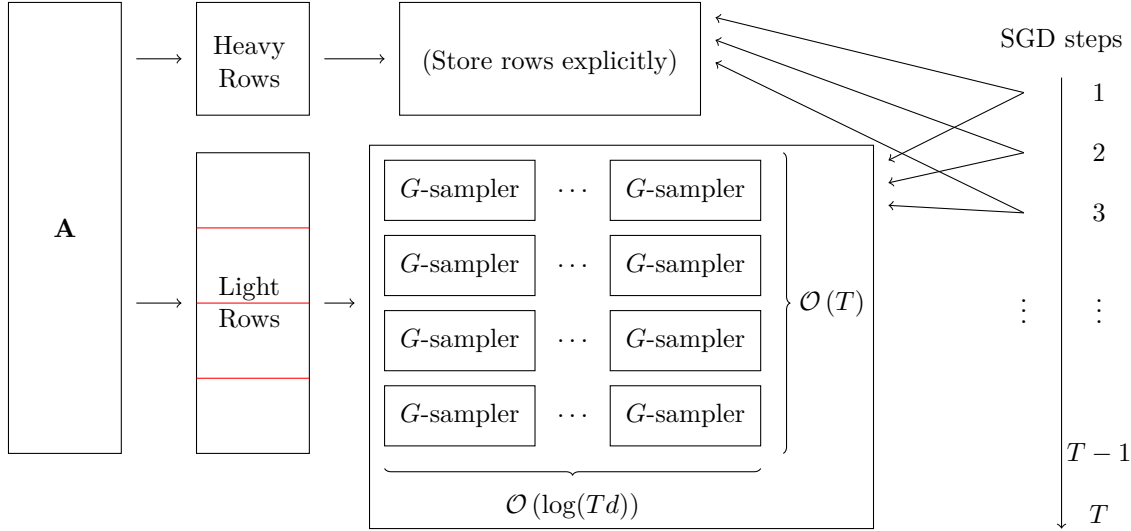

\subsection{Applications}
\seclab{sec:apps}
In this section, we discuss applications of our result to commonly used loss functions, such as $L_p$ loss or various $M$-estimators, e.g.,~\cite{ClarksonW15,ClarksonWW19,TukanWZBF22,PriceSZ22}. 

\paragraph{$L_1$ and $L_2$ regression.}
The $L_p$ regression loss function is defined using $f_i(\x)=|\a_i^\top\x-b_i|^p$. 
The case $p=2$ corresponds to the standard least squares regression problem, while $p=1$ corresponds to least absolute deviation regression, which is more robust to outliers than least squares, but also less stable and with possibly multiple solutions. 
For $p=1$, the subgradient is $\a_i\cdot\sgn(\langle\a_i,\x\rangle-b_i)$ while for $p=2$, the subgradient is $2\a_i(\langle\a_i,\x\rangle-b_i)$.  

\paragraph{Huber estimator.}
As previously discussed, Huber loss~\cite{Huber92} is commonly used, e.g.,~\cite{Zhang97,ClarksonW15}, to achieve Gaussian properties for small residuals~\cite{Guitton99} and robust properties for large residuals due to outliers or errors. 
The Huber estimator is also within a constant factor of other $M$-estimators that utilize the advantage of the $L_1$ loss function to minimize the impact of large errors/outliers and that of the $L_2$ loss function to be convex, such as the $L_1$-$L_2$ estimator and the Fair estimator~\cite{BosseAG16}. 
Given a threshold $\tau>0$, the Huber loss $H$ is defined by $H(x)=\frac{x^2}{2\tau}$ for $|x|\le\tau$ and $H(x)=|x|-\frac{\tau}{2}$ for $|x|>\tau$. 
Thus the subgradient for $H$ is $\frac{\a_i}{\tau}(\langle\a_i,\x\rangle-b_i)$ for $|\langle\a_i,\x\rangle-b_i|\le\tau$ and $\a_i\cdot\sgn(\langle\a_i,\x\rangle-b_i)$ for $|\langle\a_i,\x\rangle-b_i|>\tau$.

\paragraph{Ridge regression.}
It is often desirable for a solution $\x$ to be sparse. 
The natural approach to encourage sparse solutions is to add a regularization $\lambda\|\x\|_0$ term to the loss function, for some parameter $\lambda>0$. 
However, since $\|\x\|_0$ is not convex, ridge regression is often used as a convex relaxation that encourage sparse solutions. 
The ridge regression loss function satisfies $f_i(\x)=(\a_i^\top\x-b_i)^2+\lambda\,\|\x\|_2^2$ for each $i\in[n]$, so that $\lambda$ regularizes the penalty term associated with the squared magnitude of $\x$. 
Higher values of $\lambda$ push the optimal solution towards zero, which leads to lower variance, as a particular coordinate has a smaller effect on the prediction. 
The gradient for the ridge regression loss function satisfies $\nabla f_i(\x)=2\a_i(\langle\a_i,\x\rangle-b_i)+2\lambda\x$. 

\paragraph{Lasso.}
Another approach that encourages sparsity is using the $L_1$ regularization instead of the $L_2$ regularization. 
Least absolute shrinkage and selection operator (Lasso) regression uses the loss function $f_i(\x)=(\a_i^\top\x-b_i)^2+\lambda\,\|\x\|_1$. 
Whereas the penalty term associated for ridge regression will drive down the Euclidean norm of $\x$ for larger $\lambda$, solutions with large $L_1$ norm are still possible if the mass of $\x$ is spread across a large number of coordinates. 
By contrast, the penalty term associated for Lasso drives down the total magnitude of the coordinates of $\x$. 
Thus, in this sense, Lasso tends to drive coordinates to zero and encourages sparsity, which does not usually happen for ridge regression. 
The subgradient for the Lasso regression loss function satisfies $\nabla f_i(\x)=2\a_i(\langle\a_i,\x-b_i)+2\lambda\sgn(\x)$, where we abuse notation by using $\sgn(\x)$ to denote the coordinate-wise sign of the entries of $\x$.  

\paragraph{Group lasso.}
\cite{yuan2006model} proposed Group Lasso as a generalization to Lasso. 
Suppose the weights in $\x$ can be grouped into $m$ groups: $\x^{(1)},\ldots,\x^{(m)}$. 
We group the columns of $\A=\a_1\circ\ldots\circ\a_n$ so that $\A^{(i)}$ is the set of columns that corresponds to the weights in $\x^{(i)}$. 
The Group Lasso function is defined as $f_i(\x)=(\a_i^\top\x-b_i)^2+\lambda\sum_{j=1}^m\sqrt{G_j}\|\x^{(j)}\|_2$, where $G_j$ represents the number of weights in $\x^{(j)}$. 
Note that Group Lasso becomes Lasso for $m=n$. 

\subsection{Preliminaries}
For an integer $n>0$, we use $[n]$ to denote the set $\{1,2,\ldots,n\}$. 
We use boldfaced font for variables that represent either vectors of matrices and plain font to denote variables that represent scalars. 
We use the notation $\tO{\cdot}$ to suppress $\polylog$ factors, so that $f(T,n,d)=\tO{g(T,n,d)}$ implies that $f(T,n,d)\le g(T,n,d)\polylog(Tnd)$. 
Let $\A\in\mathbb{R}^{n\times d}$ and $\B\in\mathbb{R}^{m\times d}$. 
We use $\circ$ to denote vertical concatenation, so that $\A\circ\B=\begin{bmatrix}\A\\\B\end{bmatrix}$, and $\otimes$ to denote outer product, so that the $(i,j)$-th entry of the matrix $\u\otimes\v\in\mathbb{R}^{m\times n}$ for $\u\in\mathbb{R}^m$ and $\v\in\mathbb{R}^n$ is $u_iv_j$. 
For a vector $\v\in\mathbb{R}^{n}$, we let $\norm{\v}^p_p=\sum_{i=1}^n v_i^p$ and $\norm{\v}_{\infty}=\max_i|v_i|$. 
For a matrix $\A\in\mathbb{R}^{n\times d}$, we denote the Frobenius norm of $\A$ by $\norm{\A}_F=\sqrt{\sum_{i=1}^n\sum_{j=1}^d A_{i,j}^2}$. 
We also use $\norm{\A}_p=\left(\sum_{i=1}^n\sum_{j=1}^d|A_{i,j}|^p\right)^{\frac{1}{p}}$.  
For a function $f$, we use $\nabla f$ to denote its gradient. 

\begin{definition}
A function $f:\mathbb{R}^d\to\mathbb{R}$ is \emph{convex} if $f(\x)\ge f(\y)+\langle\nabla f(\y),\x-\y\rangle$ for all $\x,\y\in\mathbb{R}^d$.
\end{definition}

\begin{definition}
A continuously differentiable function $f:\mathbb{R}^d\to\mathbb{R}$ is \emph{$\mu$-smooth} if 
\[\norm{\nabla f(\x)-\nabla f(\y)}_2\le\mu\norm{\x-\y}_2,\]
for all $\x,\y\in\mathbb{R}^d$. 
Then it follows, e.g., by Lemma 3.4 in \cite{Bubeck15}, that for every $\x,\y\in\mathbb{R}^d$,
\[|f(\y)-f(\x)-\langle\nabla f(\x), \y-\x\rangle|\le\frac{\mu}{2}\norm{\y-\x}_2^2.\]
\end{definition}

Recall that SGD offers the following convergence guarantees for smooth convex functions:
\begin{theorem}
\cite{NemirovskiJLS09,Meka17}
\thmlab{thm:sgd:convergence}
Let $F$ be a $\mu$-smooth convex function and $\x_\opt=\argmin F(\x)$. 
Let $\sigma^2$ be an upper bound for the variance of the unbiased estimator across all iterations and $\overline{\x_k}=\frac{\x_1+\ldots+\x_k}{k}$. 
Let each step-size $\eta_t$ be $\eta\le\frac{1}{\mu}$. 
Then for SGD with initial position $\x_0$, and any value of $k$, 
\[\Ex{F(\overline{\x_k})-F(\x_\opt)}\le\frac{1}{2\eta k}\norm{\x_0-\x_\opt}_2^2+\frac{\eta\sigma^2}{2}.\]
This means that $k=\O{\frac{1}{\eps^2}\left(\sigma^2+\mu\norm{\x_0-\x_\opt}_2^2\right)^2}$ iterations suffice to obtain an $\eps$-approximate optimal value by setting $\eta=\frac{1}{\sqrt{k}}$.
\end{theorem}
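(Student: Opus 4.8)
The plan is to run the standard ``one-step progress plus telescoping'' analysis of SGD for a smooth convex objective. Write $\g_t$ for the stochastic gradient used at iteration $t$, so that $\x_{t+1}=\x_t-\eta\g_t$, $\Ex{\g_t\mid\x_t}=\nabla F(\x_t)$, and $\Ex{\norm{\g_t-\nabla F(\x_t)}_2^2\mid\x_t}\le\sigma^2$. First I would expand the squared distance to the optimum,
\[\norm{\x_{t+1}-\x_\opt}_2^2=\norm{\x_t-\x_\opt}_2^2-2\eta\langle\g_t,\x_t-\x_\opt\rangle+\eta^2\norm{\g_t}_2^2,\]
and take expectation conditioned on $\x_t$. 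Unbiasedness turns the cross term into $-2\eta\langle\nabla F(\x_t),\x_t-\x_\opt\rangle$, and convexity of $F$ gives $\langle\nabla F(\x_t),\x_t-\x_\opt\rangle\ge F(\x_t)-F(\x_\opt)$. This part is routine.

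The one genuinely load-bearing step is controlling $\Ex{\norm{\g_t}_2^2\mid\x_t}=\norm{\nabla F(\x_t)}_2^2+\Ex{\norm{\g_t-\nabla F(\x_t)}_2^2\mid\x_t}\le\norm{\nabla F(\x_t)}_2^2+\sigma^2$. Here I would invoke the $\mu$-smoothness descent inequality from the Preliminaries: applying $|F(\y)-F(\x)-\langle\nabla F(\x),\y-\x\rangle|\le\frac{\mu}{2}\norm{\y-\x}_2^2$ with $\x=\x_t$ and $\y=\x_t-\frac1\mu\nabla F(\x_t)$, and using $F(\y)\ge F(\x_\opt)$, yields the co-coercivity-type bound $\norm{\nabla F(\x_t)}_2^2\le 2\mu\bigl(F(\x_t)-F(\x_\opt)\bigr)$. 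Substituting, the conditional estimate becomes
\[\Ex{\norm{\x_{t+1}-\x_\opt}_2^2\mid\x_t}\le\norm{\x_t-\x_\opt}_2^2-2\eta(1-\eta\mu)\bigl(F(\x_t)-F(\x_\opt)\bigr)+\eta^2\sigma^2,\]
and the hypothesis $\eta\le\frac1\mu$ (tightened to $\eta\le\frac{1}{2\mu}$ for the displayed constants, or absorbing the slack into the $\O{\cdot}$) makes $2\eta(1-\eta\mu)\ge\eta$, so the $F(\x_t)-F(\x_\opt)$ term carries a usable negative coefficient.

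Then I would take full expectations, rearrange to isolate $\eta\,\Ex{F(\x_t)-F(\x_\opt)}$, and telescope over $t=0,\dots,k-1$: the distance terms collapse to $\norm{\x_0-\x_\opt}_2^2-\Ex{\norm{\x_k-\x_\opt}_2^2}\le\norm{\x_0-\x_\opt}_2^2$, and the noise terms sum to $k\eta^2\sigma^2$. Dividing by $\eta k$ and applying Jensen's inequality ($F$ convex, $\overline{\x_k}$ the average of the iterates) converts $\frac1k\sum_t\Ex{F(\x_t)-F(\x_\opt)}$ into an upper bound for $\Ex{F(\overline{\x_k})-F(\x_\opt)}$, giving $\Ex{F(\overline{\x_k})-F(\x_\opt)}\le\frac{1}{2\eta k}\norm{\x_0-\x_\opt}_2^2+\frac{\eta\sigma^2}{2}$. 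For the ``$k=\O{\cdot}$'' consequence I would plug in $\eta=\frac1{\sqrt k}$ (legitimate once $k\ge\mu^2$, which is exactly why a $\mu\norm{\x_0-\x_\opt}_2^2$ term surfaces; more carefully, optimizing $\eta$ over $(0,\frac1\mu]$ gives $\eta^\star\asymp\min\{\frac1\mu,\frac{\norm{\x_0-\x_\opt}_2}{\sigma\sqrt k}\}$), so the bound becomes $\O{\frac{\sigma^2+\norm{\x_0-\x_\opt}_2^2}{\sqrt k}}+\O{\frac{\mu\norm{\x_0-\x_\opt}_2^2}{k}}$, and solving for when this is at most $\eps$ yields $k=\O{\frac{1}{\eps^2}(\sigma^2+\mu\norm{\x_0-\x_\opt}_2^2)^2}$.

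Main obstacle: there is no deep obstacle here — this is essentially a textbook argument — but the step demanding the most care is the $\norm{\nabla F(\x_t)}_2^2$ bound and matching it against the step-size restriction, since it is precisely the tension between the $-2\eta\bigl(F(\x_t)-F(\x_\opt)\bigr)$ term from convexity and the $+2\eta^2\mu\bigl(F(\x_t)-F(\x_\opt)\bigr)$ term from the second moment that forces $\eta\le\frac1\mu$ and pins down the constants. A secondary subtlety is bookkeeping whether $\sigma^2$ denotes a variance or a raw second-moment bound: in the latter case the $\norm{\nabla F(\x_t)}_2^2$ term is simply dropped into $\sigma^2$ and the smoothness step is not even needed, whereas with the variance convention it must be reabsorbed as above.
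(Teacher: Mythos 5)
This theorem is quoted in the paper from \cite{NemirovskiJLS09,Meka17} without proof, so the only meaningful comparison is against the standard argument in those references --- which is exactly the route you outline, and your outline is essentially sound. The one place it does not quite reproduce the statement as written is the step you yourself flag: absorbing the second-moment term via $\norm{\nabla F(\x_t)}_2^2\le 2\mu\bigl(F(\x_t)-F(\x_\opt)\bigr)$ leaves the coefficient $2\eta(1-\eta\mu)$, which under the stated hypothesis $\eta\le\frac{1}{\mu}$ can degenerate to zero; your chain therefore yields the displayed inequality only after tightening to $\eta\le\frac{1}{2\mu}$ or giving up a constant factor. To get exactly $\frac{1}{2\eta k}\norm{\x_0-\x_\opt}_2^2+\frac{\eta\sigma^2}{2}$ under $\eta\le\frac{1}{\mu}$, use the slightly stronger consequence of convexity together with $\mu$-smoothness,
\[F(\x_\opt)\ge F(\x_t)+\langle\nabla F(\x_t),\x_\opt-\x_t\rangle+\frac{1}{2\mu}\norm{\nabla F(\x_t)}_2^2,\]
so that the cross term satisfies $-2\eta\langle\nabla F(\x_t),\x_t-\x_\opt\rangle\le-2\eta\bigl(F(\x_t)-F(\x_\opt)\bigr)-\frac{\eta}{\mu}\norm{\nabla F(\x_t)}_2^2$; the negative gradient-norm term then swallows the $\eta^2\norm{\nabla F(\x_t)}_2^2$ contribution whenever $\eta\le\frac{1}{\mu}$ \emph{without} degrading the $-2\eta\bigl(F(\x_t)-F(\x_\opt)\bigr)$ coefficient, and telescoping plus Jensen gives the stated constants verbatim. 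Your treatment of the iteration count is fine in spirit: with $\eta=\frac{1}{\sqrt{k}}$ the bound becomes $\frac{\sigma^2+\norm{\x_0-\x_\opt}_2^2}{2\sqrt{k}}$, and the admissibility constraint $\eta\le\frac{1}{\mu}$ (i.e., $k\ge\mu^2$) is what the $\mu\norm{\x_0-\x_\opt}_2^2$ term inside the paper's $\O{\cdot}$ is accounting for; your closing remark about the variance-versus-raw-second-moment convention is also correct, and under the second-moment convention the smoothness step can indeed be dropped entirely.
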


\section{\texorpdfstring{$G$}{G}-Sampler Algorithm}
\seclab{sec:g:sampler}
In this section, we describe our $G$-sampler, which reads a matrix $\A=\a_1\circ\ldots\circ\a_n\in\mathbb{R}^{n\times d}$ and a vector $\x\in\mathbb{R}^d$ given after processing the matrix $\A$, and outputs a gradient $G(\langle\a_i,\x\rangle-b_i,\a_i)$ among the $n$ gradients $\{G(\langle\a_1,\x\rangle-b_1,\a_1),\ldots,G(\langle\a_n,\x\rangle-b_n,\a_n)\}$ with probability roughly $\frac{\|G(\langle\a_i,\x\rangle-b_i,\a_i)\|_2}{\sum_{j=1}^n \|G(\langle\a_j,\x\rangle-b_j,\a_j)\|_2}$. 
However, it is not possible to exactly return $G(\langle\a_i,\x\rangle-b_i,\a_i)$ using sublinear space; we instead return a vector $\v$ such that $\Ex{\v}=G(\langle\a_i,\x\rangle-b_i,\a_i)$ and $\|\v-G(\langle\a_i,\x\rangle-b_i,\a_i)\|\le\eps\|G(\langle\a_i,\x\rangle-b_i,\a_i)\|_2$. 
To achieve our $G$-sampler, we first require a generalization of the standard $L_2$-heavy hitters algorithm $\countsketch$~\cite{CharikarCF04}, which we describe in \secref{sec:hh}. 
We then describe our $G$-sampler in full in \secref{sec:sampler}. 

\subsection{Heavy-Hitters}
\seclab{sec:hh}
Before describing our generalization of $\countsketch$, we first require the following $F_G$ estimation algorithm that generalizes both well-known frequency moment estimation algorithm of~\cite{AlonMS99,ThorupZ04} and symmetric norm estimation algorithm of \cite{BlasiokBCKY17} by leveraging the linear sketches used in those data structures to support ``post-processing'' with multiplication by any vector $\x\in\mathbb{R}^d$. 
\begin{theorem}
\thmlab{thm:estimator}
\cite{BlasiokBCKY17}
Given a constant $\eps>0$ and an $(\alpha,\eps)$-smooth gradient $G$, there exists a one-pass streaming algorithm $\estimator$ that takes updates to entries of a matrix $\A\in\mathbb{R}^{n\times d}$, as well as vectors $\x\in\mathbb{R}^d$ and $\b\in\mathbb{R}^d$ that arrive after the stream, and outputs a quantity $\hat{F}$ such that $(1-\eps)\sum_{i\in[n]}\|G(\langle\a_i,\x\rangle-b_i,\a_i)\|_2\le\hat{F}\le(1+\eps)\sum_{i\in[n]}\|G(\langle\a_i,\x\rangle-b_i,\a_i)\|_2$. 
The algorithm uses $\frac{d^2}{\alpha^2}\,\polylog(nT)$ bits of space and succeeds with probability at least $1-\frac{1}{\poly(n,T)}$. 
\end{theorem}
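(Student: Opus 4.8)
The plan is to reduce the weighted sum $\sum_{i\in[n]}\norm{G(\langle\a_i,\x\rangle-b_i,\a_i)}_2$ to a small collection of \emph{unweighted} symmetric-function moments of linear images of $\x$, and then estimate each of those using the layered subsampling plus \countsketch\ machinery of \cite{BlasiokBCKY17} (which itself generalizes \cite{AlonMS99,ThorupZ04,IndykW05}), observing that every component of that machinery is a linear sketch and therefore commutes with a post-stream multiplication by $\x$. First I would partition the rows of $\A$ into $K=\O{\frac{\log(nT)}{\alpha}}$ geometric \emph{norm classes} $C_k=\set{i:(1+\alpha)^k\le\norm{\a_i}_2<(1+\alpha)^{k+1}}$; since the entries of $\A$ are $\poly(n,T)$-bounded, $K=\polylog(nT)/\alpha$, and the class of an arriving row is known immediately from $\norm{\a_i}_2$, so each row is routed to its class's data structure during the stream. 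For the measure-function gradients of \secref{sec:apps} one has $\norm{G(r,\a)}_2=h(r)\cdot\norm{\a}_2$ for a scalar function $h$ (namely $h=|M'|$), and within $C_k$ the factor $\norm{\a_i}_2$ equals $(1+\alpha)^k$ up to $(1\pm\alpha)$, so
\[
\sum_{i\in C_k}\norm{G(\langle\a_i,\x\rangle-b_i,\a_i)}_2=(1\pm\alpha)(1+\alpha)^k\sum_{i\in C_k} h\big((\A^{(k)}\x-\b^{(k)})_i\big),
\]
where $\A^{(k)},\b^{(k)}$ are the restrictions to $C_k$. Thus it suffices to $(1\pm\eps)$-estimate the symmetric moment $\sum_i h(\r^{(k)}_i)$ of the vector $\r^{(k)}:=\A^{(k)}\x-\b^{(k)}$, which is a linear image of the post-stream input $(\x;\b)$ with $\A^{(k)}$ fixed during the stream; the role of $(\alpha,\eps)$-smoothness of $G$ is exactly to keep $h$ stable under small relative perturbations, which is what the \cite{BlasiokBCKY17}-style estimator needs for small space.

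For a \emph{fixed} $\r^{(k)}$, estimating $\sum_i h(\r^{(k)}_i)$ to $(1\pm\eps)$ is the regime of \cite{BlasiokBCKY17}: split coordinates into level sets by $|\r^{(k)}_i|$, run $\O{\log(nT)}$ geometrically decreasing subsampling rates, and on each subsampled stream run a generalized \countsketch\ to recover the heavy coordinates and hence estimate each level set's contribution, padding each level set with dummy coordinates so that every contribution is large enough to be estimated with small variance (exactly as in the $G$-sampler sketch). The one new ingredient is that $\r^{(k)}$ is not presented directly: during the stream I would instead maintain the images of the \emph{rows} of $\A^{(k)}$ under all of these linear maps — each \countsketch\ bucket and each subsampled coordinate stores the corresponding length-$d$ combination of the columns of $\A^{(k)}$ — and only after $\x,\b$ arrive form the bucket values by taking inner products with $\x$ and subtracting the contribution of $\b$. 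Since subsampling (a diagonal $0/1$ scaling) and \countsketch\ (a signed bucketing matrix) are both linear in the underlying vector, this substitution is exact: the post-processed buckets are precisely what the \cite{BlasiokBCKY17} estimator would have computed from $\r^{(k)}$ itself, so its accuracy analysis applies verbatim.

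For the space bound I would account as follows: each norm class runs $\O{\log(nT)}$ subsampling levels, each level an $\O{\log(nT)}$-repetition generalized \countsketch\ with $\poly(1/\alpha)$ buckets, and each bucket, together with the per-coordinate replication used to keep the recovered rows unbiased (the ``Unbiased samples'' discussion), stores an $\O{d}$-dimensional vector of column combinations of $\A$; multiplying out over the $\O{\log(nT)/\alpha}$ classes gives $\frac{d^2}{\alpha^2}\,\polylog(nT)$ bits. A single class fails with probability $\frac{1}{\poly(n,T)}$ by the guarantees of the underlying estimator and \countsketch, and a union bound over the $\polylog(nT)/\alpha$ classes keeps the total failure probability $\frac{1}{\poly(n,T)}$; summing the per-class estimates $(1+\alpha)^k\cdot(\text{estimate of }\sum_{i\in C_k}h(\r^{(k)}_i))$ over $k$ and absorbing the $(1\pm\alpha)$ slack (rescaling $\eps,\alpha$ by constants) yields the claimed $(1\pm\eps)$ bound and reports $\hat F$.

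The main obstacle I expect is the interaction of small-contribution level sets with the post-stream linearity. A level set of $\r^{(k)}$ may contain only a single coordinate, so after subsampling it is either missed entirely or rescaled by a huge factor, and because $\r^{(k)}$ only materializes once $\x$ is revealed we cannot adapt the subsampling rates to it. The dummy-coordinate padding is the fix, but one must show that the padding (a) makes every level set's contribution a $\polylog(nT)$-bounded fraction of the whole so its estimator concentrates, (b) contributes only a negligible additive $\frac{1}{\poly(n,T)}$ error to the final sum, and (c) is itself implementable as part of the linear sketch maintained on $\A$ and not on $\r^{(k)}$; this is essentially where \cite{BlasiokBCKY17} does its work, and the care our reduction needs is in making sure the "maintain a sketch of $\A$, hit it with $\x$ afterwards" implementation respects that structure exactly.
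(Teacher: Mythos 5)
The paper does not actually prove this statement: it is imported from \cite{BlasiokBCKY17} with the one-sentence justification that the underlying frequency-moment/symmetric-norm sketches are linear and therefore support post-stream multiplication by $\x$. Your reconstruction fleshes out exactly that justification, and the machinery you describe (geometric norm classes, level sets, geometric subsampling, a generalized \countsketch storing column combinations of $\A$ that are hit with $\x$ after the stream, dummy padding for thin level sets) is the same machinery the paper itself deploys for its $G$-sampler in \secref{sec:sampler}. So the route is the intended one, and the key observation --- that every component is a linear sketch of the rows of $\A$, so the post-processed buckets equal what would have been computed from $\r=\A\x-\b$ directly --- is correct and is precisely what the paper leans on.

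Two points need repair. First, your item (b) asserts the dummy padding contributes only a negligible additive $\frac{1}{\poly(n,T)}$ error to the final sum; this contradicts the paper's own \lemref{lem:dummy:add}, which shows the padding inflates the mass by up to a constant factor ($F_G(S)\le F_G(\widetilde S)\le 2F_G(S)$). A constant-factor inflation is harmless for the sampler (a dummy draw is just a rejected sample) but fatal for a $(1\pm\eps)$ estimator unless you either subtract the dummy mass --- which is legitimate, since the algorithm itself inserts a known number of dummy rows with known per-row contributions --- or scale the padding down so it is an $\O{\eps}$ fraction; either fix works but must be stated. Relatedly, the padding magnitudes are calibrated to a prior $2$-approximation $\widehat M$ of the very quantity being estimated, so your proof needs a separate bootstrap (a crude constant-factor estimator, obtainable from the same linear sketch without level-set decomposition) to avoid circularity. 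Second, your reduction to a scalar symmetric moment $\sum_i h(\r_i)$ assumes $\norm{G(r,\a)}_2=h(r)\norm{\a}_2$; this holds for all the measure-function gradients in \secref{sec:apps} but is an assumption on the form of $G$ beyond $(\alpha,\eps)$-smoothness, and should be flagged as such (the regularization terms in ridge/Lasso are functions of $\x$ alone and can be added outside the sketch).
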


We now describe a straightforward generalization of the $L_2$-heavy hitter algorithm $\countsketch$ so that (1) it can find the ``heavy rows'' of a matrix $\A=\a_1\circ\ldots\circ\a_n\in\mathbb{R}^{n\times d}$ rather than the ``heavy coordinates'' of a vector and (2) it supports post-processing multiplication by a vector $\x\in\mathbb{R}^d$ that arrives only after $\A$ is processed. 
Let $\A_i=\a_i\otimes\a_i\in\mathbb{R}^{d\times d}$ for all $i\in[n]$.  
We define $\tail(c)$ to be the $n-c$ rows that do not include the top $c$ values of $\norm{\A_i\x}_2$. 
For a given $\eps>0$, we say a block $\A_i$ with $i\in[n]$ is \emph{heavy} if $\norm{\A_i\x}_2\ge\eps\sum_{i\in\tail(2/\eps^2)} \norm{\A_i\x}_2$. 

The standard $\countsketch$ algorithm for finding the $L_2$-heavy hitters among the coordinates of a vector $\v$ of dimension $n$ works by hashing the universe $[n]$ across $\O{\frac{1}{\eps^2}}$ buckets. 
Each coordinate $i\in[n]$ is also given a random sign $\sigma_i$ and so the algorithm maintains the $\O{\frac{1}{\eps^2}}$ signed sums $\sum\sigma_ix_i$ across all the coordinates hashed to each bucket. 
Then to estimate $x_i$, the algorithm simply outputs $\sigma_iC_{h(i)}$, where $C_{h(i)}$ represents the counter corresponding to the bucket to which coordinate $i$ hashes. 
It can be shown that $\Ex{\sigma_iC_{h(i)}}=x_i$, where the expectation is taken over the random signs $\sigma$ and the choices of the hash functions. 
Similarly, the variance of the estimator can be bounded to show that with constant probability, the estimator has additive error $\O{\eps}\,\|\x\|_2^2$ to $x_i$ with constant probability. 
Thus if $x_i>\eps\,\|\x\|_2^2$, the algorithm will be able to identify coordinate $i$ as a heavy-hitter (in part by allowing some false positives). 
We give the algorithm in full in \algref{countsketch:matrix}. 

\begin{algorithm}[!htb]
\caption{Output heavy vectors $(\langle\a_i,\x\rangle)\a_i$, where $\x$ can be a vector that arrives after $\A$ is processed}
\alglab{countsketch:matrix}
\begin{algorithmic}[1]
\Require{Matrix $\A\in\mathbb{R}^{n\times d}$, vector $\x\in\mathbb{R}^d$, accuracy parameter $\eps>0$, failure parameter $\delta\in(0,1]$.}
\Ensure{Noisy vectors $\a_i^\top\a_i\x$ with $\norm{\a_i^\top\a_i\x}^2_2\ge\eps^2\sum_{i\in\tail(2/\eps^2)}\norm{\a_i^\top\a_i\x}^2_2$.}
\State{$b\gets\Omega\left(\frac{1}{\delta\eps^4}\right)$}
\State{Let $\mathcal{T}$ contain $b$ buckets, each initialized to the all zeros $\mathbb{R}^{d\times d}$ matrix.}
\State{Let $\sigma_i\in\{-1,+1\}$ be drawn from $4$-wise independent family for $i\in[n]$.}
\State{Let $h:[n]\to[b]$ be $2$-wise independent}
\State{\textbf{Process $\A$:}}
\State{Let $\A=\a_1\circ\ldots\circ\a_n$, where each $\a_i\in\mathbb{R}^{d}$.}
\For{each $j=1$ to $n$}
\State{$\A_j\gets\a_j\otimes\a_j$}
\State{Add $\sigma_j\A_j$ to the matrix in bucket $h(j)$.}
\EndFor
\State{Let $\M_j$ be the matrix in bucket $j$ of $\mathcal{T}$ for $i\in[r],j\in[b]$.}
\State{\textbf{Process $\x$:}}
\For{$j\in[b]$}
\State{$\v_j\gets\M_j\x$}
\EndFor
\State{On query $k\in[n]$, report $\sigma_k\v_{h(k)}$.}
\end{algorithmic}
\end{algorithm}

Thus \algref{countsketch:matrix} can be used to give the following guarantee by taking the median of the norms of $\O{\log(nT)}$ copies, as well as the vector that realizes the median. 
\begin{lemma}
\lemlab{lem:countsketch:heavy}
\cite{MahabadiRWZ20}
There exists an algorithm that uses $\O{\frac{d^2}{\eps^2}\,\log^2 n}$ space and outputs a set $S$ of indices so that with probability $1-\frac{1}{\poly(n,T)}$, for all $i\in[n]$, $i\in S$ if $\norm{\A_i\x}_2\ge\eps\sum_{j\in\tail(2/\eps^2)}\norm{\A_j\x}_2$ and $i\notin S$ if $\norm{\A_i\x}_2\le\frac{\eps}{2}\,\sum_{j\in\tail(2/\eps^2)}\norm{\A_j\x}_2$.  
The algorithm uses $\O{\frac{d^2}{\eps^2}\log^2(nT)}$ space. 
\end{lemma}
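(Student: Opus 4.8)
The plan is to recognize \algref{countsketch:matrix}, repeated $R=\O{\log(nT)}$ times independently, as the matrix-valued, post-processable analogue of the classical $\countsketch$ heavy-hitter estimator of \cite{CharikarCF04} (in the form used by \cite{MahabadiRWZ20}), and then to run the usual second-moment analysis of $\countsketch$ on the vectors $\A_1\x,\dots,\A_n\x$. The only genuinely new ingredient is that the sketch is \emph{linear} in the stream and therefore commutes with the linear post-processing ``multiply each block by $\x$'', so it may be built before $\x$ is revealed and completed only at query time. Concretely, the first thing I would record is that bucket $j$ of \algref{countsketch:matrix} holds the $d\times d$ matrix $\M_j=\sum_{i:\,h(i)=j}\sigma_i\,\a_i\otimes\a_i$, a linear image of the input; hence once $\x$ arrives, $\v_j=\M_j\x=\sum_{i:\,h(i)=j}\sigma_i\langle\a_i,\x\rangle\a_i=\sum_{i:\,h(i)=j}\sigma_i\,\A_i\x$, and the query answer for index $k$ is $\sigma_k\v_{h(k)}=\A_k\x+\sum_{i\neq k:\,h(i)=h(k)}\sigma_k\sigma_i\,\A_i\x$, i.e.\ exactly the ordinary $\countsketch$ estimate applied to the implicitly defined vectors $\{\A_i\x\}_{i\in[n]}$.

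Next I would carry out the standard analysis for this estimator, now vector-valued. Fixing the hash $h$ and using pairwise independence of the $\sigma_i$ (guaranteed by the $4$-wise family in \algref{countsketch:matrix}), the estimator is unbiased, $\EEx{\sigma}{\sigma_k\v_{h(k)}\mid h}=\A_k\x$, and $\EEx{\sigma}{\norm{\sigma_k\v_{h(k)}-\A_k\x}_2^2\mid h}=\sum_{i\neq k:\,h(i)=h(k)}\norm{\A_i\x}_2^2$, all cross terms vanishing. Writing $\mathcal{H}$ for the $\O{1/\eps^2}$ indices with largest $\norm{\A_i\x}_2$, a union bound over the pairwise-independent $h$ into $b=\Omega(1/\eps^2)$ buckets shows that with constant probability no element of $\mathcal{H}\setminus\{k\}$ lands in $k$'s bucket; on that event only tail blocks contribute, so averaging over $h$ gives $\Ex{\norm{\sigma_k\v_{h(k)}-\A_k\x}_2^2}=\O{1/b}\sum_{i\in\tail(c/\eps^2)}\norm{\A_i\x}_2^2$ for a suitable constant $c$. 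Combining Markov's inequality with the elementary tail comparison $\sqrt{\sum_{i\in\tail(c/\eps^2)}\norm{\A_i\x}_2^2}\le\O{\eps}\sum_{i\in\tail(2/\eps^2)}\norm{\A_i\x}_2$ (which holds because the $(2/\eps^2)$-th largest norm is at most an $\O{\eps^2}$-fraction of the $L_1$-tail), a single instance satisfies, with constant probability,
\[
\bigl|\,\norm{\sigma_k\v_{h(k)}}_2-\norm{\A_k\x}_2\,\bigr|\ \le\ \norm{\sigma_k\v_{h(k)}-\A_k\x}_2\ \le\ \tfrac{\eps}{16}\sum_{i\in\tail(2/\eps^2)}\norm{\A_i\x}_2 .
\]

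I would then boost and union-bound. Run $R=\Theta(\log(nT))$ independent copies of \algref{countsketch:matrix}, let $\tilde m_k$ be the median over the copies of $\norm{\sigma_k\v_{h(k)}}_2$, and apply a Chernoff bound so that for each fixed $k$ more than half the copies are accurate except with probability $1/\poly(nT)$; a union bound over all $k\in[n]$ (and over the $T$ post-processing vectors the outer algorithm will query) gives that, with probability $1-1/\poly(nT)$, $|\tilde m_k-\norm{\A_k\x}_2|\le\tfrac{\eps}{16}\sum_{i\in\tail(2/\eps^2)}\norm{\A_i\x}_2$ for all $k$ simultaneously. Taking $\tilde\tau$ to be a constant-factor estimate of $\sum_{i\in\tail(2/\eps^2)}\norm{\A_i\x}_2$ (obtained from the estimates of the candidate-heavy rows together with the $F_G$-estimator $\estimator$ of \thmref{thm:estimator}), I would output $S=\{k:\tilde m_k\ge\tfrac58\,\tilde\tau\}$; the $\tfrac{\eps}{16}$ additive accuracy and the factor-$2$ gap between the thresholds $\eps$ and $\eps/2$ in the statement then deliver the one-sided inclusion ($\norm{\A_k\x}_2\ge\eps\sum_{i\in\tail(2/\eps^2)}\norm{\A_i\x}_2\Rightarrow k\in S$) and exclusion ($\norm{\A_k\x}_2\le\tfrac{\eps}{2}\sum_{i\in\tail(2/\eps^2)}\norm{\A_i\x}_2\Rightarrow k\notin S$). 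The space is $R$ copies, each with $b=\O{1/\eps^2}$ buckets that are $d\times d$ matrices with $\O{\log(nT)}$-bit entries, i.e.\ $\O{\frac{d^2}{\eps^2}\log^2(nT)}$ bits total.

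The part I expect to need the most care is not the sketch itself — the ``commute with $\x$'' step is immediate from linearity of $\A\mapsto\sum_i\sigma_i\,\a_i\otimes\a_i$ — but the for-all guarantee paired with a clean threshold. The per-row error is proportional to the \emph{tail} mass $\sum_{i\in\tail(2/\eps^2)}\norm{\A_i\x}_2$, which may be far smaller than the full mass $\sum_i\norm{\A_i\x}_2$ when a handful of blocks dominate, so $\tilde\tau$ must be calibrated against the tail rather than the full sum; one must verify that subtracting off (or otherwise excluding) the $\O{1/\eps^2}$ candidate-heavy rows before estimating does not itself inject error exceeding the tail. A secondary nuisance is pinning down the constant $c$ in the $\tail(c/\eps^2)$-versus-$\tail(2/\eps^2)$ bookkeeping so that the bucket count remains $\O{1/\eps^2}$ while the error is expressed against $\tail(2/\eps^2)$ exactly as the statement requires.
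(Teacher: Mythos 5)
Your proposal is correct and is essentially the argument the paper relies on: the paper gives no proof of this lemma, citing \cite{MahabadiRWZ20} and justifying it only by the one-line remark that one takes the median of the norms over $\O{\log(nT)}$ independent copies of \algref{countsketch:matrix}, which is exactly the linearity-plus-second-moment-plus-median analysis you flesh out. The only points worth noting are that your ``constant-factor'' estimate $\tilde\tau$ of the tail mass must in fact be a $(1\pm\gamma)$-approximation for a sufficiently small constant $\gamma$ for the threshold $\tfrac58\tilde\tau$ to separate the two cases given the factor-$2$ gap in the statement, and that your bucket count $b=\Theta(1/\eps^2)$ (rather than the $\Omega(1/(\delta\eps^4))$ written in \algref{countsketch:matrix}) is the one consistent with the lemma's stated space bound.
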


However, the vector that realizes the median of the norms may no longer be an unbiased estimate to each heavy-hitter. 
Unfortunately, we shall require unbiased estimates to each heavy-hitter, because we will use estimated heavy-hitters as unbiased gradients as part of SGD with importance sampling. 
Thus we give an additional algorithm so that for each $i\in S$ reported by \algref{countsketch:matrix}, the algorithm outputs an \emph{unbiased} estimate to the vector $(\langle\a_i,\x\rangle)\a_i$ with a ``small'' error, in terms of the total mass $\sum_{i\in\tail(2/\eps^2)}\norm{\A_i\x}_2$ excluding the largest $\frac{2}{\eps^2}$ rows. 

To that end, we instead run $d$ separate instances of $\countsketch$ to handle the $d$ separate coordinates of each heavy-hitter $\A_i\x$. 
We show that the median of each estimated coordinate is an unbiased estimate to the coordinate $(\A_i\x)_j$, since the probability mass function is symmetric about the true value for each coordinate. 
Furthermore, we show that although the error to a single coordinate $(\A_i\x)_j$ may be large compared to $|(\A_i\x)_j|$, the error is not large compared to $\sum_{i\in\tail(2/\eps^2)}\norm{\A_i\x}_2$.

\begin{restatable}{lemma}{lemcountsketchunbiased}
\lemlab{lem:countsketch:unbiased}
There exists an algorithm that uses $\O{\frac{d^2}{\eps^2}\log^2(nT)}$ space and outputs a vector $\y_i$ for each index $i\in[n]$ so that $|\norm{\y_i}_2-\norm{\A_i\x}_2|\le\eps\sum_{i\in\tail(2/\eps^2)}\norm{\A_i\x}_2$ and $\Ex{\y_i}=\A_i\x$ with probability at least $1-\frac{1}{\poly(n,T)}$. 
\end{restatable}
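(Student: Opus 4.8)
The plan is to run $\O{\log(nT)}$ independent copies of a \countsketch-style sketch in which the role of a ``coordinate'' is played by an entire row block $\A_i=\a_i\otimes\a_i$, exactly as in \algref{countsketch:matrix} with $\O{1/\eps^2}$ buckets (and a sufficiently large hidden constant), and then -- crucially, and in contrast to the biased ``median-norm vector'' returned for the heavy-hitter guarantee -- to assemble the output vector $\y_i$ \emph{one coordinate at a time}, taking for each $j\in[d]$ the median over the $\O{\log(nT)}$ copies of the $j$-th entry of the reported vector $\sigma_i\v_{h(i)}$. Write $\w^{(j)}\in\mathbb{R}^n$ for the vector whose $i$-th entry is $(\A_i\x)_j$, let $\tail_j(c)$ zero out the $c$ largest-magnitude entries of $\w^{(j)}$, and recall that $\tail(c)$ zeros out the $c$ rows of largest $\norm{\A_i\x}_2$.

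First I would establish unbiasedness. A single copy's estimate of $(\A_i\x)_j$ equals $(\A_i\x)_j$ plus the signed collision sum $\sigma_i\sum_{i'\ne i,\,h(i')=h(i)}\sigma_{i'}(\A_{i'}\x)_j$; conditioned on the hash function this noise has a distribution symmetric about $0$, so the single-copy estimate has probability mass function symmetric about $(\A_i\x)_j$. Since the median of i.i.d.\ draws from a distribution symmetric about $\mu$ is again symmetric about $\mu$ and hence has mean $\mu$, we get $\Ex{(\y_i)_j}=(\A_i\x)_j$ for every $j$, i.e.\ $\Ex{\y_i}=\A_i\x$. The one point to be careful about is that this exact symmetry uses that the sign distribution itself is symmetric, which holds for fully independent signs and for suitable sign-symmetric limited-independence families; the signs do not dominate the space.

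The error bound then comes from two pieces. The standard \countsketch\ analysis gives, for fixed $i,j$ and a single copy, additive error at most $\eps\norm{\w^{(j)}_{\tail_j(2/\eps^2)}}_2$ with constant probability; the coordinate-wise median over $\O{\log(nT)}$ copies boosts this to probability $1-\frac{1}{\poly(n,T)}$, and a union bound over all $nd$ pairs $(i,j)$ leaves the failure probability at $\frac{1}{\poly(n,T)}$. On that event, $\norm{\y_i-\A_i\x}_2^2=\sum_{j\in[d]}\big((\y_i)_j-(\A_i\x)_j\big)^2\le\eps^2\sum_{j\in[d]}\norm{\w^{(j)}_{\tail_j(2/\eps^2)}}_2^2$. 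The step I expect to be the main obstacle is then the combinatorial inequality $\sum_{j\in[d]}\norm{\w^{(j)}_{\tail_j(2/\eps^2)}}_2^2\le\sum_{i\in\tail(2/\eps^2)}\norm{\A_i\x}_2^2$, i.e.\ that deleting the coordinate-wise heavy entries removes at least as much squared mass as deleting the heavy rows. Writing both sides as $\sum_i\norm{\A_i\x}_2^2$ minus a deleted part, this reduces to showing that for each $j$ the sum of $(\A_i\x)_j^2$ over the fixed set of the $2/\eps^2$ heaviest rows is at most the sum of the $2/\eps^2$ largest values of $(\A_i\x)_j^2$ over all $i\in[n]$ (a particular $c$-subset sum is at most the top-$c$ sum), and then summing over $j$. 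Combining the two pieces, $\norm{\y_i-\A_i\x}_2\le\eps\sqrt{\sum_{i\in\tail(2/\eps^2)}\norm{\A_i\x}_2^2}\le\eps\sum_{i\in\tail(2/\eps^2)}\norm{\A_i\x}_2$, and $\big|\norm{\y_i}_2-\norm{\A_i\x}_2\big|\le\norm{\y_i-\A_i\x}_2$ yields the claimed estimate. The space is $\O{\log(nT)}$ copies times $\O{1/\eps^2}$ buckets times $d^2$ entries per bucket, i.e.\ $\O{\frac{d^2}{\eps^2}\log^2(nT)}$.
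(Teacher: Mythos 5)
Your proposal is correct and follows essentially the same route as the paper's proof: per-coordinate \countsketch estimates, coordinate-wise medians whose unbiasedness follows from the symmetry of the collision noise about the true value, per-coordinate tail error bounds boosted by the median and a union bound, and the same space accounting. The only difference is that you spell out the combinatorial inequality $\sum_{j\in[d]}\norm{\w^{(j)}_{\tail_j(2/\eps^2)}}_2^2\le\sum_{i\in\tail(2/\eps^2)}\norm{\A_i\x}_2^2$ (correctly, via the observation that a fixed $c$-subset sum is at most the top-$c$ sum in each coordinate), a step the paper's proof leaves implicit when passing from coordinate-wise tails to the row-wise tail.
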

\begin{proof}
Given $\A\in\mathbb{R}^{n\times d}$, let $\A_i=\a_i^\top\a_i$ for all $i\in[n]$. 
For a fixed coordinate $k\in[d]$, we define a vector $\v^{(k)}\in\mathbb{R}^n$ so that for each $i\in[n]$, the $i$-th coordinate of $\v^{(k)}$ is the $k$-th coordinate of $\A_i\x\in\mathbb{R}^d$. 

Suppose we run a separate $\countsketch$ instance on $\v^{(k)}$. 
For a fixed index $i\in[n]$, let $h(i)$ be the bucket of $\mathcal{T}$ to which $v^{(k)}_i$ hashes. 
For each $j\in[n]$, let $I_j$ be the indicator variable for whether $v^{(k)}_j$ also hashes to bucket $h(i)$, so that $I_j=1$ if $h(i)=h(j)$ and $I_j=0$ if $h(i)\neq h(j)$. 
Similarly for each $j\in[n]$, let $s_j$ be a random sign assigned to $j$, so that the estimate for $v^{(k)}_i$ by a single row of $\countsketch$ is 
\[\sum_{j\in[n]}s_is_jI_jv^{(k)}_j=v^{(k)}_i+\sum_{j:h(j)=h(i)}r_jv^{(k)}_j,\]
where $r_j=s_is_j$ satisfies $r_j=1$ with probability $\frac{1}{2}$ and $r_j=-1$ with probability $\frac{1}{2}$. 
Thus if $y_i$ is the estimate for $v^{(k)}_i$, then for any real number $u$, we have that
\[\PPr{y_i=v^{(k)}_i+u}=\PPr{y_i=v^{(k)}_i-u},\]
so that the probability mass function of $y_i$ is symmetric about $v^{(k)}_i$. 
Thus given $\ell$ independent instances of $\countsketch$ with estimates $y_i^{(k,1)},\ldots,y_i^{(k,\ell)}$ for $v^{(k)}_i$ and any real numbers $u^{(1)},\ldots,u^{(\ell)}$,
\[\PPr{y_i^{(k,1)}=v^{(k)}_i+u^{(1)},\ldots,y_i^{(k,\ell)}=v^{(k)}_i+u^{(\ell)}}=\PPr{y_i^{(k,1)}=v^{(k)}_i-u^{(1)},\ldots,y_i^{(k,\ell)}=v^{(k)}_i-u^{(\ell)}}.\]
Therefore, the joint probability mass function is symmetric about $(v^{(k)}_i,\ldots,v^{(k)}_i)$ and so the median across the $\ell$ instances of $\countsketch$ is an unbiased estimator to $v^{(k)}_i$. 
Finally, we have due to the properties of $\countsketch$ that if each hash function $h$ maps to a universe of size $\O{\frac{1}{\eps^2}}$ and $\ell=\O{\log(nT)}$, then with probability at least $1-\frac{1}{\poly(T,n)}$, the output estimate for $v^{(k)}_i$ has additive error at most $\eps\cdot\left(\sum_{j\in\tail(2/\eps^2)}(v^{(k)}_i)^2\right)^{1/2}$. 

Thus using each of the estimated outputs across all $k\in[d]$, then for a fixed $i\in[n]$, we can output a vector $\y_i$ such that $\Ex{\y_i}=\A_i\x$ and with probability at least $1-\frac{1}{\poly(T,n)}$,  \[|\norm{\y_i}_2-\norm{\A_i\x}_2|\le\eps\cdot\left(\sum_{i\in\tail(2/\eps^2)}\norm{\A_i\x}_2^2\right)^{1/2}.\]
For a fixed $k\in[d]$, then our algorithm intends to hash the $k$-th coordinate of $\A_i\x\in\mathbb{R}^d$. 
However, since $\x$ is only given after the data structure is already formed and in particular, after $\A_i$ is given, then $\countsketch$ must hash the $k$-th row of $\A_i$ entirely, thus storing $\O{\frac{d}{\eps^2}\log^2(nT)}$ bits for each coordinate $k\in[d]$. 
Hence across all $k\in[d]$, the algorithm uses the total space $\O{\frac{d^2}{\eps^2}\log^2(nT)}$. 
\end{proof}

However if say, we want to identify the heavy gradients $(\langle\a_i,\x\rangle-b_i)\a_i$, then we create separate data structures for the constant (in $\x$) term $b_i\a_i$ and the linear term $(\a_i\otimes\a_i)\x$, using the same buckets, hash functions, and random signs.  
For the constant term data structure, we hash the scaled rows $b_i\a_i$ into $\O{\frac{1}{\eps^2}}$ buckets, so that each bucket contains a vector that represents the signed sum of the (scaled) rows of $\A$ that hash to the bucket. 
For the linear term data structure, we hash the outer products $\A_i:=\a_i\otimes\a_i$ into $\O{\frac{1}{\eps^2}}$ buckets, so that each bucket contains a vector that represents the signed sum of the matrices $\A_i$ that hash to the bucket. 
Once the vector $\x$ arrives after $\A$ is processed, then we can multiply each of the matrices stored by each bucket by $\x$. 
Since the signed sum is a linear sketch, this procedure is equivalent to originally taking the signed sums of the vectors $\A_i\x$. 
Similarly, by linearity, we can then take any linear combination of the two data structures to identify the heavy gradients $(\langle\a_i,\x\rangle-b_i)\a_i$. 

\subsection{\texorpdfstring{$G$}{G}-Sampler Algorithm}
\seclab{sec:sampler}
In this section, we first describe our $G$-sampler algorithm, where we sample a gradient $G(\langle\a_i,\x\rangle-b_i,\a_i)$ with probability proportional to $\|G(\langle\a_i,\x\rangle-b_i,\a_i)\|_2$. 
Given an accuracy parameter $\eps>0$, let $\alpha$ be a constant, parametrized by $\eps$, so that $(1-\eps)F_G(\v)\le F_G(\u)\le(1+\eps)F_G(\v)$, for any $\u$ with $\|\u-\v\|_2\le\alpha\|\v\|_2$. 
As our data structure will be a linear sketch, we focus on the case where we fold the measurement vector $\b$ into a column of $\A$, so that we want to output a gradient $\A_i\x:=(\a_i\otimes\a_i)\x$. 

Our algorithm first partitions the rows of $\A$ into classes, based on their $L_2$ norm. 
For example, if all entries of $\A$ are integers, then we define class $C_k:=\{\a_i\,:\,2^{k-1}\le\|\a_i\|_2<2^k\}$. 
We create a separate data structure for each class. 
We will use the $F_G$ estimation algorithm on each class to first sample a particular class. 
It then remains to sample a particular vector $(\a_i\otimes\a_i)\x$ from a class. 

Depending on the vector $\x$, the vectors $(\a_i\otimes\a_i)\x$ in a certain class $C_k$ can have drastically different $L_2$ norm. 
We define level set $\Gamma_j$ as the vectors that satisfy $(1+\eps)^{j-1}\le\|(\a_i\otimes\a_i)\x\|_2<(1+\eps)^j$. 
If we could estimate $|\Gamma_j|$, then we could estimate the contribution of each level set $\Gamma_j$ toward the overall mass $\sum_{i\in C_k}\|(\a_i\otimes\a_i)\x\|_2$, so we can then sample a specific level set $\Gamma_j$ from the class $C_k$. 
To that end, we create $L=\O{\log n}$ substreams, $S_1,\ldots,S_L$, so that we sample each row with probability $\frac{1}{2^{\ell-1}}$ in substream $S_\ell$. 

The point is that if the contribution of level set $\Gamma_j$ is ``significant'', then there exists a specific substream $S_\ell$ in which the vectors of $\Gamma_j$ will be likely detected by the heavy-hitter algorithm we introduced in \secref{sec:hh}, if they are sampled by $S_\ell$. 
We can then use these vectors that are output by the heavy-hitter algorithm to estimate the contribution of level set $\Gamma_j$. 
However, if the contribution of level set $\Gamma_j$ is not significant, then there may not be any vectors of $\Gamma_j$ that survive the sampling in substream $S_\ell$. 
Thus we add a number of ``dummy rows'' to each level set to insist that all level sets are significant, so that we can estimate their contributions. 

We then sample a level set $\Gamma_j$ with probability proportional to its contribution and uniformly select a (noisy) vector from the level set. 
If the selected vector is one of the original rows of the matrix, then we output the noisy vector. 
Otherwise, we say the sampler has failed. 
We show that the sampler only fails with constant probability, so it suffices to run $\O{\log\frac{1}{\delta}}$ independent instances to boost the probability of success to any arbitrary $1-\delta$. 
The algorithm for selecting a level set $\Gamma_j$ from a specific class $C_k$ appears in \algref{alg:sampler:uniform}. 

We first show that the dummy rows only contribute at constant multiple of the mass $F_G(S)=\sum_{i=1}^n\|\A_i\x\|_2$, where we assume for simplicity that all rows of $\A$ are in the same class.  
\begin{restatable}{lemma}{lemdummyadd}
\lemlab{lem:dummy:add}
Let $S$ be the input data stream with subsamples $S_1,\ldots,S_L$. 
Let $\widetilde{S}$ be the input data stream with the additional dummy rows and corresponding subsamples $\widetilde{S_1},\ldots,\widetilde{S_L}$. 
Then $2F_G(S)\ge F_G(\widetilde{S})\ge F_G(S)$. 
\end{restatable}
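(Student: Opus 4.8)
The plan is to dispose of the lower bound immediately and devote all the work to the upper bound, which reduces to showing that the aggregate $F_G$-mass of the dummy rows is at most $F_G(S)$. Indeed $F_G(\widetilde{S})$ is a sum over rows of the nonnegative quantities $\norm{\A_i\x}_2$, and $\widetilde{S}$ contains every row of $S$ together with the dummy rows, so $F_G(\widetilde{S}) = F_G(S) + \Delta$ where $\Delta \ge 0$ is the total contribution of the dummy rows; this already yields $F_G(\widetilde{S}) \ge F_G(S)$, and it remains to establish $\Delta \le F_G(S)$.

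First I would bound the number of level sets that ever receive a dummy row. Working inside the single class that the lemma assumes, the level set $\Gamma_j$ collects the rows with $\norm{\A_i\x}_2 \in [(1+\eps)^{j-1},(1+\eps)^j)$. Since the contribution of any single row obeys $\norm{\A_i\x}_2 \le F_G(S)$, every nonempty level set has index $j \le \lceil \log_{1+\eps} F_G(S) \rceil + 1$; and at the low end, level sets whose level $(1+\eps)^j$ is below $F_G(S)/\poly(nTd)$ can be ignored altogether, because the rows in all of them combined contribute at most $n \cdot F_G(S)/\poly(nTd) \le F_G(S)/\poly(nTd)$, which is absorbed by the $\frac{1}{\poly(n)}$ additive slack allowed in \thmref{thm:sampler}. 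Hence the number $N$ of level sets that get padded is $\O{\log_{1+\eps}\poly(nTd)} = \O{\log(nTd)}$, using that $\eps = \Theta(1)$ in this regime.

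Next I would control the dummy mass deposited in a single level set. The algorithm pads $\Gamma_j$ by inserting dummy rows whose image norm lies in $[(1+\eps)^{j-1},(1+\eps)^j)$, one at a time, stopping as soon as the total (real plus dummy) contribution of $\Gamma_j$ reaches a threshold $\theta$; I would take $\theta := \hat{F}/(cN)$ for a large enough constant $c$, where $\hat{F}$ is the $(1\pm\eps)$-approximation to $F_G(S)$ furnished by \thmref{thm:estimator} (substituting $\hat{F}$ for $F_G(S)$ costs only a $(1\pm\eps)$ factor everywhere). If $(1+\eps)^{j-1} \ge \theta$ then a single genuine row of $\Gamma_j$ already meets the threshold and no dummy is added; otherwise each dummy contributes less than $(1+\eps)\theta \le 2\theta$, and since insertion halts the instant the running total exceeds $\theta$, the overshoot — and thus the entire dummy mass placed in $\Gamma_j$ — is less than $\theta + 2\theta = 3\theta$. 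Summing over the $N$ padded level sets, $\Delta < 3N\theta = 3\hat{F}/c \le 3(1+\eps)F_G(S)/c \le F_G(S)$ once $c \ge 6$ and $\eps \le 1$, whence $F_G(\widetilde{S}) = F_G(S) + \Delta \le 2F_G(S)$.

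The closest thing to an obstacle is the bookkeeping in the first step: justifying that negligible-level level sets may be dropped so that $N$ stays polylogarithmic is exactly where $\norm{\A_i\x}_2 \le F_G(S)$ and the $\frac{1}{\poly(n)}$ error budget of the sampler are used, and one must be slightly careful that because the number of inserted dummies is an integer the threshold $\theta$ is overshot — but only by one dummy's worth, which the case split on whether $(1+\eps)^{j-1}\ge\theta$ keeps at $\O{\theta}$. Everything else is routine, and the reduction to a single class is granted by the hypothesis of the lemma.
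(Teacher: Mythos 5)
Your argument is correct in outline and its arithmetic core coincides with the paper's: the lower bound is immediate from monotonicity of $F_G$ under adding rows, and the upper bound comes from showing the per-level dummy mass is $\O{\widehat{M}/K}$ and summing over the $K$ padded levels to get $\O{\widehat{M}}\le\O{F_G(S)}$, with constants tuned so the total is at most $F_G(S)$. The difference is that you had to reconstruct the padding rule, and the rule you invented is not the one in \algref{alg:sampler:uniform}: the algorithm does not adaptively insert dummies until a threshold $\theta=\hat{F}/(cN)$ is met, but rather deterministically adds $\O{\frac{(1+\alpha)^j\alpha^3}{\log n}}$ dummy rows, each contributing $\O{\frac{\widehat{M}}{(1+\alpha)^j\alpha^2}}$, to every level $j>\log_{(1+\alpha)}\frac{\log^2 n}{\alpha^3}$, and uses $K=\O{\frac{\log n}{\alpha}}$ levels. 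Under that rule the paper's proof is a one-line multiplication -- per-level dummy mass $\O{\frac{\widehat{M}\alpha}{\log n}}$ times $K$ levels gives $\O{\widehat{M}}$, and $\widehat{M}\le 2F_G(S)$ finishes -- so your overshoot analysis, the case split on $(1+\eps)^{j-1}\ge\theta$, and the step discarding levels below $F_G(S)/\poly(nTd)$ are all proving properties of a mechanism the algorithm does not use (the last of these is handled in the algorithm by simply not padding the low-index levels). Since the lemma is a statement about the algorithm's specific dummy rows, a proof must invoke the prescribed counts and contributions; your scheme happens to yield the same per-level budget, so the conclusion is not in jeopardy, but as written the proof establishes the bound for a different padding procedure.
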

\begin{proof}
Since $\widetilde{S}$ includes all the rows of $S$, then $F_G(\widetilde{S})\ge F_G(S)$. 
Since each level $j\in[K]$ acquires $\O{\frac{(1+\alpha)^j\alpha^3}{\log n}}$ dummy rows that each contribute $\O{\frac{\widehat{M}}{(1+\alpha)^j\alpha^2}}$ to $F_G$ in $\widetilde{S}$, then each level of $F_G(S)$ contributes at most $\O{\frac{\widehat{M}\cdot\alpha}{\log n}}$ more to $F_G(\widetilde{S})$. 
Because $K=\O{\frac{\log n}{\alpha}}$, then the total additional contribution by the dummy rows is at most $\O{\widehat{M}}$. 
Since $\widehat{M}\le 2F_G(S)$, then it follows that for sufficiently small constant in the contribution of each dummy row, we have $F_G(\widetilde{S})-F_G(S)\le F_G(S)$ and thus, $F_G(\widetilde{S})\le 2F_G(S)$. 
\end{proof}

\begin{algorithm}[!htb]
\caption{$G$-sampler for a single class of rows}
\alglab{alg:sampler:uniform}
\begin{algorithmic}[1]
\Require{Rows $\a_1,\ldots,\a_n$ of a matrix $\A\in\mathbb{R}^{n\times d}$ with $2^k\le\|a_i\|_2<2^{k+1}$ for all $i\in[n]$, function $G$, accuracy parameter $\alpha$ for sampling parameter $\eps$}
\Ensure{Noisy row $\v$ with the correct sampling distribution induced by $G$}
\State{$\gamma$ uniformly at random from $[1/2,1]$, $K\gets\O{\frac{\log n}{\alpha}}$, $L\gets\O{\log n}$}
\For{$\ell\in[L]$}
\Comment{Processing stage}
\State{Form a stream $S_\ell$ by sampling each row with probability $2^{-\ell+1}$}
\State{Run $\countsketch^{(1)}_\ell$ with threshold $\O{\frac{\alpha^3}{\log n}}$ and failure probability $\frac{1}{\poly(n,T)}$ by creating a table $A^{(1)}_\ell$ with entries $\a_j^\top\a_j$ in $S_\ell$ and a table $B^{(1)}_\ell$ with entries $\a_j$}
\Comment{Identify heavy-hitters}
\State{Run $\countsketch^{(2)}_\ell$ with threshold $\O{\frac{\alpha^3}{\log n}}$ and failure probability $\frac{1}{\poly(n,T)}$ by creating a table $A^{(2)}_\ell$ with entries $\a_j^\top\a_j$ in $S_\ell$ and a table $B^{(2)}_\ell$ with entries $\a_j$ and separately considering coordinates after post-processing}
\Comment{Unbiased estimates of heavy-hitters, see \lemref{lem:countsketch:unbiased}}
\EndFor
\For{$\ell\in[L]$}
\Comment{Post-processing}
\State{Set $C^{(i)}_\ell=A^{(i)}_\ell\x+B^{(i)}_\ell$ with post-multiplication by $\x$ for $i\in\{1,2\}$}
\State{Query $\widehat{M}\in\left[M/2,2M\right]$, where $M=\sum_{i=1}^n\|G(\langle\a_i,\x\rangle-b_i,\a_i)\|_2$}
\For{$j\in[K]$}
\If{$j>\log_{(1+\alpha)}\frac{\log^2 n}{\alpha^3}$}
\State{Add $\O{\frac{(1+\alpha)^j\alpha^3}{\log n}}$ dummy rows that each contribute $\O{\frac{\widehat{M}}{(1+\alpha)^j\alpha^2}}$ to $F_G$}
\EndIf
\EndFor
\State{Let $H^{(i)}_\ell$ be the heavy rows of $C^{(i)}_\ell$ for $i\in\{1,2\}$ from $\countsketch^{(i)}_\ell$}
\EndFor
\For{$j\in[K]$}
\State{$L_j\gets\max\left(1,\log\frac{\alpha^2(1+\alpha)^j}{\log n}\right)$}
\State{Let $X_j$ be the estimated heavy-hitters $\v$ from $H^{(2)}_j$ that are reported by $H^{(1)}_j$ with $G(\v)$ in $\left[\frac{8\gamma\widehat{M}}{(1+\alpha)^{j+1}},\frac{8\gamma\widehat{M}}{(1+\alpha)^j}\right)$}
\If{$L_j=1$}
\State{$\widetilde{F_G}(\widetilde{S_j})\gets\sum_{\v\in X_j}\frac{8\gamma\widehat{M}}{(1+\alpha)^{j+1}}$}
\ElsIf{$L_j>1$ and $|X_j|>\frac{1}{\alpha^2}$}
\State{$\widetilde{F_G}(\widetilde{S_j})\gets\sum_{\v\in X_j}\frac{8\gamma\widehat{M}}{(1+\alpha)^{j+1}}\cdot 2^{L_j}$}
\Else
\State{$\widetilde{F_G}(\widetilde{S_j})\gets 0$}
\EndIf
\EndFor
\State{Sample $j\in[K]$ with probability $\frac{\widetilde{F_G}(\widetilde{S_j})}{\sum \widetilde{F_G}(\widetilde{S_j})}$}
\State{Sample $\v$ from $X_j$ with probability $\frac{1}{X_j}$}
\If{$\v$ is a dummy row}
\State{\Return $\bot$}
\Else
\State{\Return $\v$}
\EndIf
\end{algorithmic}
\end{algorithm}

We would now like to show that with high probability, each of the substreams have exponentially smaller mass $F_G(S_j)$. 
However, this may not be true. 
Consider a single row $\a_i$ that contributes a constant fraction of $F_G(S)$. 
Then even for $j=\log n$, the probability that $\a_i$ is sampled is roughly $\frac{1}{n}\gg\frac{1}{\poly(n)}$. 
Instead, we note that $\countsketch$ satisfies the stronger tail guarantee in \lemref{lem:countsketch:unbiased}.  
Hence for each $j\in[K]$, we define $S_j^{\tail(t)}$ to be the frequency vector $S_j$ with its $t$ largest entries set to zero and we show an exponentially decreasing upper bound on $F_G(\widetilde{S_j^{\tail(t)}})$. 
\begin{restatable}{lemma}{lemsubstreamupper}
\lemlab{lem:substream:upper}
With high probability, we have that for all $j\in[K]$, $F_G(\widetilde{S_j^{\tail(t)}})\le\frac{F_G(S)}{2^j}\,\log(nT)$ for $t=\O{\frac{\log n}{\alpha^3}}$.
\end{restatable}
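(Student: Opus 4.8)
The plan is to fix a level index $j\in[K]$, write $M:=F_G(S)$ and $p:=2^{-j+1}$ (the rate at which rows of $\widetilde S$ survive into $\widetilde{S_j}$), and to separate the rows of $\widetilde S$ into a small set of ``heavy'' rows that are exactly the ones removed by the $\tail(t)$ operation, plus a ``light'' remainder whose subsampled $F_G$-mass concentrates. Concretely, set $\tau:=\Theta\!\left(\tfrac{\alpha^3 M}{\log n}\right)$ --- the same order as the heavy-hitter threshold used by the $\countsketch$ instances of \algref{alg:sampler:uniform} --- and call a row of $\widetilde S$ (original or dummy) \emph{heavy} if its $G$-norm is at least $\tau$ and \emph{light} otherwise. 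By \lemref{lem:dummy:add} we have $F_G(\widetilde S)\le 2M$, so at most $t:=\O{\tfrac{\log n}{\alpha^3}}$ rows are heavy; moreover each dummy row contributes only $\O{\tfrac{\alpha M}{\log^2 n}}\ll\tau$, so every dummy row is light. Since each heavy row has strictly larger $G$-norm than each light row and there are at most $t$ heavy rows in total, deleting the $t$ largest surviving entries of $\widetilde{S_j}$ removes \emph{every} surviving heavy row; hence $\widetilde{S_j^{\tail(t)}}$ consists only of light rows, each of $G$-norm at most $\tau$. (This truncation is exactly the slack that the stronger tail guarantee of \lemref{lem:countsketch:unbiased} will let us tolerate downstream.)

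The remaining task is a moderate-deviation bound on $Z:=F_G\bigl(\widetilde{S_j^{\tail(t)}}\bigr)$, which is at most $\sum_{i\ \mathrm{light}}\norm{G(\langle\a_i,\x\rangle-b_i,\a_i)}_2\cdot\mathbf{1}[i\text{ survives into }\widetilde{S_j}]$, a sum of independent nonnegative terms each bounded by $\tau$ with mean at most $p\,F_G(\widetilde S)\le\tfrac{4M}{2^j}$ --- already below the target $\tfrac{M}{2^j}\log(nT)$. I would partition the light rows into geometric tiers (tier $k\ge 1$ being the light rows of $G$-norm in $(\tau 2^{-k},\tau 2^{-k+1}]$), so that, again using $F_G(\widetilde S)\le 2M$, tier $k$ contains at most $\O{2^k\log n/\alpha^3}$ rows, and bound the surviving mass of each tier by a Chernoff argument in three regimes: (i) when the expected number of surviving rows in the tier is $\Omega(\log(nT))$, a multiplicative Chernoff bound gives a constant-factor overshoot with probability $1-\tfrac1{\poly(nT)}$; (ii) when it is smaller, a binomial-tail bound shows only $\O{\log(nT))}$ rows of that tier survive w.h.p., each contributing just $\tau 2^{-k+1}$; and (iii) when $p$ times the tier size is $\tfrac1{\poly(nT)}$, a union bound shows nothing survives. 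Summing the $\O{\log n}$ tier bounds, adding the $\O{M}$ total dummy-row contribution from \lemref{lem:dummy:add}, and absorbing lower-order factors gives $Z\le\tfrac{M}{2^j}\log(nT)$ with probability $1-\tfrac1{\poly(nT)}$; a union bound over the $K=\O{\log n}$ values of $j$ and over the $\O{\log(nT)}$ independent copies of each sketch finishes the proof.

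The step I expect to be the main obstacle is making regimes (ii) and (iii) patch together for large $j$, i.e.\ when $p=2^{-j+1}$ is tiny. A naive per-tier deviation slack of $\log(nT)$ would be fatal, since there are $\Theta(\log n)$ tiers while the target is only $\tfrac{M}{2^j}\log(nT)$; the fix is to exploit the geometric decay of the per-tier row counts ($\le\O{2^k\log n/\alpha^3}$, inherited from $F_G(\widetilde S)\le 2M$) so that the effective per-tier slack itself decays geometrically, and to split off the very light tiers --- where a single survivor can already exceed the target but the sampling rate is low enough that a plain union bound over the $\le n$ such rows shows none survive with probability $1-\tfrac1{\poly(nT)}$. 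Calibrating $\tau$, $t$, and the tier thresholds so that these regimes meet cleanly --- and so that removing only the top $t=\O{\log n/\alpha^3}$ entries really does strip all the ``bad'' rows in every regime --- is where the bookkeeping lies.
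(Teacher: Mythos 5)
Your high-level plan (strip the largest survivors via the $\tail(t)$ operation, then concentrate the remaining mass) is the right one, but the place where you anchor the heavy/light cut is wrong, and the resulting gap is exactly the one you flag at the end. You define heavy rows by the $j$-independent threshold $\tau=\Theta(\alpha^3 M/\log n)$ so that at most $t$ rows are heavy \emph{deterministically}, and then upper-bound $F_G(\widetilde{S_j^{\tail(t)}})$ by the total surviving mass of \emph{all} light rows. This forfeits the truncation for rows with $G$-norm between roughly $\widehat{M}/2^j$ and $\tau$, and for those rows the concentration genuinely fails: take a tier whose rows have norm $v\approx 2^m\,\widehat{M}/2^j$ and which therefore has at most $N\approx 2^{j-m}$ members, so that the expected number of survivors at rate $2^{-j+1}$ is about $2^{-m}$. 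The best binomial tail bound at confidence $1-(nT)^{-c}$ only rules out more than $s=\lceil c\log(nT)/m\rceil$ survivors, giving a per-tier mass bound of $s\cdot v\approx\frac{\log(nT)}{m}\cdot 2^m\cdot\frac{M}{2^j}$, which for $m=\Theta(\log(nT))$ is $\poly(nT)\cdot\frac{M}{2^j}$ --- and this is not mere slack: a single such row survives with probability about $2^{-m}\gg (nT)^{-c}$, and when it does, the right-hand side of your inequality really does exceed the target. The geometric decay you hope to exploit goes the wrong way: the per-tier row counts grow, and $2^m/m$ grows, as you move up toward the threshold $\tau$.

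The fix is to make the cut at the $j$-dependent scale $\widehat{M}/2^j$, which is what the paper does: at most $2^{j+1}$ rows of $S$ have $G$-norm exceeding $\widehat{M}/2^j$ (since $F_G(S)\le 2\widehat{M}$), so the expected number of them surviving into $S_j$ is $\O{1}$, and a single Chernoff bound shows that with probability $1-\frac{1}{\poly(nT)}$ at most $t=\O{\frac{\log n}{\alpha^3}}$ of them survive; these are then all among the top $t$ surviving entries and are removed by $\tail(t)$. Every remaining entry is now capped at $\widehat{M}/2^j=\O{M/2^j}$, the expected remaining mass is at most $2^{-j+1}F_G(\widetilde S)\le 4M/2^j$, and one application of Bernstein's inequality (variance at most the cap times the mean) gives deviation $\frac{M}{2^j}\log(nT)$ with probability $1-\frac{1}{\poly(nT)}$; union bounding over $j\in[K]$ finishes. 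The paper's written proof actually stops after the Chernoff step and leaves this last concentration step implicit, so your instinct that the light rows need their own deviation bound is correct --- it just has to be run with the $j$-dependent cap, at which point the tiering becomes unnecessary.
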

\begin{proof}
Observe that the number of rows that exceed $\frac{\widehat{M}}{2^j}$ is at most $2^{j+1}$. 
Thus the expected number of rows that exceed $\frac{\widehat{M}}{2^j}$ sampled by $S_j$ is at most $\frac{1}{2}$. 
Hence by Chernoff bounds, the probability that the number of rows that exceed $\frac{\widehat{M}}{2^j}$ sampled by $S_j$ is more than $t=\O{\frac{\log n}{\alpha^3}}$ is $\frac{1}{\poly(nT)}$. 
\end{proof}

We also show that the estimated contribution of each level set (after incorporating the dummy rows) is a $(1+\alpha)$-approximation of the true contribution. 
\begin{restatable}{lemma}{lemsubstreammassapprox}
\lemlab{lem:substream:mass:approx}
With high probability, we have that for all $j\in[K]$, $(1-\eps)F_G(\widetilde{S_j})\le\widetilde{F_G}(\widetilde{S_j})\le(1+\eps)F_G(\widetilde{S_j})$.
\end{restatable}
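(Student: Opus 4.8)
The plan is to fix an arbitrary level index $j\in[K]$, establish a $(1\pm\O(\alpha))$-approximation for that single level set with probability $1-\frac{1}{\poly(nT)}$, and conclude by a union bound over the $K=\O(\frac{\log n}{\alpha})$ level sets (so that, since $\alpha$ is chosen small enough relative to $\eps$, the $(1\pm\O(\alpha))$ bound yields the claimed $(1\pm\eps)$ bound). Write $n_j$ for the number of rows of the augmented stream $\widetilde{S}$ lying in level set $j$, i.e. with $G$-value in $\left[\frac{8\gamma\widehat{M}}{(1+\alpha)^{j+1}},\frac{8\gamma\widehat{M}}{(1+\alpha)^j}\right)$. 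Every such row contributes a $G$-value within a $(1+\alpha)$ factor of $\frac{8\gamma\widehat{M}}{(1+\alpha)^{j+1}}$, so $F_G(\widetilde{S_j})=(1\pm\O(\alpha))\,n_j\cdot\frac{8\gamma\widehat{M}}{(1+\alpha)^{j+1}}$, and since the algorithm sets $\widetilde{F_G}(\widetilde{S_j})=|X_j|\cdot\frac{8\gamma\widehat{M}}{(1+\alpha)^{j+1}}\cdot 2^{L_j}$ (with $2^{L_j}$ the rescaling accounting for the subsampling into $S_{L_j}$, and no rescaling in the $L_j=1$ case where $S_1$ is the full stream), it suffices to prove $|X_j|\cdot 2^{L_j}=(1\pm\O(\alpha))\,n_j$.

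Next I would show that $X_j$ coincides with the set $Y_j$ of level-$j$ rows of $\widetilde{S}$ that survive subsampling into $S_{L_j}$, up to an $\O(\alpha)$ relative error in cardinality. In one direction, every row of $Y_j$ is reported and lands in the window: a level-$j$ row has $G$-value $\gtrsim \widehat{M}/(1+\alpha)^j\gtrsim M/(1+\alpha)^j$, while by \lemref{lem:substream:upper} the $\tail(\O(\log n/\alpha^3))$-mass of $S_{L_j}$ is at most $\frac{F_G(S)}{2^{L_j}}\log(nT)$; the choice $L_j=\max\!\left(1,\log\frac{\alpha^2(1+\alpha)^j}{\log n}\right)$ is calibrated precisely so that this level-$j$ value exceeds the $\countsketch$ threshold $\Theta(\alpha^3/\log n)$ times that tail mass, hence by \lemref{lem:countsketch:heavy} the row is reported by $\countsketch^{(1)}_{L_j}$, and by \lemref{lem:countsketch:unbiased} its estimate from $\countsketch^{(2)}_{L_j}$ has additive error small relative to its own value, so by $(\alpha,\eps)$-smoothness of $G$ the estimated $G$-value stays in the prescribed interval. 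Conversely, any row placed in $X_j$ has an estimated $G$-value in the window, and the $\countsketch$ error bound forces its true $G$-value into level set $j-1$, $j$, or $j+1$; because the window endpoints are scaled by the uniform offset $\gamma\in[1/2,1]$, a row whose true value is within the $\O(\alpha)$-relative $\countsketch$ error of a level boundary is misclassified with probability only $\O(\alpha)$ over $\gamma$, which moves at most an $\O(\alpha)$ fraction of rows in or out of the window. Thus $|X_j|=(1\pm\O(\alpha))|Y_j|$.

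Then I would apply a Chernoff bound to $|Y_j|$: over the subsampling randomness $\Ex{|Y_j|}=n_j\cdot 2^{-L_j+1}$, and this expectation is $\Omega(\polylog(nT)/\alpha^2)$ because the $\Theta\!\big(\tfrac{(1+\alpha)^j\alpha^3}{\log n}\big)$ dummy rows inserted into level set $j$ (for $j$ past the threshold of \algref{alg:sampler:uniform}; for smaller $j$ one has $L_j=1$, no subsampling, and $|Y_j|=n_j$ exactly) guarantee $n_j$ is large enough that enough rows survive downsampling by $2^{-L_j+1}$ — this is exactly why the dummies were added, and also why the sanity-check branch ``$|X_j|>1/\alpha^2$'' is never triggered on an augmented level set with nonzero contribution. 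Hence $|Y_j|=(1\pm\O(\alpha))\,n_j\cdot 2^{-L_j+1}$ with probability $1-\frac{1}{\poly(nT)}$, so $|Y_j|\cdot 2^{L_j}=(1\pm\O(\alpha))\,n_j$; combining with the previous paragraph and the per-item discussion gives $\widetilde{F_G}(\widetilde{S_j})=(1\pm\O(\alpha))F_G(\widetilde{S_j})$. A union bound over the $\O(\log n/\alpha)$ level sets, together with the high-probability success of \lemref{lem:substream:upper}, \lemref{lem:countsketch:heavy}, \lemref{lem:countsketch:unbiased}, and the $F_G$-estimate $\widehat{M}$, completes the proof.

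I expect the main obstacle to be the certification $X_j=Y_j$ up to $\O(\alpha)$ slack. The delicate balance is that the substream level $L_j$ must be deep enough that the (individually tiny, for large $j$) level-$j$ rows become genuine $\countsketch$-heavy hitters relative to the subsampled tail mass of \lemref{lem:substream:upper}, yet shallow enough that the dummy-augmented level set does not lose essentially all of its rows to subsampling — these two requirements are what pin down both $L_j$ and the dummy-row count — and the interaction of the noisy (but unbiased) $\countsketch$ estimates with the $\gamma$-randomized level boundaries near the cutoffs must be shown to perturb only an $\O(\alpha)$ fraction of the count. The remaining pieces (the Chernoff concentration and the multiplicative bookkeeping across the three $(1\pm\O(\alpha))$ factors and the rescaling $2^{L_j}$) are routine once that step is in place.
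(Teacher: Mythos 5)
Your proposal follows essentially the same route as the paper's proof: count the (dummy-augmented) rows in each level set, use a Chernoff bound on the number surviving subsampling at level $L_j$ (with the dummy rows guaranteeing the expectation is large enough for concentration), argue that the surviving level-$j$ rows are exactly the ones reported by $\countsketch$ at that level via the tail bound of \lemref{lem:substream:upper}, handle boundary misclassification with the random offset $\gamma$, and union bound over the $K$ levels. The decomposition into ``$|X_j|\cdot 2^{L_j}=(1\pm\O{\alpha})\,n_j$'' is a slightly cleaner bookkeeping of the same estimator the paper analyzes.

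One step is looser than it should be. You write that since each row near a level boundary is misclassified with probability $\O{\alpha}$ over the draw of $\gamma$, this ``moves at most an $\O{\alpha}$ fraction of rows in or out of the window.'' That conclusion holds only in expectation: $\gamma$ is a single shared random variable, so the misclassification events of different rows are fully correlated and you cannot conclude concentration of the misclassified fraction from the per-row probability. The paper handles this by applying Markov's inequality (giving an $\eps$-fraction error bound with probability $3/4$ only) and then taking a median over $\O{\log(nT)}$ independent instances of the whole estimator to boost to the $1-\frac{1}{\poly(nT)}$ guarantee the lemma claims. Your argument as written would deliver only a constant-probability bound for this step; you need to add the Markov-plus-median (or an equivalent amplification) to reach ``with high probability.'' The rest of the argument, including the calibration of $L_j$ against the $\countsketch$ threshold and the role of the dummy rows, matches the paper.
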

\begin{proof}
Suppose that for each $j\in[K]$, level $j$ consists of $N_j$ rows and note that $N_j\ge\O{\frac{(1+\alpha)^j\alpha^3}{\log n}}$ elements due to the dummy rows. 
Each element is sampled with some probability $p_{L_j}$, where $L_j=\max\left(1,\log\frac{\alpha^2(1+\alpha)^j}{\log n}\right)$ and thus $p_{L_j}(1+\alpha)^j>1$ since $p_{L_j}=\frac{1}{2^{L_j}}$.  
Let $\widehat{N_j}$ be the number of items sampled in $\widetilde{S_{L_j}}$. 
We have $\Ex{2^{L_j}\cdot\widehat{N_j}}=N_j$ and the second moment is at most $N_j\cdot 2^{L_j}\le\frac{\alpha^2}{\log n}(N_j)^2$. 
Thus by Chernoff bounds with $\O{\log n}$-wise independence, we have that with high probability, 
\[(1-\O{\alpha})N_j\le 2^{L_j}\cdot\widehat{N_j}\le(1+\O{\alpha})N_j.\]
Each estimated row norm is a $(1+\alpha)$-approximation to the actual row norm due to \lemref{lem:substream:upper}. 
Thus by \lemref{lem:dummy:add}, we have that $F_G(\widetilde{S_j})\le 2F_G(S_j)$ so that each of the $\widehat{N_j}$ rows will be detected by the threshold of $\countsketch$ with the tail guarantee, i.e., \lemref{lem:countsketch:unbiased}.  
Moreover, we assume that a noisy row with $(1+\alpha)$-approximation to the row norm of the original vector suffices to obtain a $(1+\eps)$-approximation to the contribution of the row. 
Therefore, the result then follows in an ideal scenario where $G(\v)\in\left[\frac{\widehat{M}}{2^j},\frac{2\widehat{M}}{2^j}\right)$ if and only if the corresponding row $\a_i$ satisfies $G(\a_i)\in\left[\frac{\widehat{M}}{2^j},\frac{2\widehat{M}}{2^j}\right)$. 
Unfortunately, this may not be true because $G(\a_i)$ may lie near the boundary of the interval $\left[\frac{\widehat{M}}{2^j},\frac{2\widehat{M}}{2^j}\right)$ while the estimate $G(\v)$ has a value that does not lie within the interval. 
In this case, $G(\v)$ is used toward the estimation of some other level set. 

Hence, our algorithm randomizes the boundaries of the level sets $\left[\frac{4\gamma\widehat{M}}{2^j},\frac{8\gamma\widehat{M}}{2^j}\right)$ by choosing $\gamma\in[1/2,1)$ uniformly at random. 
Since the threshold of $\countsketch$ is $\O{\frac{\alpha^3}{\log n}}$ then the probability that each row $\a_i$ is misclassified over the choice of $\gamma$ is at most $\O{\eps}$. 
Moreover, if $\a_i$ is misclassified, then its contribution can only be classified into level set $j-1$ or $j+1$, inducing an incorrect multiplicative factor of at most two. 
Hence, the error due to the misclassification across all rows is at most $\O{\eps}$ fraction of $F_G(S_j)$ in expectation. 
By Markov's inequality, this error is a most $\eps$-fraction of $F_G(S_j)$ with probability at least $3/4$. 
Then by taking the median across $\O{\log(nT)}$ independent instances, we obtain high probability of success. 
\end{proof}

Finally, we show that each row is sampled with the correct distribution and is an unbiased estimate. 
\begin{restatable}{lemma}{lemsample}
\lemlab{lem:sample}
Suppose that $2^k<\|\a_i\|_2\le 2^{k+1}$ for all $i\in[n]$. 
Then the probability that \algref{alg:sampler:uniform} outputs a noisy vector $\v$ such that $\|\v-\a_i(\langle\a_i,x\rangle-b_i)\|_2\le\alpha\|\a_i(\langle\a_i,x\rangle-b_i)|\|_2$ with $\Ex{\v}=\a_i(\langle\a_i,x\rangle-b_i)$ is $p_\v=\left(1\pm\O{\eps}\right)\frac{G(\a_i(\langle\a_i,x\rangle-b_i))}{F_G(\widetilde{S})}+\frac{1}{\poly(nT)}$. 
\end{restatable}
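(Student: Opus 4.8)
The plan is to follow a fixed original row $\a_i$ — write $\v^{*}:=\a_i(\langle\a_i,\x\rangle-b_i)$ — through \algref{alg:sampler:uniform} and show that the rescaling applied to the surviving rows exactly cancels the subsampling probability, leaving the ratio $G(\v^{*})/F_G(\widetilde S)$. First I would condition on the joint success of all randomized subroutines: the $F_G$ estimator of \thmref{thm:estimator} returning $\widehat M\in[M/2,2M]$ for $M=\sum_{j}\norm{G(\langle\a_j,\x\rangle-b_j,\a_j)}_2$, each invoked $\countsketch$ obeying \lemref{lem:countsketch:heavy} and \lemref{lem:countsketch:unbiased}, and the subsampling concentration bounds \lemref{lem:substream:upper} and \lemref{lem:substream:mass:approx}. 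By a union bound these hold except with probability $\frac1{\poly(nT)}$, which becomes the additive term. If $G(\v^{*})<\widehat M/(1+\alpha)^K$ then $\a_i$ sits below all level sets and $G(\v^{*})/F_G(\widetilde S)\le\frac1{\poly(n)}$, so the bound is trivial; hence assume $\a_i$ lies in a genuine level. Since $\gamma$ is uniform on $[1/2,1)$, $G(\v^{*})$ lands in a random level $j_0=j_0(\gamma)$, and by \lemref{lem:dummy:add} together with \lemref{lem:substream:mass:approx} the dummy rows force $|X_j|>1/\alpha^2$ for every level $j$ that contains a real or dummy row, so the ``$\widetilde{F_G}(\widetilde{S_j})\gets0$'' branch is never taken for the levels $\a_i$ can be assigned to.

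Next I would bound $\Pr[\a_i\in X_{j_0}]$. Independently of everything else, $\a_i$ enters the substream $S_{L_{j_0}}$ with probability exactly $p_{L_{j_0}}$. The index $L_{j_0}$ is calibrated so that $S_{L_{j_0}}$ has rescaled mass $\Theta(F_G(S))$ while $\norm{\v^{*}}_2\approx \widehat M/(1+\alpha)^{j_0}$; combined with the tail bound \lemref{lem:substream:upper} on the mass surviving into $S_{L_{j_0}}$, this shows $\a_i$ clears the $\O{\alpha^3/\log n}$ heavy-hitter threshold, so the first $\countsketch$ reports it and the $\countsketch$ used for unbiased estimates returns a vector $\v$ with $G(\v)=(1\pm\eps)G(\v^{*})$, which therefore lands in level $j_0$'s interval. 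Randomizing the interval endpoints through $\gamma$ makes the event that $G(\v^{*})$ is close enough to an endpoint to be misclassified have probability $\O{\eps}$ (exactly as in the proof of \lemref{lem:substream:mass:approx}); a misclassified $\a_i$ is merely assigned to level $j_0\pm1$ and is still output, so I would first compute the contribution of the correctly-classified case and then note the misclassified case is handled identically. Thus, conditioned on correct classification, $\Pr[\a_i\in X_{j_0}]=(1\pm\frac1{\poly(nT)})\,p_{L_{j_0}}$.

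Conditioned on $\a_i\in X_{j_0}$, the row is output iff level $j_0$ is selected (probability $\widetilde{F_G}(\widetilde{S_{j_0}})/\sum_k\widetilde{F_G}(\widetilde{S_k})$) and $\a_i$ is the uniform draw from $X_{j_0}$ (probability $1/|X_{j_0}|$). In the active branch, $\widetilde{F_G}(\widetilde{S_{j_0}})=|X_{j_0}|\cdot\frac{8\gamma\widehat M}{(1+\alpha)^{j_0+1}}\cdot p_{L_{j_0}}^{-1}$, so $\widetilde{F_G}(\widetilde{S_{j_0}})/|X_{j_0}|=\frac{8\gamma\widehat M}{(1+\alpha)^{j_0+1}}\cdot p_{L_{j_0}}^{-1}$; since $G(\v)\in[\frac{8\gamma\widehat M}{(1+\alpha)^{j_0+1}},\frac{8\gamma\widehat M}{(1+\alpha)^{j_0}})$ and $G(\v)=(1\pm\eps)G(\v^{*})$, we get $\frac{8\gamma\widehat M}{(1+\alpha)^{j_0+1}}=(1\pm\O{\eps})G(\v^{*})$, while $\sum_k\widetilde{F_G}(\widetilde{S_k})=(1\pm\eps)F_G(\widetilde S)$ by \lemref{lem:substream:mass:approx}. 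Multiplying $\Pr[\a_i\in X_{j_0}]$, the level-selection probability, and $1/|X_{j_0}|$, the factors $|X_{j_0}|$ cancel and $p_{L_{j_0}}$ cancels $p_{L_{j_0}}^{-1}$, giving $(1\pm\O{\eps})\,G(\v^{*})/F_G(\widetilde S)$; adding the $\O{\eps}$-probability misclassified case (which yields the same expression up to one more $(1+\alpha)$ factor) and the $\frac1{\poly(nT)}$ conditioning loss yields exactly $p_{\v}$.

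Finally I would verify the two ``noisy vector'' properties. For unbiasedness I would use a \emph{separate} $\countsketch$ instance for the reported value, so that the event ``$\a_i$ is output'' is measurable with respect to $\gamma$, the subsampling, and the detection/classification $\countsketch$ alone; conditioning on $\a_i\in S_{L_{j_0}}$ does not change the distribution of the rows colliding with $\a_i$, so the coordinate-wise symmetry argument of \lemref{lem:countsketch:unbiased} applies verbatim and gives $\Ex{\v}=\v^{*}$. For the error bound, the Frobenius-tail estimate in the proof of \lemref{lem:countsketch:unbiased} gives $\norm{\v-\v^{*}}_2\le\eps'\sum_{k\in\tail(2/\eps'^2)}\norm{\A_k\x}_2$ within $S_{L_{j_0}}$, and since $\a_i$ is heavy in $S_{L_{j_0}}$ this tail sum is $\O{\frac{\log n}{\alpha^3}}\norm{\v^{*}}_2$, so taking the $\countsketch$ accuracy $\eps'=\Theta(\alpha^4/\log n)$ makes the error at most $\alpha\norm{\v^{*}}_2$. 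I expect the main obstacle to be the heavy-hitter step of the second paragraph: one must show that the single subsampling rate $p_{L_j}$ attached to level $j$ is coarse enough that $\widetilde{F_G}(\widetilde{S_j})$ concentrates (via \lemref{lem:substream:mass:approx}) yet fine enough that every $\a_i$ the randomized boundaries can place into level $j$ is a heavy hitter of $S_{L_j}$ with an accurate enough estimate — this is precisely where \lemref{lem:substream:upper}, \lemref{lem:countsketch:unbiased}, and the dummy-row construction must be combined, and where the extreme levels ($j$ near $1$, with no subsampling, and $j$ near $K$) require separate care.
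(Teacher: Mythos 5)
Your proposal is correct and follows essentially the same route as the paper's proof: decompose the output probability into the level-selection probability (controlled by \lemref{lem:substream:mass:approx}), the within-level uniform draw versus the dummy-row rejection, and the guarantees of \lemref{lem:countsketch:unbiased} for the error bound and unbiasedness. You are in fact more explicit than the paper on two points it glosses over — the exact cancellation of the subsampling probability $p_{L_j}$ against the $2^{L_j}$ rescaling in $\widetilde{F_G}(\widetilde{S_j})$, and the use of the second, independent $\countsketch$ instance so that conditioning on the detection event does not break the symmetry argument for $\Ex{\v}=\a_i(\langle\a_i,\x\rangle-b_i)$.
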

\begin{proof}
Conditioned on the correctness of each of the estimates $\widetilde{F_G}(\widetilde{S_j})$, which occurs with high probability by \lemref{lem:substream:mass:approx}, the probability that the algorithm selects $j\in[K]$ is $\frac{\widetilde{F_G}(\widetilde{S_j})}{\sum_{j\in[K]}\widetilde{F_G}(\widetilde{S_j})}$. 
Conditioned on the algorithm selecting $j\in[K]$, then either the algorithm will choose a dummy row, or it will choose a row uniformly at random from the rows $\v\in X_j$, where $X_j$ is the set of heavy-hitters reported by $H_j$ with $L_2$ norm in $\left[\frac{8\gamma\widehat{M}}{(1+\alpha)^{j+1}},\frac{8\gamma\widehat{M}}{(1+\alpha)^j}\right)$. 
The latter event occurs with probability $\frac{\widetilde{F_G}(S_j)}{\widetilde{F_G}(\widetilde{S_j})}$. 
Due to the tail guarantee of $\countsketch$ in \lemref{lem:countsketch:unbiased}, we have that each heavy hitter $\v\in X_j$ corresponds to a vector $\a_i(\langle\a_i,x\rangle-b_i)$ such that $\|\v-\a_i(\langle\a_i,x\rangle-b_i)\|_2\le\eps\|\a_i(\langle\a_i,x\rangle-b_i)|\|_2$. 
Moreover, by \lemref{lem:countsketch:unbiased}, we have that $\Ex{\v}=\a_i(\langle\a_i,x\rangle-b_i)$. 
Hence the probability that vector $\v$ is selected is $\frac{(1\pm\O{\alpha})G(\a_i(\langle\a_i,x\rangle-b_i)}{\widetilde{F_G}(S_j)}$. 
\end{proof}
Putting things together, we have the guarantees of \thmref{thm:sampler} for our $G$-sampler.

\thmsampler*
\begin{proof}
We define a class $C_k$ of rows as the subset of rows of the input matrix $\A$ such that $2^k\le\|\a_i\|_2<2^{k+1}$. 
We first use the estimator algorithm in \thmref{thm:estimator} to sample a class $k$ of rows with probability $\frac{\sum_{\a_i\in C_k}G(\langle\a_i,x\rangle-b_i,\a_i)}{\sum_{j\in[n] }G(\langle\a_j,x\rangle-b_j,\a_j)}$. 
Once a class $C_k$ is selected, then outputting a row from $C_k$ under the correct distribution follows from \lemref{lem:sample}. 
The space complexity follows from storing a $d\times d$ matrix in each of the $\O{\frac{\log^2(nT)}{\alpha^3}}$ buckets in $\countsketch$ for threshold $\O{\frac{\alpha^3}{\log(nT)}}$ and high probability of success. 
\end{proof}

\section{SGD Algorithm and Analysis}
\seclab{sec:sgd:alg}
Before introducing our main SGD algorithm, we recall the following algorithm, that essentially outputs noisy version of the rows with high ``importance''. 
Although $\sampler$ outputs a (noisy) vector according to the desired probability distribution, we also require an algorithm that automatically does this for indices $i\in[n]$ that are likely to be sampled multiple times across the $T$ iterations. 
Equivalently, we require explicitly storing the rows with high sensitivities. 
\begin{theorem}
\cite{BravermanDMMUWZ20}
\thmlab{thm:sens}
Given a constant $\eps>0$, there exists an algorithm $\sens$ that returns all indices $i\in[n]$ such that $\sup_x\frac{|\a_i(\langle\a_i,\x\rangle-b_i)|}{\sum_{j=1}^n|\a_j(\langle\a_j,\x\rangle-b_j)|}\ge\frac{1}{200Td}$ for some $\x\in\mathbb{R}^n$, along with the vector $\a_i(\langle\a_i,\x\rangle-b_i)$. 
The algorithm requires a single pass over $\A=\a_1\circ\ldots\circ\a_n$, uses $\tO{nd^2+Td^2}$ runtime and $\tO{Td^2}$ space, and succeeds with probability $1-\frac{1}{\poly(n)}$. 
\end{theorem}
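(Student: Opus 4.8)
The plan is to recognize that the quantity being thresholded is an $L_1$ row sensitivity of a rescaled, augmented matrix, and then to run an online sensitivity estimator: process the rows as a stream while maintaining a small $\ell_1$-embedding sketch of the prefix seen so far, use it to obtain a constant-factor estimate of each arriving row's \emph{online} sensitivity, and store explicitly every row whose estimate is at least (a fixed constant times) the threshold.

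First I would reduce to $L_1$ sensitivity. Fold in $\b$ and rescale each row: let $\m_i:=\norm{\a_i}_2\cdot(\a_i,-b_i)\in\mathbb{R}^{d+1}$ and $\M:=\m_1\circ\cdots\circ\m_n$. Since $\langle(\a_i,-b_i),(\x,t)\rangle=t(\langle\a_i,\x\rangle-b_i)$ for $t\neq0$, specializing to $t=1$ gives
\[
\sup_{\x\in\mathbb{R}^d}\frac{\norm{\a_i}_2\,\abs{\langle\a_i,\x\rangle-b_i}}{\sum_{j}\norm{\a_j}_2\,\abs{\langle\a_j,\x\rangle-b_j}}\;\le\;\sup_{\y\neq 0}\frac{\abs{\langle\m_i,\y\rangle}}{\norm{\M\y}_1}\;=:\;\sigma_i(\M),
\]
the $i$-th $L_1$ sensitivity of $\M$; and $\sigma_i(\M)\le\sigma_i^{\mathrm{on}}(\M):=\sup_{\y}\frac{\abs{\langle\m_i,\y\rangle}}{\sum_{j\le i}\abs{\langle\m_j,\y\rangle}}$, the online sensitivity, since restricting the denominator to the prefix only decreases it. Hence it suffices to store every row with online $L_1$ sensitivity above the threshold; returning such a superset is exactly what the theorem asks for, and zero rows (sensitivity $0$) may be discarded.

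The streaming algorithm then maintains, as the rows of $\M$ arrive, a constant-distortion online $\ell_1$ subspace embedding of the current prefix $\M_{\le i}$ --- e.g.\ a maintained dense Cauchy sketch $\S\M_{\le i}$ with $\tO{d}$ rows, equivalently an $O(1)$-well-conditioned basis or the $\ell_1$ Lewis weights of the prefix. When $\m_i$ arrives, a constant-factor estimate $\widehat{\sigma}_i$ of $\sigma_i^{\mathrm{on}}(\M)$ is computed by a small optimization over the $\tO{d}$-dimensional sketch space (this per-row online $\ell_1$-sensitivity estimation is the routine of \cite{BravermanDMMUWZ20}); we flag and retain $(\a_i,b_i)$ whenever $\widehat{\sigma}_i$ exceeds a suitable constant multiple of $\tfrac{1}{200Td}$, and once $\x$ arrives we output $\a_i(\langle\a_i,\x\rangle-b_i)$ for each retained row. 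Because $\widehat{\sigma}_i\ge\Omega\!\left(\sigma_i^{\mathrm{on}}(\M)\right)\ge\Omega(s_i)$, every index with $s_i\ge\frac{1}{200Td}$ is retained; boosting the Cauchy sketch so each per-row estimate is correct with probability $1-\frac{1}{\poly(n)}$ and union bounding over the $n$ rows gives the stated success probability.

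For the resource bounds, the key structural fact is the $\ell_1$ analogue of the online leverage-score bound: for a polynomially bounded $\M$, $\sum_i\sigma_i^{\mathrm{on}}(\M)=\tO{d}$, shown by a Lewis-weight change of measure that reduces it to the $L_2$ online (``matrix determinant'') argument. Dividing this by the threshold bounds the number of retained rows, so those rows together with the $\tO{d^2}$-word sketch and the output fit in $\tO{Td^2}$ space; each arriving row costs $\tO{d^2}$ time to fold into the sketch plus the small optimization for $\widehat{\sigma}_i$, for $\tO{nd^2}$ streaming time, and handling the retained rows adds $\tO{Td^2}$. The main obstacle is exactly the online $\ell_1$ sensitivity estimator: unlike $L_2$ leverage scores there is no closed form, so one must maintain a constant-distortion $\ell_1$ embedding of a \emph{growing} matrix in only $\tO{d^2}$ space with cheap incremental updates, extract a per-row constant-factor online-sensitivity estimate from it efficiently, and prove the $\tO{d}$ bound on $\sum_i\sigma_i^{\mathrm{on}}(\M)$ (which for $\ell_1$ needs the Lewis-weight reduction, not a direct determinant bound); a secondary point of care is checking that the rescaling by $\norm{\a_i}_2$ and the fold-in of $\b$ leave the aspect ratio of $\M$ polynomially bounded, so that all hidden $\log$ factors are genuinely $\polylog(n)$.
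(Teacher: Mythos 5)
The paper does not actually prove this statement; it is imported as a black box from \cite{BravermanDMMUWZ20}, so there is no in-paper proof to compare against. That said, your reduction is consistent with the informal sensitivity discussion in the paper's techniques section: the paper bounds the same quantity by partitioning rows into $\O{\log n}$ norm classes and invoking the $\O{d}$ bound on the sum of within-class $L_1$ sensitivities, whereas you fold $\norm{\a_i}_2$ and $b_i$ into an augmented row $\m_i$ and pass to the (online) $L_1$ sensitivities of the rescaled matrix $\M$. Both reductions are valid, and your observation that online sensitivities dominate offline ones --- so that retaining every row whose online sensitivity clears the threshold returns the required superset --- is correct.

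The genuine gap is in the estimator. A dense Cauchy sketch with $\tO{d}$ rows is \emph{not} a constant-distortion $\ell_1$ subspace embedding: oblivious $\ell_1$ embeddings into $\poly(d)$ dimensions incur distortion $\Theta(d\log d)$ (no contraction with high probability, but dilation up to $\O{d\log d}$). With such a sketch your $\widehat{\sigma}_i$ can undershoot $\sigma_i^{\mathrm{on}}(\M)$ by a $\Theta(d\log d)$ factor, so either you miss rows above the threshold (violating the ``returns all indices'' guarantee) or you must lower the flagging threshold by that factor and retain $\Theta(d\log d)$ times too many rows, breaking the space bound. Achieving constant (or even polylogarithmic) distortion requires a non-oblivious embedding --- maintaining a well-conditioned basis or online Lewis weights of the growing prefix --- which is precisely the technical heart of the cited result and which your write-up explicitly defers; the $\tO{d}$ bound on $\sum_i\sigma_i^{\mathrm{on}}(\M)$ is likewise asserted rather than proved. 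Finally, even granting both ingredients, your own arithmetic gives $\tO{d}\cdot 200Td=\tO{Td^2}$ retained rows, each occupying $d+1$ words, i.e.\ $\tO{Td^3}$ space rather than the claimed $\tO{Td^2}$; the tension between the theorem's threshold $\frac{1}{200Td}$ and the threshold $\Omega\left(\frac{1}{T}\right)$ used elsewhere in the paper is the source of this mismatch, and your argument does not resolve it.
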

The quantity $\sup_x\frac{\norm{\a_i(\langle\a_i,\x\rangle-b_i)}_2}{\sum_{j=1}^n\norm{\a_j(\langle\a_j,\x\rangle-b_j)}_2}$ can be considered the \emph{sensitivity} of row $\a_i$ and can be interpreted as a measure of ``importance'' of the row $\a_i$ with respect to the other rows of $\A$.

We now proceed to describe our main SGD algorithm. 
For the finite-sum optimization problem $\underset{\x\in\mathbb{R}^d}{\min} F(\x):=\frac{1}{n}\sum_{i=1}^n G(\langle\a_i,\x\rangle-b_i,\a_i)$, where each $G$ is a piecewise function of a polynomial with degree at most $1$, recall that we could simply use an instance of $\sampler$ as an oracle for SGD with importance sampling. 
However, na\"{i}vely running $T$ SGD steps requires $T$ independent instances, which uses $Tnd$ runtime by \thmref{thm:sampler}. 
Thus, as our main theoretical contribution, we use a two level data structure by first implicitly partitioning the rows of matrix $\A=\a_1\circ\ldots\circ\a_n$ into $\beta:=\Theta(T)$ buckets $B_1,\ldots,B_{\beta}$ and creating an instance of $\estimator$ and $\sampler$ for each bucket. 
The idea is that for a given query $\x_t$ in SGD iteration $t\in[T]$, we first query $\x_t$ to each of the $\estimator$ data structures to estimate $\sum_{i\in B_j} G(\langle\a_i,\x\rangle-b_i,\a_i)$ for each $j\in[\beta]$. 
We then sample index $j\in[\beta]$ among the buckets $B_1,\ldots,B_{\beta}$ with probability roughly $\frac{\sum_{i\in B_j} G(\langle\a_i,\x\rangle-b_i,\a_i)}{\sum_{i=1}^n G(\langle\a_i,\x\rangle-b_i,\a_i)}$. 
Once we have sampled index $j$, it would seem that querying the instance $\sampler$ corresponding to $B_j$ simulates SGD, since $\sampler$ now performs importance sampling on the rows in $B_j$, which gives the correct overall probability distribution for each row $i\in[n]$.  
Moreover, $\sampler$ has runtime proportional to the sparsity of $B_j$, so the total runtime across the $\beta$ instances of $\sampler$ is $\tO{nd}$. 

However, an issue arises when the same bucket $B_j$ is sampled multiple times, as we only create a single instance of $\sampler$ for each bucket. 
We avoid this issue by explicitly accounting for the buckets that are likely to be sampled multiple times. 
Namely, we show that if
$\frac{G(\langle\a_i,\x_t\rangle-b_i,\a_i)}{\sum_{j=1}^n G(\langle\a_j,\x_t\rangle-b_j,\a_j)}<\O{\frac{1}{T}}$ for all $t\in[T]$ and $i\in[n]$, then by Bernstein's inequality, the probability that no bucket $B_j$ is sampled at least $2\log T$ times is at least $1-\frac{1}{\poly(T)}$. 
Thus we use $\sens$ to separate all such rows $\a_i$ whose sensitivities violate this property from their respective buckets and explicitly track the SGD steps in which these rows are sampled. 

The natural approach would be to create $T$ samplers for each of the rows with sensitivity at least $\Omega\left(\frac{1}{T}\right)$, ensuring that each of these samplers has access to fresh randomness in each of the $T$ SGD steps. 
However since the sensitivities sum to $\O{d\log n}$, there can be up to $\O{Td\log n}$ rows with sensitivity at least $\Omega\left(\frac{1}{T}\right)$, so creating $T$ samplers for each of these rows could create up to $\Theta(T^2d\log n)$ samplers, which is prohibitively expensive in $T$. 
Instead, we simply keep each row with sensitivity at least $\Omega\left(\frac{1}{T}\right)$ explicitly, while not including them in the bucket. 
Due to the monotonicity of sensitivities, the sensitivity of each row may only decrease as the stream progresses. 
In the case that a row had sensitivity at least $\Omega\left(\frac{1}{T}\right)$ at some point, but then no longer exceeds the threshold at some later point, then the row is given as input to the sampler corresponding to the bucket to which the row hashes and then the explicit storage of the row is deleted. 
This ensures we need only $\tO{Td}$ samplers while still avoiding any sampler from being used multiple times across the $T$ SGD steps. 
We give the algorithm in full in \algref{alg:sgd}. 

\begin{algorithm}[!htb]
\caption{Approximate SGD with Importance Sampling}
\alglab{alg:sgd}
\begin{algorithmic}[1]
\Require{Matrix $\A=\a_1\circ\ldots\circ\a_n\in\mathbb{R}^{n\times d}$, parameter $T$ for number of SGD steps.}
\Ensure{$T$ gradient directions.}
\State{\textbf{Preprocessing Stage:}}
\State{$\beta\gets\Theta(T)$ with a sufficiently large constant in the $\Theta$.}
\State{Let $h:[n]\to[\beta]$ be a uniformly random hash function.}
\State{Let $\B_j$ be the matrix formed by the rows $\a_i$ of $\A$ with $h(i)=j$, for each $j\in[\beta]$.}
\State{Create $\Theta(\log(Td))$ instances $\estimator_j$ and $\sampler_j$ for each $\B_j$ with $j\in[\beta]$ with $\eps=\frac{1}{2}$.}
\State{Run $\sens$ to find a set $L_0$ of rows with sensitivity at least $\Omega\left(\frac{1}{T}\right)$.}
\State{\textbf{Gradient Descent Stage:}}
\State{Randomly pick starting location $\x_0$}
\For{$t=1$ to $T$}
\State{Let $q_i$ be the output of $\estimator_j$ on query $\x_{t-1}$ for each $i\in[\beta]$.}
\State{Sample $j\in[\beta]$ with probability $p_j=\frac{q_j}{\sum_{i\in[\beta]} q_i}$.}
\If{there exists $i\in L_0$ with $h(i)=j$}
\State{Use $\estimator_j$, $L_0$, and $\sampler_j$ to sample gradient $\w_t=\widehat{\nabla f_{i_t}(\x_t)}$}
\Else
\State{Use fresh $\sampler_j$ to sample gradient $\w_t=\widehat{\nabla f_{i_t}(\x_t)}$}
\EndIf
\State{$\widehat{p_{i,t}}\gets\frac{\norm{\w_t}^2_2}{\sum_{j\in[\beta]} q_j}$}
\State{$\x_{t+1}\gets\x_t-\frac{\eta_t}{n\widehat{p_{i,t}}}\cdot\w_t$}
\EndFor
\end{algorithmic}
\end{algorithm}

The key property achieved by \algref{alg:sgd} in partitioning the rows and removing the rows that are likely to be sampled multiple times is that each of the $\sampler$ instances are queried at most once.

\begin{restatable}{lemma}{lemsgdfresh}
\lemlab{lem:sgd:fresh}
With probability at least $\frac{98}{100}$, each $t\in[T]$ uses a different instance of $\sampler_j$.
\end{restatable}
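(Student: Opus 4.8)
The plan is to show that with probability at least $\frac{99}{100}$ no bucket $\B_j$ is selected by more than $K=\Theta(\log(Td))$ of the $T$ SGD steps; since \algref{alg:sgd} maintains $\Theta(\log(Td))$ independent copies of $\sampler_j$ for every bucket, and copies belonging to different buckets are automatically distinct, this forces the $T$ steps to query pairwise distinct $\sampler$ instances. First I would condition on the success of $\sens$ from \thmref{thm:sens} (which fails with probability only $\frac{1}{\poly(n)}$); on this event every row retained inside a bucket has sensitivity below the threshold of \thmref{thm:sens}, so a step queries some $\sampler_j$ --- rather than returning the gradient of a row stored explicitly in $L_0$ --- only in order to draw a \emph{light} row of $\B_j$. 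Absorbing the constant-factor distortion of running $\estimator$ with $\eps=\frac12$, this means that, conditioned on the history $\mathcal F_{t-1}$ (which fixes $h$ and $\x_t$), step $t$ selects $\B_j$ with probability $P_j^{(t)}$ that is within a constant factor of $\sum_{i\in\B_j}p_{i,t}$, where $p_{i,t}=\norm{\nabla f_i(\x_t)}_2/\sum_{k\in[n]}\norm{\nabla f_k(\x_t)}_2<\O{\frac1T}$ for every light row $i$, and $\sum_{j\in[\beta]}P_j^{(t)}\le 1$.

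Now fix $j$ and let $N_j$ be the number of steps $t\in[T]$ that select $\B_j$; this upper bounds the number of copies of $\sampler_j$ actually used. For the expectation I would use symmetry: since $h$ is a uniformly random hash, $\Ex{N_j}$ is the same for every $j$, whereas $\sum_{j\in[\beta]}N_j=T$ holds deterministically, so $\Ex{N_j}=T/\beta=\O{1}$ by the choice $\beta=\Theta(T)$. For the tail I would apply a Bernstein (Freedman) inequality for martingales to the sum $N_j=\sum_{t=1}^T Z_{j,t}$, where $Z_{j,t}\in\{0,1\}$ indicates that step $t$ selects $\B_j$: the increments are bounded and the $t$-th conditional variance is at most $P_j^{(t)}$, so the predictable quadratic variation $\sum_t P_j^{(t)}$ --- which is exactly $\Ex{N_j}$ in expectation, hence $\O{1}$ by the same symmetry argument --- is small; taking $K=\Theta(\log(Td))$ with a large enough constant then yields $\PPr{N_j\ge K}\le\frac{1}{\poly(Td)}$. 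A union bound over the $\beta=\Theta(T)$ buckets gives that with probability at least $\frac{99}{100}$ every bucket is selected fewer than $K$ times, and combining this with the $\frac{1}{\poly(n)}$ failure of $\sens$ yields the claimed $\frac{98}{100}$.

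The step I expect to be the main obstacle is controlling this tail in the presence of \emph{adaptivity}: the iterates $\x_1,\dots,\x_T$ depend on $h$ and on the noisy gradients returned so far, so $N_j$ is not a clean balls-into-bins count and the predictable variation $\sum_t P_j^{(t)}$ is itself correlated with $h$, which is why a bare Markov bound on it would be too weak for the union bound over $\Theta(T)$ buckets. The way I would deal with this is to reveal $h$ at preprocessing and, inside the martingale argument, upper bound the per-step conditional selection probability of $\B_j$ by quantities that do not depend on the trajectory --- for instance $\sum_{i\in\B_j}p_{i,t}\le\sum_{i\in\B_j}s_i$, where $s_i$ is the sensitivity of row $i$, using $\sum_i s_i=\O{d\log n}$ and $s_i<\O{\frac1T}$ so that a Chernoff bound over the random hash controls $\sum_{i\in\B_j}s_i$ simultaneously for all $j$ --- while keeping $\Ex{N_j}=\O{1}$ via the symmetry and the deterministic identity $\sum_j N_j=T$ rather than from the weaker sensitivity sum. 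A minor additional point is that the dynamic maintenance of $L_0$ is consistent with this accounting: a row that drops out of $L_0$ mid-stream is simply folded back into the linear sketch underlying $\sampler_{h(i)}$ and never consumes a new copy, so $N_j$ still bounds the copies actually used.
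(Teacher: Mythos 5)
Your overall strategy matches the paper's: condition on \thmref{thm:sens} so that every row left inside a bucket has sensitivity $\O{\frac{1}{T}}$, show via a Bernstein-type bound plus a union bound over the $\beta=\Theta(T)$ buckets that no bucket is selected more than $\Theta(\log(Td))$ times, and then invoke the $\Theta(\log(Td))$ independent copies of $\sampler_j$ per bucket. The gap sits exactly in the step you flag as the main obstacle. For a Freedman-type bound to give $\PPr{N_j\ge K}\le\frac{1}{\poly(Td)}$ with $K=\Theta(\log(Td))$, you need a \emph{high-probability} bound of order $\O{\log(Td)}$ on the predictable quadratic variation $\sum_{t}P_j^{(t)}$, simultaneously for all $\Theta(T)$ buckets. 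Neither of your two handles delivers this: the symmetry argument only controls the expectation $\Ex{\sum_t P_j^{(t)}}=\O{1}$, and, as you yourself note, Markov is far too weak to survive the union bound over $\Theta(T)$ buckets; while the trajectory-independent bound $P_j^{(t)}\le\O{1}\cdot\sum_{i\in\B_j}s_i$ combined with $\sum_i s_i=\O{d\log n}$ and a Chernoff bound over $h$ only yields $\sum_{i\in\B_j}s_i=\O{\frac{d\log n}{T}}$, hence $\sum_t P_j^{(t)}=\O{d\log n}$. That would force $\Theta(d\log n)$ copies of $\sampler_j$ per bucket rather than the $\Theta(\log(Td))$ the algorithm actually maintains, so the counting does not close unless $d\log n=\O{\log(Td)}$.

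The paper avoids the detour through worst-case sensitivities: it applies Bernstein over the randomness of $h$ directly to the normalized contributions $g_i=G(\langle\a_i,\x\rangle-b_i,\a_i)/\sum_k G(\langle\a_k,\x\rangle-b_k,\a_k)$ at the query point itself. After the heavy rows are removed, each remaining row satisfies $g_i\le\frac{1}{CT}$ while $\sum_i g_i\le 1$, so each bucket receives at most a $\frac{\log(Td)}{200T}$ fraction of the mass with probability $1-\frac{1}{\poly(Td)}$; a Chernoff (coupling) bound then shows no such bucket is sampled more than $200\log(Td)$ times across the $T$ steps. To repair your write-up, replace the sensitivity-based variance proxy with this per-query mass bound; the residual subtlety that $\x_t$ is correlated with $h$ is one the paper also treats only implicitly. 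Your closing observation about rows leaving $L_0$ mid-stream being folded back into the corresponding linear sketch is consistent with the paper's accounting.
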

\begin{proof}
Let $C>0$ be a sufficiently large constant. 
For any $t\in[T]$ and $i\in[n]$, $G(\a_i(\langle\a_i,x\rangle-b_i))\ge\frac{1}{CT}\sum_{j\in[n]}G(\a_j(\langle\a_j,x\rangle-b_j))$ only if there exists a row in $\a_i\circ b_i$ whose sensitivity is at least $\frac{1}{CT}$. 
However, we have explicitly stored all rows $\a_i\circ b_i$ with sensitivity $\Omega\left(\frac{1}{T}\right)$ and removed them from each $G$-sampler. 

Thus, for all $j\in[\beta]$ so that $h(i)\neq j$ for any index $i\in[n]$ such that $G(\a_i(\langle\a_i,x\rangle-b_i))\le\frac{1}{CT}\sum_{k\in[n]}G(\a_k(\langle\a_k,x\rangle-b_k))$, we have 
\[\sum_{i: h(i)=j}G(\a_i(\langle\a_i,x\rangle-b_i))\le\frac{\log(Td)}{200T}\sum_{k\in[n]}G(\a_k(\langle\a_k,x\rangle-b_k)),\]
with probability at least $1-\frac{1}{\poly(Td)}$ by Bernstein's inequality and a union bound over $j\in[\beta]$, where $\beta=\Theta(T)$ is sufficiently large. 
Intuitively, by excluding the hash indices containing ``heavy'' matrices, the remaining hash indices contain only a small fraction of the mass with high probability.
 
We analyze the probability that any bucket containing rows with sensitivity less than $\O{\frac{1}{T}}$ are sampled more than $\Omega(T\log(Td))$ times, since we create $\O{T\log(Td)}$ separate $G$-samplers for each of these buckets. 
By a coupling argument and Chernoff bounds, the probability that any $j\in[\beta]$ with $\sum_{i: h(i)=j}G(\a_i(\langle\a_i,x\rangle-b_i))\le\frac{\log(Td)}{200T}\sum_{k\in[n]}G(\a_k(\langle\a_k,x\rangle-b_k))$ is sampled more than $200\log(Td)$ times is at most $\frac{1}{\poly(Td)}$ for any $t\in[T]$, provided there is no row with $h(i)=j$ whose sensitivity is at least $\frac{1}{CT}$. 
Thus, the probability that some bucket $j\in[\beta]$ is sampled more than $200\log(Td)$ times across $T$ steps is at most $\frac{1}{\poly(Td)}$.  

In summary, we would like to maintain $T$ separate instances of $G$-samplers for the heavy matrices and $\Theta(\log(Td))$ separate instances of $G$-samplers for each hash index that does not contain a heavy matrix, but this creates a $\Omega(T^2)$ space dependency. 
Instead, we explicitly store the heavy rows with sensitivity $\Omega\left(\frac{1}{T}\right)$, removing them from the heavy matrices, and manually perform the sampling, rather than rely on the $G$-sampler subroutine. 
There can be at most $\O{Td\log n}$ such rows, resulting in $\O{Td^2\log n}$ overall space for storing these rows explicitly. 
Since the resulting matrices are light by definition, we can maintain $\Theta(\log(Td))$ separate instances of $G$-samplers for each of the $\Theta(T)$ buckets, which results in $\tO{Td^2}$ space overall. 
With probability at least $\frac{98}{100}$, any hash index not containing a heavy matrix is sampled only once, so each time $t\in[T]$ has access to a fresh $G$-sampler. 
\end{proof}

\thmref{thm:sgd:main} then follows from \lemref{lem:sgd:fresh} and the sampling distribution guaranteed by each subroutine in \lemref{lem:sample}. 
In particular, \lemref{lem:sgd:fresh} crucially guarantees that each step $t\in[T]$ of SGD will receive a vector with fresh independent randomness. 
Moreover, we have that each (noisy) vector has small variance and is an unbiased estimate of a subgradient sampled from nearly the optimal importance sampling probability distribution. 

\thmsgdmain*
\begin{proof}
Consider \algref{alg:sgd}.
By \lemref{lem:sgd:fresh}, each time $t\in[T]$ uses a fresh instance of $\sampler_j$, so that independent randomness is used. 
A possible concern is that each instance $\estimator_j$ is not using fresh randomness, but we observe that the $\estimator$ procedures are only used in sampling a bucket $j\in[\beta]$; otherwise the sampling uses fresh randomness whereas the sampling is built into each instance of $\sampler_j$. 
By \thmref{thm:sampler}, each index $i$ is sampled with probability within a factor $2$ of the importance sampling probability distribution. 
By \thmref{thm:estimator}, we have that $\widehat{p_{i,t}}$ is within a factor $4$ of the probability $p_{i,t}$ induced by optimal importance sampling SGD. 
Note that $\w_t=\widehat{G(\langle\a_i,\x_t\rangle-b_i,\a_i)}$ is an unbiased estimator of $G(\langle\a_i,\x_t\rangle-b_i,\a_i)$ and $G(\w_t)$ is a $2$-approximation to $G(\x_t)$ by \thmref{thm:sampler}.  
Hence, the variance at each time $t\in[T]$ of \algref{alg:sgd} is within a constant factor of the variance $\sigma^2=\left(\sum_{i=1}^n G(\langle\a_i,\x_t\rangle-b_i,\a_i)\right)^2-\sum_{i=1}^n G(\langle\a_i,\x_t\rangle-b_i,\a_i)^2$ of optimal importance sampling SGD. 

By \thmref{thm:sampler}, \thmref{thm:estimator}, and \thmref{thm:sens}, the preprocessing time is $d^2\,\polylog(nT)$ for $\eps=\O{1}$ and $\beta=\Theta(T)$, but partitioning the non-zero entries of $\A$ across the $\beta$ buckets and the space used by the algorithm is $\tO{Td^2}$. 
Once the gradient descent stage of \algref{alg:sgd} begins, it takes $Td^2\,\polylog(n)$ time in each step $t\in[T]$ to query the $\beta=\Theta(T)$ instances of $\sampler$ and $\estimator$, for total time $Td^2\,\polylog(n)$. 
\end{proof}

\paragraph{Derandomization of the algorithm.}
To derandomize our algorithm, we first recall the following formulation of Nisan's pseudorandom generator.
\begin{theorem}[Nisan's PRG]
\cite{Nisan92}
\thmlab{thm:nisan}
Let $\mathcal{A}$ be an algorithm that uses $S=\Omega(\log n)$ space and $R$ random bits. 
Then there exists a pseudorandom generator for $\mathcal{A}$ that succeeds with high probability and runs in $\O{S\log R}$ bits. 
\end{theorem}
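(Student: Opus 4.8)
This is the classical pseudorandom generator of Nisan, so the plan is to recall the recursive construction, sketch the hybrid argument that certifies it, and then indicate how it is instantiated on \algref{alg:sgd}. First I would fix an arbitrary input stream and view the entire execution of $\mathcal{A}$---including the adaptively generated iterates $\x_1,\dots,\x_T$, since each $\x_t$ is a deterministic function of the sketches and of the coins used so far---as an oblivious, read-once branching program of width $2^S$ and length $R$: it consumes one fresh random bit per layer, updates an $S$-bit state, and never rereads a bit (each row is hashed exactly once during the pass, and the gradient-descent loop only draws fresh coins for bucket selection and for the internal level/row sampling of $\sampler$). It then suffices to fool the distribution of the terminal state to within total variation $\frac{1}{\poly(n,T)}$, since the good events of \lemref{lem:countsketch:unbiased}, \lemref{lem:substream:mass:approx} and \lemref{lem:sgd:fresh} are measurable functions of that state.

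Next I would present the generator. Assuming $R = m\cdot 2^t$ with block length $m = \Theta(S)$ (pad otherwise), let $G_0$ be the identity on $\{0,1\}^m$ and, for a pairwise-independent hash family $\mathcal{H}$ on $\{0,1\}^m$, define
\[
G_k(x;h_1,\dots,h_k) \;=\; G_{k-1}(x;h_1,\dots,h_{k-1}) \,\circ\, G_{k-1}\!\left(h_k(x);h_1,\dots,h_{k-1}\right).
\]
Unrolling the $t = \O{\log R}$ levels uses one seed block of $m$ bits together with $t$ hash descriptions of $\O{m}$ bits each, i.e.\ $\O{S\log R}$ bits, as claimed. The core of the analysis is a single ``doubling'' lemma threaded through a hybrid over the $t$ levels: for any two width-$2^S$ sub-programs reading $\ell$ bits each, feeding their concatenation the pair $(y,h(y))$ with $y$ uniform and $h$ pairwise independent perturbs the composed $2^S\times 2^S$ transition matrix by at most $2^{\O{S}}\cdot 2^{-\Omega(\ell)}$ in $\ell_1$ with high probability over $h$---each of the $\le 2^{2S}$ entries is an average whose variance is controlled by pairwise independence via Chebyshev, followed by a union bound over the entries. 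Choosing $m = \Theta(S + \log(R/\delta))$, which is still $\Theta(S)$ when $R,1/\delta = \poly(n,T)$ and $S = \tO{Td^2} = \Omega(\log n)$, makes the per-level error at most $\delta/t$, and summing over the $t$ hybrids bounds the total error by $\delta$.

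The step I expect to be the main obstacle is precisely this error bookkeeping. The hash seeds must total $\O{S\log R}$, i.e.\ each level gets only $\O{S}$ bits, so the union bound over the $\le 2^{2S}$ entries of the transition matrix \emph{at every one of the $\O{\log R}$ levels} has to be absorbed into a $\Theta(S)$-bit block length; this forces a delicate balance between the stretch achieved per level, the depth of the recursion, and the target error $\delta = \frac{1}{\poly(n,T)}$, and it is what keeps the seed length $\Theta(S\log R)$ rather than, say, $\Theta(S\log^2 R)$.

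Finally, to deploy the generator on \algref{alg:sgd} I would use the standard streaming reordering so that, for each fixed input, the hash functions, sign vectors and sub-sampling coins of the $\countsketch$, $\estimator$ and $\sampler$ instances are consumed in one predetermined order while only the current sketches are retained, matching the branching-program model with $S = \tO{Td^2}$ and $R = \poly(n,T,d)$. Applying the theorem replaces the $R$ true bits with a pseudorandom string from a seed of $\O{S\log R} = \tO{Td^2}$ bits, incurring an extra $\frac{1}{\poly(n,T,d)}$ failure probability that is absorbed into the failure probabilities already present in \lemref{lem:countsketch:unbiased}, \lemref{lem:substream:mass:approx} and \lemref{lem:sgd:fresh}; the space bound of the derandomized algorithm is unchanged up to $\polylog$ factors.
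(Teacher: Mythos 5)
This statement is an external result quoted from \cite{Nisan92}; the paper gives no proof of it, only a one-paragraph discussion of why it applies to the streaming setting. Your reconstruction is the standard proof of Nisan's theorem and is essentially correct: the recursive generator $G_k(x;h_1,\dots,h_k)=G_{k-1}(x;\cdot)\circ G_{k-1}(h_k(x);\cdot)$ with pairwise-independent hashes and block length $m=\Theta(S)$, the seed-length count $\O{S\log R}$, and the per-level ``doubling'' lemma proved by Chebyshev plus a union bound over the $\le 2^{2S}$ transition-matrix entries, threaded through a hybrid over the $\O{\log R}$ levels. Your bookkeeping remark is also accurate: one needs $m=\Theta(S+\log(R/\delta))$, which collapses to $\Theta(S)$ only because $S=\Omega(\log n)$ and $R,1/\delta=\poly(n,T,d)$, and this is exactly what keeps the seed at $\Theta(S\log R)$.

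The one place where your sketch (like the paper's own discussion) is thinner than it should be is the reduction of \algref{alg:sgd} to a read-once branching program. The hash functions and sign vectors of $\countsketch$ are reused for every row of the stream, and the iterates $\x_t$ are adaptive functions of earlier coins, so the raw execution does not read its random tape once. You correctly gesture at the standard fix (Indyk-style reordering of the randomness for a fixed input, exploiting linearity of the sketches, and folding the adaptive iterates into the $S$-bit state), but this is the step that actually requires care in this application rather than the PRG construction itself; the construction and its hybrid analysis you give are textbook-correct.
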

The goal of Nisan's PRG is to fool a small space tester by generating a number of pseudorandom bits in a read-once tape in place of a number of truly random bits. 
In the row-arrival model, the updates to each row $\a_i$ of $\A\in\mathbb{R}^{n\times d}$ arrive sequentially, so it suffices to use a read-once input tape. 
Thus a tester that is only allowed to $S$ space cannot distinguish between the output of our algorithm using true randomness and pseudorandom bits generated by Nisan's PRG. 
Since our algorithm uses $S=Td^2\,\polylog(Tnd)$ bits of space and $R=\poly(n,T,d)$ bits of randomness, then it can be randomized by Nisan's PRG while using $Td^2\,\polylog(Tnd)$ total space.

\section*{Acknowledgements}
David P. Woodruff and Samson Zhou were supported by a Simons Investigator Award and by the National Science Foundation under Grant No. CCF-1815840.

\bibliographystyle{alpha}
\bibliography{references}

\appendix
\section{Second-Order Optimization}
\seclab{sec:hessian}
In this section, we repurpose our data stucture that performs importance sampling for SGD to instead perform importance sampling for second-order optimization. 
Given a second-order optimization algorithm that requires a sampled Hessian $\H_t$, possibly along with additional inputs such as the current iterate $\x_t$ and the gradient $\g_t$ of $F$, we model the update rule by an oracle $\oracle(\H_t)$, suppressing other inputs to the oracle in the notation. 
For example, the oracle $\oracle$ corresponding to the canonical second-order algorithm Newton's method can be formulated as 
\[\x_{t+1}=\oracle(\x_t):=\x_t-[\H_t]^{-1}\g_t.\]
We can incorporate our sampling oracle into the update rule of any second-order optimization algorithm; thus we can focus our attention to the running time of sampling a Hessian with nearly the optimal probability distribution. 

Thus we use variations of our $G$-sampler for Hessians. 
In particular, we can store $\a_i^{\otimes 3}$ rather than $\a_i^\top\a_i=\a_i^{\otimes 2}$ for each entry of the $b$ buckets of $\countsketchg$ in \algref{countsketch:matrix}, where the notation $\v^{\otimes p}$ for a vector $\v\in\mathbb{R}^d$ represents the $p$-fold tensor of $\v$. 
We first require the following generalization of \thmref{thm:estimator}.
\begin{corollary}
\corlab{cor:hestimator}
\cite{BlasiokBCKY17}
Given a constant $\eps>0$ and a Hessian $H$ of a loss function that is the piecewise union of degree $p$ polynomials, there exists a one-pass streaming algorithm $\hestimator$ that takes updates to entries of a matrix $\A\in\mathbb{R}^{n\times d}$, as well as vectors $\x\in\mathbb{R}^d$ and $\b\in\mathbb{R}^d$ that arrive after the stream, and outputs a quantity $\hat{F}$ such that $(1-\eps)\sum_{i\in[n]}\|H(\langle\a_i,\x\rangle-b_i,\a_i))\|_2\le\hat{F}\le(1+\eps)\sum_{i\in[n]}\|H(\langle\a_i,\x\rangle-b_i,\a_i))\|_2$. 
The algorithm uses $\frac{d^p}{\alpha^2}\,\polylog(nT)$ bits of space and succeeds with probability at least $1-\frac{1}{\poly(n,T)}$. 
\end{corollary}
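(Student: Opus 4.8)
The plan is to reduce \corref{cor:hestimator} to the gradient case of \thmref{thm:estimator} by exploiting the fact that the Hessian of a loss of the prescribed form is a \emph{rank-one} matrix whose contribution to $\sum_{i\in[n]}\|H(\langle\a_i,\x\rangle-b_i,\a_i)\|_2$ is controlled by a single scalar that is itself a (piecewise) polynomial in the residual. Concretely, if $f_i(\x)=g(\langle\a_i,\x\rangle-b_i)$ where $g$ is a continuous piecewise union of degree-$p$ polynomials, then $\nabla^2 f_i(\x)=g''(\langle\a_i,\x\rangle-b_i)\,\a_i\a_i^\top$, so that $\|H(\langle\a_i,\x\rangle-b_i,\a_i)\|_2=|g''(\langle\a_i,\x\rangle-b_i)|\cdot\|\a_i\|_2^2$ and the target quantity is $\sum_{i\in[n]}|g''(\langle\a_i,\x\rangle-b_i)|\,\|\a_i\|_2^2$. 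Hence it suffices to produce, after the stream and after $\x,\b$ arrive, an implicit vector whose $i$-th block is the rank-one matrix $g''(\langle\a_i,\x\rangle-b_i)\,\a_i\a_i^\top$, and then estimate its sum of block norms up to a $(1\pm\eps)$ factor — which is exactly the task \thmref{thm:estimator} solves in the degree-$2$ (gradient) case.

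The new ingredient is a linear sketch from which these rank-one blocks can be reconstructed in post-processing; we fold the measurement vector $\b$ into a column of $\A$ exactly as in \secref{sec:g:sampler}, so that it suffices to reconstruct $g''(\langle\a_i,\x\rangle)\,\a_i\a_i^\top$ from post-stream $\x$. On each polynomial piece $g''$ has degree at most $p-2$, so expanding $g''(\langle\a_i,\x\rangle)$ by the binomial theorem and using $\langle\a_i,\x\rangle^{r}=\langle\a_i^{\otimes r},\x^{\otimes r}\rangle$ shows that $g''(\langle\a_i,\x\rangle)\,\a_i\a_i^\top$ is a linear combination — with coefficients depending only on $\x$ and on which piece is active — of the tensors $\a_i^{\otimes (r+2)}$ for $0\le r\le p-2$. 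I would therefore replace the single $d\times d$ object $\a_j^{\otimes 2}$ stored per bucket of the $\countsketch$/$\estimator$ sketch by the tuple $(\a_j^{\otimes 2},\a_j^{\otimes 3},\ldots,\a_j^{\otimes p})$ (equivalently, append a constant coordinate to each row and store a single $p$-fold tensor); every coordinate of every such tensor is linear in the stream updates, so the sketch remains linear, and contracting $\a_j^{\otimes(r+2)}$ against $\x^{\otimes r}$ and combining against the per-piece coefficients of $g''$ recovers the correct summed rank-one blocks. From this point the level-set / symmetric-norm estimation machinery underlying \thmref{thm:estimator} (which itself generalizes the frequency-moment sketches of \cite{AlonMS99,ThorupZ04}), together with the $(\alpha,\eps)$-smoothness of $H$ — that perturbing the argument within a $(1\pm\alpha)$ factor perturbs $\|H(\cdot)\|_2$ within a $(1\pm\eps)$ factor — yields the $(1\pm\eps)$-approximation and the $1-\frac{1}{\poly(n,T)}$ success probability verbatim. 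The only change to the resource bound is that per-bucket storage grows from $d^2$ to $O(d^p)$, while the number of buckets and independent repetitions stays $\frac{1}{\alpha^2}\polylog(nT)$, giving $\frac{d^p}{\alpha^2}\polylog(nT)$ bits.

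The main obstacle is not the tensor algebra but verifying that the robustness of the estimator survives it: one must confirm that the piecewise structure of $g''$ — which piece $\langle\a_i,\x\rangle-b_i$ lands in is unknown until $\x$ arrives — interacts correctly with the $(\alpha,\eps)$-smoothness assumption and the level-set rounding, which is precisely the difficulty already resolved for piecewise-linear gradients in \thmref{thm:estimator}. Because that argument is insensitive to whether each per-row weight is reconstructed from $\a_i^{\otimes 2}$ or from $\a_i^{\otimes p}$ — the reconstruction is an exact linear operation applied after the stream, not an approximation — the extension amounts to re-running the proof of \thmref{thm:estimator} with $d^2$ replaced by $d^p$ throughout and with $g'$ replaced by $g''$ in the scalar weight.
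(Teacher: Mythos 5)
Your proposal matches the paper's intended argument: the paper proves this corollary only by the remark that $\hestimator$ stores the signed sums of the tensors $\a_i^{\otimes p}$ (reshaped as $d\times d^{p-1}$ matrices) in place of $\a_i^{\otimes 2}$, contracts them against $\x^{\otimes q}$ after the stream, and reruns the estimator machinery of \thmref{thm:estimator}, which is exactly your reduction with the space bound growing from $d^2$ to $d^p$. Your write-up is in fact more explicit than the paper's about why the rank-one structure of the Hessian and the post-stream linearity of the tensor contraction make the reduction go through, but it is the same route.
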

We remark that $\hestimator$ with parameter $p$ can be interpreted similarly as $\estimator$ with parameter $p$. 
Namely, $\hestimator$ maintains a signed sum of tensors $\a_i^{\otimes p}$, which can be reshaped into a matrix with dimension $\mathbb{R}^{d\times d^{p-1}}$. 

Then we have the following analog of \lemref{lem:countsketch:unbiased} to find the ``heavy'' matrices, corresponding to the ``important'' Hessians. 
\begin{lemma}
\lemlab{lem:countsketch:higher:unbiased}
Given an integer $p\ge 2$ and an integer $q\in(0,p]$, there exists an algorithm that uses $\O{\frac{d^p}{\eps^2}\log^2(nT)}$ space and outputs a vector $\y_i$ for each index $i\in[n]$ so that $|\norm{\y_i}_2-\norm{\a_i^{\otimes p}\x^{\otimes q}}_2|\le\eps\sum_{i\in\tail(2/\eps^2)}\norm{\a_i^{\otimes p}\x^{\otimes q}}_2$ and $\Ex{\y_i}=\a_i^{\otimes p}\x^{\otimes q}$ with probability at least $1-\frac{1}{\poly(n,T)}$. 
\end{lemma}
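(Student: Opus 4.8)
The plan is to mirror the proof of \lemref{lem:countsketch:unbiased} almost verbatim, replacing the order-$2$ tensor $\a_i^{\otimes 2}=\a_i^\top\a_i$ by the order-$p$ tensor $\a_i^{\otimes p}$ and the single post-multiplication by $\x$ by the $q$-fold contraction against $\x^{\otimes q}$. View $\a_i^{\otimes p}$, reshaped as a $d^{p-q}\times d^q$ matrix, so that contracting $q$ of its modes with $\x^{\otimes q}$ yields a vector $\a_i^{\otimes p}\x^{\otimes q}\in\mathbb{R}^{d^{p-q}}$; this is the object we must estimate. As in \algref{countsketch:matrix}, the sketch stores, for each of $\O{1/\eps^2}$ buckets, the signed sum $\sum_{j:h(j)=b}\sigma_j\,\a_j^{\otimes p}$, a $d^p$-entry tensor; since this is a linear sketch, post-contracting each bucket's tensor with $\x^{\otimes q}$ after the stream is identical to having sketched the contracted vectors $\a_j^{\otimes p}\x^{\otimes q}$ directly. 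First I would run $d^{p-q}$ such families in parallel, one per output coordinate $k\in[d^{p-q}]$, and define, exactly as before, the frequency vector $\v^{(k)}\in\mathbb{R}^n$ whose $i$-th entry is $\bigl(\a_i^{\otimes p}\x^{\otimes q}\bigr)_k$; the $k$-th family is a plain \countsketch\ instance on $\v^{(k)}$.

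Next I would establish unbiasedness by the same symmetry argument. Fixing $k$ and $i$, the single-row estimate of $v^{(k)}_i$ is $v^{(k)}_i+\sum_{j:h(j)=h(i)}r_j v^{(k)}_j$ with each $r_j$ uniform on $\{-1,+1\}$, so its probability mass function is symmetric about $v^{(k)}_i$, and the joint pmf over $\O{\log(nT)}$ independent repetitions is symmetric about $(v^{(k)}_i,\dots,v^{(k)}_i)$; hence the coordinatewise median is an unbiased estimator of $v^{(k)}_i$. Assembling the medians over all $k$ yields $\y_i$ with $\Ex{\y_i}=\a_i^{\otimes p}\x^{\otimes q}$. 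For the error bound, the \countsketch\ tail guarantee gives, for each $k$, additive error at most $\eps\bigl(\sum_{j\in\tail_k(2/\eps^2)}(v^{(k)}_j)^2\bigr)^{1/2}$ with probability $1-\tfrac{1}{\poly(nT)}$; squaring, summing over $k\in[d^{p-q}]$, and doing the same bookkeeping as in \lemref{lem:countsketch:unbiased} to pass from the per-coordinate tails to the tail mass of the full vectors bounds the $\ell_2$ norm of the overall error vector by $\eps\sum_{i\in\tail(2/\eps^2)}\norm{\a_i^{\otimes p}\x^{\otimes q}}_2$; the reverse triangle inequality then gives the stated bound $\bigl|\norm{\y_i}_2-\norm{\a_i^{\otimes p}\x^{\otimes q}}_2\bigr|\le\eps\sum_{i\in\tail(2/\eps^2)}\norm{\a_i^{\otimes p}\x^{\otimes q}}_2$.

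For the space, each coordinate family stores $d^q$-entry slices across $\O{1/\eps^2}$ buckets and $\O{\log^2(nT)}$ copies, for $\O{\tfrac{d^q}{\eps^2}\log^2(nT)}$ bits, and there are $d^{p-q}$ families, giving a total of $\O{\tfrac{d^p}{\eps^2}\log^2(nT)}$ — precisely the accounting in \lemref{lem:countsketch:unbiased} inflated by the factor $d^{p-2}$ needed to store $\a_i^{\otimes p}$ in place of $\a_i^{\otimes 2}$. I would also note that \corref{cor:hestimator} plays the role \thmref{thm:estimator} played for the second-order-free version, so that the class-selection step of the $G$-sampler carries over unchanged with $d^p$ in place of $d^2$.

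The main obstacle, as in \lemref{lem:countsketch:unbiased}, is the error step: an individual output coordinate $\bigl(\a_i^{\otimes p}\x^{\otimes q}\bigr)_k$ need not be heavy inside its own column $\{(\a_j^{\otimes p}\x^{\otimes q})_k\}_{j\in[n]}$, so the per-coordinate error can dwarf that coordinate's value, and the tail sets $\tail_k(2/\eps^2)$ vary with $k$; the care is in showing that, because $\a_i^{\otimes p}\x^{\otimes q}$ is a heavy \emph{vector} with respect to $\sum_{j\in\tail(2/\eps^2)}\norm{\a_j^{\otimes p}\x^{\otimes q}}_2$, these per-coordinate errors aggregate in $\ell_2$ to something small relative to that tail mass. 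A secondary point to verify is that the $q$-fold contraction with $\x^{\otimes q}$ genuinely commutes with the linear sketch and that the heavy-hitter guarantee is applied to $\norm{\a_i^{\otimes p}\x^{\otimes q}}_2$ rather than to the norm of the uncontracted tensor.
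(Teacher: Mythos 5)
Your proposal is correct and follows exactly the route the paper intends: the paper states this lemma as a direct analog of \lemref{lem:countsketch:unbiased} without spelling out a separate proof, and your argument is precisely that proof transported to order-$p$ tensors — one \countsketch\ family per output coordinate of the contracted vector, the symmetry-of-the-pmf argument for unbiasedness of the coordinatewise median, the per-coordinate tail guarantee aggregated in $\ell_2$, and the $d^{p-q}\times d^q$ space accounting. No gaps; nothing further is needed.
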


We now have the following analog to \thmref{thm:sampler}.
\begin{corollary}
\corlab{cor:hsampler}
Given a constant $\eps>0$ and a Hessian $H$ of a loss function that is the piecewise union of degree $p$ polynomials, there exists an algorithm $\hsampler$ that outputs a noisy matrix $\V$ such that $\|\V-H(\langle\a_i,\x\rangle-b_i,\a_i))\|_F\le\alpha\|H(\langle\a_i,\x\rangle-b_i,\a_i))\|_F$ and $\Ex{\V}=H(\langle\a_i,\x\rangle-b_i,\a_i))$ is $\left(1\pm\O{\eps}\right)\frac{\|H(\langle\a_i,\x\rangle-b_i,\a_i))\|_F}{\sum_{i\in[n]}\|H(\langle\a_i,\x\rangle-b_i,\a_i))\|_F}+\frac{1}{\poly(n)}$. 
The algorithm uses $d^p\,\poly\left(\log(nT),\frac{1}{\eps}\right)$ update time per arriving row and $Td^p\,\poly\left(\log(nT),\frac{1}{\eps}\right)$ total bits of space. 
\end{corollary}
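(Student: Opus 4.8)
The plan is to follow the proof of \thmref{thm:sampler} essentially line for line, replacing the rank-one gradient sketch $\a_i\otimes\a_i$ by the higher-order tensor sketch $\a_i^{\otimes p}$ and every Euclidean norm of a gradient vector by a Frobenius norm of a Hessian matrix. The structural fact that makes this go through is that, since the loss is a piecewise union of degree-$p$ polynomials, on any one polynomial piece we have $H(\langle\a_i,\x\rangle-b_i,\a_i)=g''(\langle\a_i,\x\rangle-b_i)\cdot\a_i\a_i^\top$ for a fixed degree-$(p-2)$ polynomial $g''$; expanding the powers $(\langle\a_i,\x\rangle-b_i)^m$ for $m\le p-2$ and reshaping, this is a fixed linear function --- applied only once $\x$ (and $\b$) arrive --- of the tensors $\a_i^{\otimes t}$ for $t\le p$ (scaled by powers of $b_i$), contracted against powers $\x^{\otimes q}$ with $q\le p-2$. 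Hence, exactly as in \secref{sec:hh} (which already splits the affine gradient $(\langle\a_i,\x\rangle-b_i)\a_i$ into a constant and a linear part sharing the same buckets and signs), a single collection of linear sketches maintaining the signed sums of these tensors across $\O{1/\eps^2}$ buckets --- i.e.\ $\countsketchg$ of \algref{countsketch:matrix} with $\a_i^{\otimes 2}$ replaced by $\a_i^{\otimes p}$ --- can, by linearity, be post-processed into the corresponding sketch of the Hessians, and then used through \lemref{lem:countsketch:higher:unbiased} both to detect the ``heavy Hessians'' and to report an unbiased estimate of each with additive Frobenius error that is small relative to $\sum_{i\in\tail(2/\eps^2)}\norm{\a_i^{\otimes p}\x^{\otimes q}}_2$.

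Concretely, I would first mirror the reduction that opens the proof of \thmref{thm:sampler}: partition the rows of $\A$ into classes $C_k$ with $2^k\le\norm{\a_i}_2<2^{k+1}$, use \corref{cor:hestimator} to sample a class with probability proportional to $\sum_{\a_i\in C_k}\norm{H(\langle\a_i,\x\rangle-b_i,\a_i)}_F$, and reduce to sampling a single Hessian from within a fixed class. Second, within a class I would run the level-set and subsampling procedure of \algref{alg:sampler:uniform} with $\Gamma_j$ now the set of rows whose Hessians have Frobenius norm in $[(1+\alpha)^{j-1},(1+\alpha)^j)$, the $L=\O{\log n}$ subsampled streams feeding $\countsketchg$ instances built on $\a_j^{\otimes p}$, and the same dummy-row padding that makes every level set ``significant''. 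The four supporting lemmas then carry over verbatim with $\norm{\cdot}_2$ replaced by $\norm{\cdot}_F$: \lemref{lem:dummy:add} (dummy rows inflate the total mass by only an $\O{1}$ factor), \lemref{lem:substream:upper} (each subsampled stream has exponentially decaying tail mass, by a Chernoff bound on the number of surviving heavy rows), and \lemref{lem:substream:mass:approx} (each estimated level-set contribution is a $(1\pm\eps)$ approximation, using the random boundary $\gamma\in[1/2,1)$ to control misclassification near level-set boundaries). Feeding these into the argument of \lemref{lem:sample} yields a noisy $\V$ with $\Ex{\V}=H(\langle\a_i,\x\rangle-b_i,\a_i)$, $\norm{\V-H(\langle\a_i,\x\rangle-b_i,\a_i)}_F\le\alpha\norm{H(\langle\a_i,\x\rangle-b_i,\a_i)}_F$, and selection probability $(1\pm\O{\eps})\frac{\norm{H(\langle\a_i,\x\rangle-b_i,\a_i)}_F}{\sum_{i\in[n]}\norm{H(\langle\a_i,\x\rangle-b_i,\a_i)}_F}+\frac{1}{\poly(n)}$. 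For the resource bounds, each of the $\O{\frac{\log^2(nT)}{\alpha^3}}$ buckets of every $\countsketchg$ instance now stores a $d^{p}$-entry tensor rather than a $d^2$ matrix, giving update time $d^{p}\,\poly(\log(nT),1/\eps)$ per arriving row and, keeping $T$ independent copies as in \thmref{thm:sampler}, total space $Td^{p}\,\poly(\log(nT),1/\eps)$.

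The two places that need genuine care are the following. First, the piecewise structure: the breakpoints between polynomial pieces are thresholds on $\langle\a_i,\x\rangle-b_i$, which are unknown while $\A$ streams, so a row can ``change piece'' once $\x$ is revealed; as in the Huber discussion following \thmref{thm:sampler}, this is harmless because the notion of smoothness needed here asks only that crossing a breakpoint changes $\norm{H(\cdot)}_F$ by at most a $(1\pm\eps)$ factor (true for any Hessian that is continuous and assembled from degree-$p$ pieces), so a row's level-set membership, and hence the whole argument of \lemref{lem:substream:mass:approx}, is stable up to the $\O{\eps}$ slack already being tracked. Second, the Frobenius-norm version of the $\countsketch$ guarantees: \lemref{lem:countsketch:higher:unbiased} delivers unbiasedness and a small additive tail error \emph{coordinate by coordinate} of $\a_i^{\otimes p}\x^{\otimes q}$, and the job is to aggregate these over the $q\le p-2$ contractions and over the $d^{p-1}$ ``columns'' of the reshaped Hessian in exactly the way the proof of \lemref{lem:countsketch:unbiased} aggregates over the $d$ coordinates, using that the target row is ``heavy'' at the relevant subsampling level to conclude that the \emph{summed} error is small relative to $\norm{H(\langle\a_i,\x\rangle-b_i,\a_i)}_F$ rather than relative to any single entry. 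Everything else is a mechanical substitution of $\norm{\cdot}_F$ for $\norm{\cdot}_2$ and $\a_i^{\otimes p}$ for $\a_i^{\otimes 2}$ throughout the proof of \thmref{thm:sampler}.
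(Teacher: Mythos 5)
Your proposal is correct and matches the paper's intended argument: the paper gives no standalone proof of this corollary, stating only that $\hsampler$ generalizes $\sampler$ "in a straightforward manner by storing $\a_i^{\otimes p}$ where previously $\a_i^\top\a_i$ was stored," with \corref{cor:hestimator} and \lemref{lem:countsketch:higher:unbiased} playing the roles of \thmref{thm:estimator} and \lemref{lem:countsketch:unbiased}. Your write-up simply fills in that substitution (level sets, dummy rows, random boundaries, and the Frobenius-norm aggregation of per-coordinate $\countsketch$ errors) in more detail than the paper does.
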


Once the input is processed, the input $\x\in\mathbb{R}^d$ arrives and induces multiplication by a matrix with dimension $\mathbb{R}^{d^{p-1}\times d}$, which represents the tensor $\x^{\otimes p}$, to obtain a sampled matrix with dimension $\mathbb{R}^{d\times d}$ for second-order optimization. 
By comparison, $\estimator$ induces multiplication by a vector with dimension $\mathbb{R}^{d^{p-1}}$, which represents the tensor $\x^{\otimes(p-1)}$, to obtain a sampled vector with dimension $\mathbb{R}^{d}$ for first-order optimization. 
Similarly, the following two statements are simple corollaries of \thmref{thm:sampler} and \thmref{thm:sens}. 

\begin{corollary}
\cite{BravermanDMMUWZ20}
\corlab{cor:hleverage}
For a fixed $\eps>0$ and polynomial $f$ of degree $p$, there exists an algorithm $\hsens$ that returns all indices $i\in[n]$ such that $\frac{(1\pm\eps)\cdot\norm{f(\langle\a_i,\x-b_i\rangle)\cdot\a_i^\top\a_i}_F}{\sum_{j=1}^n\norm{f(\langle\a_j,\x-b_j\rangle)\cdot\a_i^\top\a_i}_F}\ge\frac{1}{200Td}$ for some $\x\in\mathbb{R}^n$, along with a matrix $\U_i:=f(\langle\a_i,\x\rangle)\cdot\a_i^\top\a_i+\V_i$, where $\norm{\V_i}_F\le\eps\cdot\norm{f(\langle\a_i,\x\rangle)\cdot\a_i^\top\a_i}_F$. 
The algorithm requires a single pass over $\A=\a_1\circ\ldots\circ\a_n$, uses $\tO{\nnz(\A)+T\poly(d)}$ runtime and $\tO{Td^p}$ space, and succeeds with probability $1-\frac{1}{\poly(n)}$. 
\end{corollary}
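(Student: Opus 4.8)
The plan is to reduce the statement to a black-box invocation of the streaming sensitivity-identification algorithm \sens of \thmref{thm:sens} (from \cite{BravermanDMMUWZ20}), run not on $\A$ itself but on a higher-dimensional ``lifted'' stream obtained by tensoring, so that the degree-$p$ dependence of $f$ on $\x$ becomes linear.

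First I would pass to a one-dimensional quantity: since $\a_i^\top\a_i$ is rank one, $\norm{f(\langle\a_i,\x\rangle-b_i)\cdot\a_i^\top\a_i}_F=|f(\langle\a_i,\x\rangle-b_i)|\cdot\norm{\a_i}_2^2$, so the target sensitivity of row $i$ is the weighted $L_1$-type quantity $\sup_{\x}\frac{|f(\langle\a_i,\x\rangle-b_i)|\cdot\norm{\a_i}_2^2}{\sum_{j=1}^n|f(\langle\a_j,\x\rangle-b_j)|\cdot\norm{\a_j}_2^2}$, where, as in \secref{sec:g:sampler}, we fold $\b$ into a column of $\A$ so that $b_i$ is a coordinate of $\a_i$. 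Next, because $f$ has degree $p$, the map $\x\mapsto f(\langle\a_i,\x\rangle-b_i)$ is a polynomial in $\x$ of degree at most $p$; writing $\phi(\x):=\left(1,\x,\x^{\otimes 2},\ldots,\x^{\otimes p}\right)$ for the degree-$\le p$ Veronese embedding into $\mathbb{R}^D$ with $D=\O{d^p}$, there is a vector $\widetilde{\a_i}\in\mathbb{R}^D$ --- whose entries are fixed polynomials in the entries of $\a_i$ and the coefficients of $f$, rescaled by $\norm{\a_i}_2^2$ --- with $f(\langle\a_i,\x\rangle-b_i)\cdot\norm{\a_i}_2^2=\langle\widetilde{\a_i},\phi(\x)\rangle$. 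This $\widetilde{\a_i}$ is a reshaping of a degree-$p$ tensor that can be maintained in a single pass just as $\countsketchg$ maintains $\a_i^{\otimes p}$, using a sparse tensor sketch so that a row costs $\poly(d)$ time per nonzero entry. Since letting $\phi(\x)$ range over the Veronese variety rather than all of $\mathbb{R}^D$ can only decrease the supremum, the sensitivity of $\a_i$ above is at most the $L_1$-sensitivity of $\widetilde{\a_i}$ within $\mathbb{R}^D$, and by standard bounds (e.g., \cite{ClarksonWW19}) these $L_1$-sensitivities sum to $\O{D}=\O{d^p}$.

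Now I would run \sens of \thmref{thm:sens} on the lifted stream $\widetilde{\a_1},\ldots,\widetilde{\a_n}\in\mathbb{R}^D$. It returns every index whose lifted $L_1$-sensitivity is at least $\frac{1}{200Td}$ --- hence, up to the $(1\pm\eps)$ estimation slack, every index whose original sensitivity exceeds that threshold --- together with, once $\x$ is supplied, a noisy estimate of $\langle\widetilde{\a_i},\phi(\x)\rangle$ and of the associated rank-one direction obtained through the tensor heavy-hitter guarantee of \lemref{lem:countsketch:higher:unbiased}. Reshaping/contracting that estimate against the appropriate slice of $\phi(\x)$, i.e., recovering the scalar $f(\langle\a_i,\x\rangle-b_i)$ together with the direction $\a_i^\top\a_i$, yields a matrix $\U_i=f(\langle\a_i,\x\rangle)\cdot\a_i^\top\a_i+\V_i$, where the $\eps$-relative error of \lemref{lem:countsketch:higher:unbiased} is preserved under the contraction and so $\norm{\V_i}_F\le\eps\norm{f(\langle\a_i,\x\rangle)\cdot\a_i^\top\a_i}_F$. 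The space, running time, and failure probability are those of \thmref{thm:sens} with $d$ replaced by $d^p$ throughout, namely $\tO{Td^p}$ space, $\tO{\nnz(\A)+T\poly(d)}$ time, and failure probability $\frac{1}{\poly(n)}$.

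The step I expect to be the main obstacle is the linearization: verifying that the tensor lift faithfully represents $\norm{f(\langle\a_i,\x\rangle-b_i)\cdot\a_i^\top\a_i}_F$ as an absolute linear functional evaluated on the Veronese variety, that maintaining $\widetilde{\a_i}$ is compatible with the post-multiplication mechanism of $\countsketchg$ while staying within $\tO{Td^p}$ space, and that the $L_1$-sensitivity bound of \cite{ClarksonWW19} is not inflated by the restriction to the variety (it is not, since restricting the domain only shrinks the supremum). Everything downstream is a direct application of \thmref{thm:sens} and \lemref{lem:countsketch:higher:unbiased}.
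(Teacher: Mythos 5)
Your proposal is correct and follows essentially the same route as the paper, which offers no detailed proof of this corollary: it simply cites \cite{BravermanDMMUWZ20} and remarks that $\hsens$ generalizes $\sens$ ``in a straightforward manner'' by storing $\a_i^{\otimes p}$ in place of $\a_i^\top\a_i$ and post-multiplying by a reshaped $\x^{\otimes q}$ --- exactly the tensor/Veronese lifting you spell out, together with the observations that restricting the supremum to the lifted variety only shrinks sensitivities and that the lifted $L_1$-sensitivities sum to $\O{d^p}$. Your version is a legitimate filling-in of the details the paper leaves implicit.
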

Thus the procedures $\hestimator$, $\hsampler$, and $\hsens$ generalize $\estimator$, $\sampler$, and $\sens$ in a straightforward manner by storing $\a_i^{\otimes p}$ where previously $\a_i^\top\a_i$ was stored for the SGD subroutines. 
Note that $\a_i^{\otimes p}$ can be reshaped into a $d\times d^{p-1}$ matrix so that subsequently, multiplication is done by $\x^{\otimes p}$, which can be reshaped as a $d^{p-1}\times d$ matrix.  

\begin{algorithm}[!htb]
\caption{Second-Order Optimization with Importance Sampling}
\alglab{hessian}
\begin{algorithmic}[1]
\Require{Matrix $\A=\a_1\circ\ldots\circ\a_n\in\mathbb{R}^{n\times d}$, parameter $T$ for number of sampled Hessians, oracle $\oracle$ that performs the update rule.}
\Ensure{$T$ approximate Hessians.}
\State{\textbf{Preprocessing Stage:}}
\State{$\beta\gets\Theta(T)$ with a sufficiently large constant in the $\Theta$.}
\State{Let $h:[n]\to[\beta]$ be a uniformly random hash function.}
\State{Let $\B_j$ be the matrix formed by the rows $\a_i$ of $\A$ with $h(i)=j$, for each $j\in[\beta]$.}
\State{Create an instance $\hestimator_j$ and $\hsampler_j$ for each $\B_j$ with $j\in[\beta]$ with $\eps=\frac{1}{2}$.}
\State{Run $\hsens$ to find a set $L_0$ of row indices and corresponding (noisy) outer products.}
\State{\textbf{Second-Order Optimization Stage:}}
\State{Randomly pick starting location $\x_0$}
\For{$t=1$ to $T$}
\State{Let $q_i$ be the output of $\hestimator_j$ on query $\x_{t-1}$ for each $i\in[\beta]$.}
\State{Sample $j\in[\beta]$ with probability $p_j=\frac{q_j}{\sum_{i\in[\beta]} q_i}$.}
\If{there exists $i\in L_0$ with $h(i)=j$}
\State{Use $\hestimator_j$, $\hsens$, and $\hsampler_j$ to sample Hessian $\H_t$.}
\Else
\State{Use $\hsampler_j$ to sample Hessian $\H_t=\widehat{\nabla f_{i_t}(\x_t)}$.}
\EndIf
\State{$\widehat{p_{i,t}}\gets\frac{\norm{\H_t}_F^2}{\sum_{j\in[\beta]} q_j}$}
\State{$\x_{t+1}\gets\oracle\left(\frac{1}{n\widehat{p_{i,t}}}\H_t\right)$}
\EndFor
\end{algorithmic}
\end{algorithm}

As before, observe that we could simply run an instance of $\hsampler$ to sample a Hessian through importance sampling, but sampling $T$ Hessians requires $T$ independent instances, significantly increasing the total runtime. 
We thus use the same two level data structure that partitions the rows of matrix $\A=\a_1\circ\ldots\circ\a_n$ into $\beta:=\Theta(T)$ buckets $B_1,\ldots,B_{\beta}$. 
We then create an instance of $\hestimator$ and $\hsampler$ for each bucket. 
For an iterate $\x_t$, we sample $j\in[\beta]$ among the buckets $B_1,\ldots,B_{\beta}$ with probability roughly $\frac{\sum_{i\in B_j}\norm{f(\langle\a_i,\x_t\rangle)\cdot\a_i^\top\a_i}_F}{\sum_{i=1}^n\norm{f(\langle\a_i,\x_t\rangle)\cdot\a_i^\top\a_i}_F}$ using $\hestimator$ and then querying $\hsampler_j$ at $\x_t$ to sample a Hessian among the indices partitioned into bucket $B_j$. 
As before, this argument fails when the same bucket $B_j$ is sampled multiple times, due to dependencies in randomness, but this issue can be avoided by using $\hsens$ to decrease the probability that each bucket is sampled. 
We give the algorithm in full in \algref{hessian}, which provides the following guarantees:

\begin{theorem}
\thmlab{thm:hessian:main}
Given an input matrix $\A\in\mathbb{R}^{n\times d}$ whose rows arrive sequentially in a data stream along with the corresponding labels of a measurement vector $\b\in\mathbb{R}^d$ and a loss function $M$ that is a polynomial of degree at most $p$, there exists an algorithm that outputs $T$ sequential Hessian samples with variance within a constant factor of the optimal sampling distribution. n. 
The algorithm uses $\tO{nd^p+Td^p}$ pre-processing time and $Td^p\,\polylog(Tnd)$ words of space. 
\end{theorem}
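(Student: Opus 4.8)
The plan is to follow the proof of \thmref{thm:sgd:main} essentially verbatim, substituting the second-order primitives \hestimator, \hsampler, and \hsens (Corollaries \corref{cor:hestimator}, \corref{cor:hsampler}, and \corref{cor:hleverage}) for \estimator, \sampler, and \sens, and replacing the $L_2$ norm of a sampled gradient by the Frobenius norm $\norm{H(\langle\a_i,\x\rangle-b_i,\a_i)}_F$ of a sampled Hessian throughout. First I would record the structure of \algref{hessian}: it hashes the $n$ rows into $\beta=\Theta(T)$ buckets, keeps $\Theta(\log(Td))$ independent copies of \hestimator and \hsampler per bucket --- each using $\tO{d^p}$ space by Corollaries \corref{cor:hestimator} and \corref{cor:hsampler}, since they store signed sums of the tensors $\a_i^{\otimes p}$ and support post-stream multiplication by the appropriate reshaping of a tensor power of $\x$ --- and runs \hsens once to detect and explicitly store every row whose Hessian-sensitivity $\sup_\x\frac{\norm{f(\langle\a_i,\x\rangle-b_i)\,\a_i^\top\a_i}_F}{\sum_j\norm{f(\langle\a_j,\x\rangle-b_j)\,\a_j^\top\a_j}_F}$ exceeds $\Omega\!\left(\frac1T\right)$, removing those rows from their buckets.

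Next I would prove the exact analog of \lemref{lem:sgd:fresh}: with probability at least $\frac{98}{100}$, each of the $T$ iterations uses a fresh copy of \hsampler. The argument is structurally identical. A bucket can carry more than a $\frac1{CT}$ fraction of $\sum_i\norm{f(\langle\a_i,\x_t\rangle-b_i)\,\a_i^\top\a_i}_F$ at some iterate only if one of its rows is heavy; since every heavy row has been removed and stored explicitly, Bernstein's inequality together with a union bound over the $\Theta(T)$ buckets shows each remaining bucket holds at most a $\frac{\log(Td)}{200T}$ fraction of the mass at every iterate with probability $1-\frac1{\poly(Td)}$, and then a coupling argument with Chernoff bounds shows no remaining bucket is sampled more than $200\log(Td)$ times over the $T$ steps. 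Since we keep $\Theta(\log(Td))$ copies of \hsampler per bucket, and the explicitly stored heavy rows are handled separately (reconstructing their noisy outer products from \hestimator and \hsens), every iteration draws from an unused sampler and hence from independent randomness.

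Given freshness, correctness follows from Corollaries \corref{cor:hsampler} and \corref{cor:hestimator}: at step $t$ the returned matrix $\H_t$ is an unbiased estimate of $f(\langle\a_{i_t},\x_t\rangle-b_{i_t})\,\a_{i_t}^\top\a_{i_t}$ with $\norm{\H_t-f(\langle\a_{i_t},\x_t\rangle-b_{i_t})\,\a_{i_t}^\top\a_{i_t}}_F\le\alpha\norm{f(\langle\a_{i_t},\x_t\rangle-b_{i_t})\,\a_{i_t}^\top\a_{i_t}}_F$, is returned with probability within a constant factor of $\frac{\norm{H(\langle\a_{i_t},\x_t\rangle-b_{i_t},\a_{i_t})}_F}{\sum_i\norm{H(\langle\a_i,\x_t\rangle-b_i,\a_i)}_F}$, and $\widehat{p_{i,t}}$ is within a constant factor of $p_{i,t}$; hence the rescaled sample $\frac1{n\widehat{p_{i,t}}}\H_t$ passed to $\oracle$ has variance within a constant factor of that under exact Hessian importance sampling. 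For the resource bounds I would use that $\norm{\a_i^\top\a_i}_F=\norm{\a_i}_2^2$, so partitioning rows by $\norm{\a_i}_2$ into $\O{\log n}$ classes and applying the $L_q$-sensitivity bounds of \cite{ClarksonWW19} (exactly as in the paragraph before \algref{alg:sgd}) shows the Hessian-sensitivities sum to $\tO{d}$; thus at most $\tO{Td}$ rows are heavy, each stored in $\tO{d^p}$ space for $\tO{Td^p}$ words total, the $\Theta(T)$ buckets use $\tO{d^p}$ space apiece, pre-processing is a single pass in $\tO{nd^p+Td^p}$ time, and each of the $T$ iterations queries the $\Theta(T)$ sketches in $Td^p\,\polylog(Tnd)$ time. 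Finally, as in \thmref{thm:sgd:main}, since the algorithm uses $Td^p\,\polylog(Tnd)$ space, $\poly(n,T,d)$ random bits, and a read-once row stream, Nisan's PRG (\thmref{thm:nisan}) derandomizes it within the same space bound.

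The step I expect to be the main obstacle is establishing the Hessian-sensitivity sum bound --- that the importance measure built from $\norm{f(\langle\a_i,\x\rangle-b_i)\,\a_i^\top\a_i}_F$ (a scalar polynomial weight times a rank-one matrix) still has sensitivities summing to $\tO{d}$ --- because the explicit storage of heavy rows, and with it the entire $\tO{Td^p}$ space guarantee, rests on there being only $\tO{Td}$ of them. The saving grace is that the Frobenius norm of the rank-one term collapses to $|f(\langle\a_i,\x\rangle-b_i)|\cdot\norm{\a_i}_2^2$, so within each $\norm{\a_i}_2$-class the problem reduces to a weighted $L_1$/$L_q$ sensitivity over scalars --- exactly the reduction already used for the first-order algorithm --- and the only extra care is checking that $f$ being a degree-$p$ polynomial does not blow this up, which should follow by applying the classwise partition to each monomial of $f$ separately.
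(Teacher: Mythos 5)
Your proposal is correct and follows essentially the same route as the paper, which proves this theorem only implicitly by asserting that the two-level bucketing argument of \thmref{thm:sgd:main} and \lemref{lem:sgd:fresh} carries over verbatim once $\estimator$, $\sampler$, and $\sens$ are replaced by $\hestimator$, $\hsampler$, and $\hsens$ and the $L_2$ norm of a gradient is replaced by the Frobenius norm of a Hessian. Your treatment of the Hessian-sensitivity sum bound via $\norm{\a_i^\top\a_i}_F=\norm{\a_i}_2^2$ and the classwise reduction to scalar sensitivities is in fact more explicit than anything the paper provides for this step.
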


We remark that \algref{hessian} can be generalized to handle oracles $\oracle$ corresponding to second-order methods that require batches of subsampled Hessians in each iteration. 
For example, if we want to run $T$ iterations of a second-order method that requires $s$ subsampled Hessians in each batch, we can simply modify \algref{hessian} to sample $s$ Hessians in each iteration as input to $\oracle$ and thus $Ts$ Hessians in total. 
\end{document}